\newcolumntype{?}{!{\vrule width 1.5pt}}
\colorlet{linkcolor}{blue!70!black}
\definecolor{pearDark}{HTML}{2980B9}
\newtheorem{theorem}{Theorem}
\crefname{theorem}{theorem}{Theorems}
\Crefname{Theorem}{Theorem}{Theorems}
\newtheorem*{lemma_nonumber*}{Lemma}
\newaliascnt{lemma}{theorem}
\newtheorem{lemma}[lemma]{Lemma}
\crefname{lemma}{lemma}{lemmas}
\Crefname{Lemma}{Lemma}{Lemmas}
\newaliascnt{corollary}{theorem}
\newtheorem{corollary}[corollary]{Corollary}
\crefname{corollary}{corollary}{corollaries}
\Crefname{Corollary}{Corollary}{Corollaries}
\newaliascnt{proposition}{theorem}
\newtheorem{proposition}[proposition]{Proposition}
\crefname{proposition}{proposition}{propositions}
\Crefname{Proposition}{Proposition}{Propositions}
\newaliascnt{definition}{theorem}
\crefname{definition}{definition}{definitions}
\Crefname{Definition}{Definition}{Definitions}
\newaliascnt{remark}{theorem}
\crefname{remark}{remark}{remarks}
\Crefname{Remark}{Remark}{Remarks}
\crefname{figure}{figure}{figures}
\Crefname{Figure}{Figure}{Figures}
\newtheorem{assumption}{\textbf{A}\hspace{-3pt}} \setcounter{assumption}{-1}
\Crefname{assumption}{\textbf{A}\hspace{-3pt}}{\textbf{A}\hspace{-3pt}}
\crefname{assumption}{\textbf{A}}{\textbf{A}}
\crefname{assumption}{assumption}{assumptions}
\Crefname{Assumption}{Assumption}{Assumptions}
\newtheorem{assumptionF}{\textbf{F}\hspace{-3pt}}
\Crefname{assumptionB}{\textbf{B}\hspace{-3pt}}{\textbf{B}\hspace{-3pt}}
\crefname{assumptionB}{\textbf{B}}{\textbf{B}}
\Crefname{assumptionC}{\textbf{C}\hspace{-3pt}}{\textbf{C}\hspace{-3pt}}
\crefname{assumptionC}{\textbf{C}}{\textbf{C}}
\Crefname{assumptionH}{\textbf{H}\hspace{-3pt}}{\textbf{H}\hspace{-3pt}}
\crefname{assumptionH}{\textbf{H}}{\textbf{H}}
\Crefname{assumptionT}{\textbf{T}\hspace{-3pt}}{\textbf{T}\hspace{-3pt}}
\crefname{assumptionT}{\textbf{T}}{\textbf{T}}
\Crefname{assumptionT}{\textbf{T}\hspace{-3pt}}{\textbf{T}\hspace{-3pt}}
\crefname{assumptionT}{\textbf{T}}{\textbf{T}}
\Crefname{assumptionL}{\textbf{L}\hspace{-3pt}}{\textbf{L}\hspace{-3pt}}
\crefname{assumptionL}{\textbf{L}}{\textbf{L}}
\Crefname{assumptionQ}{\textbf{Q}\hspace{-3pt}}{\textbf{Q}\hspace{-3pt}}
\crefname{assumptionQ}{\textbf{Q}}{\textbf{Q}}
\Crefname{assumptionAR}{\textbf{AR}\hspace{-3pt}}{\textbf{AR}\hspace{-3pt}}
\crefname{assumptionAR}{\textbf{AR}}{\textbf{AR}}
\newcommand*{\centerfloat}{%
  \parindent \z@
  \leftskip \z@ \@plus 1fil \@minus \textwidth
  \rightskip\leftskip
  \parfillskip \z@skip}
\newcommand{\tcmb}[1]{\textcolor{blue}{#1}}
\newcommand{\tcmv}[1]{\textcolor{green!60!black}{#1}}
\newcommand{\tcmr}[1]{\textcolor{red}{#1}}
\def\msn{\mathsf{N}}
\def\mcbb{\mathcal{B}}  %
\def\rset{\mathbb{R}}
\def\rmd{\mathrm{d}}
\def\rmC{\mathrm{C}}
\newcommand{\abs}[1]{\left\vert #1 \right\vert}
\newcommandx{\psr}[3][3=]{\left\langle#1,#2 \right\rangle_{#3}}
\newcommandx{\normr}[2][2=]{ \left\Vert#1 \right\Vert_{#2}}
\newcommandx{\psrLigne}[3][3=]{\langle#1,#2 \rangle_{#3}}
\newcommandx{\normrLigne}[2][2=]{ \Vert#1 \Vert_{#2}}
\newcommandx{\norm}[2][1=]{\ifthenelse{\equal{#1}{}}{\left\Vert #2 \right\Vert}{\left\Vert #2 \right\Vert^{#1}}}
\newcommand\probaMarkovTilde[2][2=]
\newcommand{\PE}{\mathbb{E}} %
\def\ie{\textit{i.e.}}
\def\eqsp{\;}
\renewcommand{\iint}[2]{\{ #1,\ldots,#2\}}
\newcommand\sequence[3][2=,3=]
\newcommand\sequenceD[3][2=,3=]
\newcommand\sequenceDouble[4][3=,4=]
\def\iid{i.i.d.}
\def\Idd{\mathrm{I}_d}
\def\path{\operatorname{path}}
\newcommand{\beq}{\begin{equation}}
\newcommand{\eeq}{\end{equation}}
\newcommand{\floor}[1]{\left\lfloor #1 \right\rfloor}
\def\densityGaussian{\msn}
\def\SLIPS{\texttt{SLIPS}}
\def\tY{\tilde{Y}}
\def\tU{\tilde{U}}
\def\rmq{\mathrm{q}}
\newcommand{\lsnr}{\log\operatorname{SNR}}
\newcommand{\snr}{\operatorname{SNR}}
\newcommand{\locr}{\operatorname{LocR}}
\icmltitlerunning{
Stochastic Localization via Iterative Posterior Sampling
}
\begin{document}

\twocolumn[
	\icmltitle{Stochastic Localization via Iterative Posterior Sampling
	}

	\icmlsetsymbol{equal}{*}

	\begin{icmlauthorlist}
		\icmlauthor{Louis Grenioux}{equal,x}
		\icmlauthor{Maxence Noble}{equal,x}
		\icmlauthor{Marylou Gabrié}{x}
		\icmlauthor{Alain Durmus}{x}
	\end{icmlauthorlist}

	\icmlaffiliation{x}{CMAP, CNRS, École polytechnique, Institut Polytechnique de Paris, 91120 Palaiseau, France}

	\icmlcorrespondingauthor{Louis Grenioux}{louis.grenioux@polytechnique.edu}
	\icmlcorrespondingauthor{Maxence Noble}{maxence.noble-bourillot@polytechnique.edu}

	\icmlkeywords{Stochastic localization, Generative modeling, Diffusion Models, Monte Carlo Markov Chain methods, Multi-modal Sampling}

	\vskip 0.3in
]

\printAffiliationsAndNotice{\icmlEqualContribution} %

\begin{abstract}
	Building upon score-based learning,
	new interest in stochastic localization techniques has recently emerged.
	In these models, one seeks to noise a sample from the data distribution through a stochastic process, called observation process, and progressively learns a denoiser associated to this dynamics.
	Apart from specific applications, the use of
	stochastic localization
	for the problem of
	sampling from an unnormalized target density has not been explored extensively.
	This work contributes to fill this gap.
	We consider a general stochastic localization framework and introduce an explicit class of observation processes, associated with flexible denoising
	schedules. We provide a complete methodology,
	\emph{Stochastic Localization via Iterative Posterior Sampling} ($\SLIPS$), to obtain approximate samples of this dynamics, and as a by-product,
	samples from the target distribution. %
	Our scheme is based on a Markov chain Monte Carlo estimation of the denoiser and comes with detailed practical guidelines.
	We illustrate the benefits and applicability of $\SLIPS$ on several benchmarks of multi-modal distributions, including Gaussian mixtures in increasing dimensions, Bayesian logistic regression and a high-dimensional field system from statistical-mechanics.

\end{abstract}

\section{Introduction}

We consider in this paper the problem of sampling from a probability density known up to a normalization constant. This problem finds its origin in various tasks, ranging from Bayesian statistics \cite{kroeseHandbookMonteCarlo2011} to statistical mechanics \cite{Krauth2006}, including now generative modeling \cite{turner2019metropolis, greniouxBalancedTrainingEnergyBased2023}.

Markov chain Monte Carlo (MCMC) samplers are among the most common approaches for this task with a wide span of applicability. In addition, under appropriate conditions on the target, theoretical guarantees can be derived \cite{dalalyan2017theoretical,durmus2017nonasymptotic}.
However, for complex distributions, simple MCMC algorithms have some limitations.
As a workaround, it has been suggested to solve the problem progressively, targeting intermediate smoothed distributions. The resulting algorithms include Replica Exchange \cite{swendsenReplicaMonteCarlo1986}, Annealed Importance Sampling \cite{neal2001annealed} and Sequential Monte Carlo \cite{del2006sequential}. Nevertheless, these methods can still largely struggle in high-dimensional settings. To address this issue, approximate inference methods such as Variational Inference (VI) \cite{Wainwright2008} have emerged, in connection with deep generative models such as Variational Auto-Encoders \cite{Rezende2014,KW2014} or Normalizing Flows \cite{rezende2015variational}.

In contrast to ``density-based sampling", which is the setting considered in the present work, generative modeling assumes the availability of training samples and aims to generate similar realizations. Henceforth, we refer to generative modeling as ``data-based sampling" as opposed to ``density-based sampling".
Diffusion-based generative models \cite{sohl2015deep,ho2020denoising,song2020score} now constitute the state-of-the-art of data-based sampling. In these models, noise is progressively added to the training samples via a forward stochastic process. The derivatives of the logarithm of the forward marginal densities, called \emph{scores}, are learned via score matching techniques \cite{hyvarinen2005estimation, vincent2011connection}, and new samples are obtained by simulating the backward process using the learned scores. This approach to generative modeling has been found to scale well with dimension in a large variety of applications \cite{rombach2022high, chen2020wavegrad, nichol2021glide} and comes with theoretical guarantees \cite{chen2022sampling}.

Extending diffusion-based model techniques to density-based sampling proves challenging as it requires an efficient estimator of the score without samples of the target. Relying on the link made with stochastic optimal control \cite{tzen2019theoretical, de2021diffusion, holdijk2022path, pavon2022local}, several VI methods using deep neural networks have recently been proposed for this task \cite{zhang2021path,berner2022optimal, vargas2023denoising, vargas2023expressiveness,vargas2023transport, richter2023improved}.
On the other hand, \cite{huang2023monte} take a non-parametric approach and proposed a scheme based on a MCMC estimation of the scores and therefore can potentially mitigate the numerical bias intrinsic in using neural networks or more broadly any parametric estimation method.
This is also our approach.

Closely related to denoising diffusion models, Stochastic Localization (SL) techniques have been first employed as a tool to establish results in geometric measure theory \cite{eldan2013thin, eldan2020taming, eldan2022analysis, chen2022localization}, and more recently have been proposed as a density-based sampling approach. \cite{alaoui2022sampling, montanari2023posterior} demonstrate the potential of SL for specific challenging distributions and \cite{ghioSamplingFlowsDiffusion2023} provide a conjecture on the type of distributions which can be efficiently sampled with these techniques.
\cite{montanari2023sampling}
provides a blueprint on using SL for density-based sampling but does not discuss a practical strategy for an arbitrary target distribution.

\paragraph{Contributions.}
Building on these prior works, we bring the following contributions.
\begin{itemize}
	\vspace{-0.1cm}
	\item We investigate a general framework of SL that allows to reflect on the role of the signal-to-noise scheduling when using SL for sampling.
	      \vspace{-0.1cm}
	\item We identify the challenges in its practical implementation and propose a learning-free sampling methodology using MCMC estimation, which requires few tuning. We elucidate our algorithmic design with theoretical and numerical considerations applicable to a certain class of non log-concave distributions.
	      \vspace{-0.1cm}
	\item We provide numerical evidence of the robustness of the proposed approach in large dimension, beyond the class of distributions amenable to theoretical guarantees. Those results show that the proposed algorithm is on par or superior to modern sampling methods in a wide variety of settings.
\end{itemize}
\vspace{-0.1cm}
The code to reproduce our experiments is available at \url{https://github.com/h2o64/slips}.

\paragraph{Notation.} For any distribution $\mu$ with finite first moment, its expectation is denoted by $\mathbf{m}_\mu=\int_{\rset^d}x \rmd \mu(x)$. The density of the Gaussian distribution with mean $\mathbf{m}\in \rset^d$ and covariance $\Sigma \in\rset^{d \times d}$ is denoted by $x \mapsto \densityGaussian(x; \mathbf{m}, \Sigma)$.

\section{Background on SL for sampling} \label{sec:background}

Consider a target distribution $\pi$ defined on $(\rset^d,\mcbb(\rset^d))$, with $\mathcal{B}(\rset^d)$ Borel sets of $\rset^d$ endowed with the Euclidean norm. Using SL to sample from $\pi$ consists in identifying and simulating a stochastic process that converges almost surely to a random variable distributed as $\pi$ \cite{alaoui2022sampling, montanari2023posterior, montanari2023sampling}. A specific example is to consider, given $X\sim \pi$, the stochastic process $(Y_t)_{t \geq 0}$, called the \emph{observation process}, defined by
\begin{equation}\label{eq:def_y_t}
	Y_t = t X + \sigma W_t \eqsp,
\end{equation}
where $(W_t)_{t\geq 0}$ is a standard Brownian motion on $\rset^d$, independent from $X$, and $\sigma>0$\footnote{Note that the authors originally considered $\sigma=1$.}.
The time-rescaled $Y_t/t$ converges almost surely to $X$ as $t\to \infty$.
Moreover,
if $\pi$ admits a finite second order moment,
we can show that the $2$-Wasserstein distance between the distribution of $Y_t/t$ and $\pi$ is bounded by $\sigma \sqrt{d/t}$ \footnote{This bound will be referred to as the \emph{localization rate}.}, see \Cref{subapp:sto-loc-th}. Thus, for $T$ large, $Y_T/T$ is approximately distributed according to $\pi$.

Sampling from $(Y_t)_{t \geq 0}$ using directly \eqref{eq:def_y_t} cannot be done in practice, precisely since it requires to first sample from $\pi$. Nevertheless, this issue can be overcome under the assumption that $\pi$ admits a finite first moment. Indeed, in this case, $(Y_t)_{t\geq0}$ solves the Stochastic Differential Equation (SDE),
\begin{equation}\label{eq:sde_sto_loc_with_denoiser}
	\rmd Y_t = u_t(Y_t) \rmd t + \sigma \rmd B_t \eqsp ,  \eqsp Y_0 = 0 \eqsp,
\end{equation}
where $(B_t)_{t\geq 0}$ is a standard Brownian motion on $\rset^d$ and $u_t(y)=\int_{\rset^d}x q_t(x|y) \rmd x$ for any $y\in \rset^d$, see \citep[Corollary 3.7.]{Brunick2013mimicking}.
The drift function $u_t$ involves the conditional density of $X$ given $Y_t=y \in \rset^d$ defined for $x \in \rset^d$ as
\begin{align}\label{eq:standard_posterior}
	 & q_0(x|y) \propto\pi(x) \eqsp ,                                                                             \\
	 & \textstyle q_t (x|y) \propto \pi(x) \densityGaussian(x; y/t, \sigma^2/t \, \Idd) \eqsp , \quad  t>0 \eqsp.
\end{align}
In the literature, the \emph{random} conditional expectation $u_t(Y_t)=\PE[X|Y_t]$ is often referred to as
the \emph{Bayes estimator} of $X$ given $Y_t$ \cite{robbins1992empirical,saremi2019neural} or the \emph{optimal denoiser} of $Y_t$ \cite{montanari2023sampling, ghioSamplingFlowsDiffusion2023}. Moreover, if $\pi$ has finite second moment, one can show that the SDE \eqref{eq:sde_sto_loc_with_denoiser} admits unique weak solutions \citep[Theorem 7.6. \& Remark 7.2.7.]{liptser1977statistics}, see \Cref{subapp:sto-loc-th}.
As a result, one can simulate $(Y_t)_{t\in [0,T]}$ for any $T>0$ by integrating the SDE \eqref{eq:sde_sto_loc_with_denoiser}, while avoiding to first sample from $\pi$, if the drift function $(u_t)_{t\in[0,T]}$ (or an approximation of it) is known.

Yet, the challenge of this sampling approach lies in the estimation of $(u_t)_{t\in[0,T]}$ without samples from $\pi$ available \textit{a priori}. At time $t=0$, we have $u_0(y)=\mathbf{m}_\pi$. For any $t>0$, $u_t$ is linked to the \emph{score}
of the marginal distribution of $Y_t$, given by $\textstyle p_t(y)=\int_{\rset^d} \densityGaussian(y; tx, \sigma^2 t \, \Idd) \rmd \pi(x)$, following Tweedie's formula given in \Cref{app:preliminaries},
\begin{align}\label{eq:link_score}
	\textstyle u_t(y)=y/t+ \sigma^2 \nabla \log p_t(y) \eqsp.
\end{align}
Therefore, sampling via SL can be reduced to a score estimation task, as noted by \cite{montanari2023sampling}, who also establishes a direct connection between the SL induced by the process \eqref{eq:def_y_t} and variance-preserving diffusions \cite{song2020score}, see \Cref{app:details_sl}.
In data-based sampling, \ie, when samples $\{X^i\}_{i=1}^N$ from $\pi$ are at the disposal of the user, $u_t$ can be estimated using a parameterized model through a least-squares regression \cite{saremi2019neural, montanari2023sampling} following score matching techniques.

A few recent works started to discuss the density-based sampling task using SL or diffusions %
\cite{montanari2023posterior, huang2023monte, vargas2023denoising}. Here, we focus on the SL formalism and propose in \Cref{sec:algos} a sampling algorithm for arbitrary distributions based on a learning-free estimator of the denoiser. Before, we provide in \Cref{sec:gen-sto-loc} a working definition of SL with flexible denoising schedules.

\section{SL with flexible denoising scheduling} \label{sec:gen-sto-loc}
Inspired by recent developments studying optimal noise scheduling
in diffusion models \cite{nichol2021improved, kingma2021variational, karras2022elucidating, kingma2023understanding}, we consider a new class of explicit observation processes associated with flexible denoising schedules.

\subsection{A more general observation process}\label{subsec:gen-framework}

For a possibly finite time horizon $T_{\text{gen}} \in (0, \infty]$, we now consider the
observation process $(Y^\alpha_t)_{t \in [0,T_{\text{gen}})}$ defined by
\begin{align}\label{eq:def_y_t-gen}
	Y_t^\alpha = \alpha(t) X + \sigma W_t \eqsp ,
\end{align}
where $(W_t)_{t\geq 0}$ is a standard Brownian motion on $\rset^d$, independent from $X\sim \pi$, and $\alpha(t)=t^{1/2}g(t)$ with the following assumptions on $g$:
\begin{enumerate}[wide, labelindent=0pt, label=(\alph*)]
	\vspace{-0.3cm}
	\item \label{item:gsl-fun} $g \in \rmC^0([0,T_{\text{gen}}),\rset_+) \cap \rmC^1((0,T_{\text{gen}}),\rset_+)$;
	      \vspace{-0.1cm}
	\item \label{item:gsl-0} $g(t) \sim C t^{\beta/2}$ as $t \to 0$, for some $\beta\geq 1$, $C>0$;
	      \vspace{-0.1cm}
	\item \label{item:gsl-T} $g$ is strictly increasing and $\lim_{t \to T_{\text{gen}}} g(t) = \infty$.
	      \vspace{-0.1cm}
\end{enumerate}
Under these assumptions, the observation process \eqref{eq:def_y_t-gen} verifies $Y^\alpha_0=0$ and $Y^\alpha_t/\alpha(t) \to X$ almost surely as $t\to T_{\text{gen}}$.
Moreover, by denoting $\pi^\alpha_t$ the distribution of $Y^\alpha_t/\alpha(t)$, one can show that for any $t\in (0,T_{\text{gen}})$, the localization rate is now function of $g(t)$:
\begin{align} \label{prop:standard_gen_sto_loc_cv}
	\textstyle W_2(\pi, \pi^\alpha_t) \leq
	\sigma \sqrt{d}/g(t) \eqsp,
\end{align}
where $W_2$ denotes the $2$-Wasserstein distance.
Proofs and refinements in the case where $\pi$  is a Gaussian distribution are postponed to \Cref{subapp:sto-loc-th}.
We recover the process \eqref{eq:def_y_t} discussed in \Cref{sec:background} with $g(t) = t^{1/2}$ and $T_{\text{gen}}=\infty$.

As in the previous section, sampling from the observation process via \eqref{eq:def_y_t-gen} requires to first sample from $\pi$. We again bypass this issue by introducing an equivalent SDE. In \Cref{subapp:sto-loc-th}, we indeed show that $(Y^\alpha_t)_{t \in [0,T_{\text{gen}})}$ solves
\begin{align}\label{eq:SDE-gen}
	\rmd Y^\alpha_t = \dot{\alpha}(t) u^\alpha_t(Y^\alpha_t) \rmd t + \sigma \rmd B_t \eqsp, \eqsp Y^\alpha_0 = 0 \eqsp ,
\end{align}
where $(B_t)_{t\geq 0}$ is a standard Brownian motion on $\rset^d$ and the drift function $u_t^\alpha$ is defined for any $y\in \rset^d$ by
\begin{equation}
	\textstyle u^\alpha_t(y)=\int_{\rset^d}x q^\alpha_t(x|y) \rmd x \eqsp .
	\label{eq:def_u_alpha}
\end{equation}
Here, the conditional density of $X$ given
$Y_t=y$ is
\begin{align}\label{eq:general_posterior}
	 & q^\alpha_0(x|y) \propto\pi(x) \eqsp ,                                                                                                            \\
	 & \textstyle q_t^\alpha (x|y) \propto \pi(x) \densityGaussian(x; y/\alpha(t), \sigma^2/g(t)^2\, \Idd) \eqsp, \eqsp t\in (0,T_{\text{gen}}) \eqsp .
\end{align}
Assuming further that $\pi$ has finite second moment and that $g$ satisfies technical conditions, \eqref{eq:SDE-gen} admits unique weak solutions, see \Cref{subapp:sto-loc-th}. Then again, if $u_t^\alpha$ is known, integrating \eqref{eq:SDE-gen} up to time $T\in (0,T_{\text{gen}})$ is equivalent to sampling from $(Y^\alpha_t)_{t \in [0,T]}$. Then, $Y^\alpha_T/\alpha(T)$ is approximately distributed according to $\pi$ for any $T$ close enough to $T_{\text{gen}}$.

The challenge still lies in the estimation of the drift function $u^\alpha_t$ which verifies $u^\alpha_0(y)=\mathbf{m}_\pi$ at time $t=0$, and which can be expressed, for any $t\in (0,T_{\text{gen}})$, using the score of the marginal distribution $p^\alpha_t(y)$ given by $\textstyle p^\alpha_t(y)=\int_{\rset^d}\densityGaussian(y;\alpha(t)x, \sigma^2 t\, \Idd)  \rmd \pi(x)$. In \Cref{subapp:sto-loc-th}, we indeed show that
\begin{align}\label{eq:link_score_general}
	\textstyle u_t^\alpha(y)=y/\alpha(t)+ \{\sigma^2 t/\alpha(t)\} \nabla \log p_t^\alpha(y) \eqsp ,
\end{align}
recovering the previously mentioned link between score estimation and denoiser estimation.

Moreover, the score of $p_t^\alpha$ can also be written as an expectation over the same posterior distribution $q_t^\alpha$ but with a different observable
\begin{align} \label{eq:target_sm}
	\textstyle \nabla \log p_t^\alpha(y)= \frac{1}{\alpha(t)}\int_{\rset^d} \nabla \log \pi(x) q_t^\alpha (x|y) \rmd x  \eqsp .
\end{align}
Hence, this suggests to consider an alternative expression of the SDE defined in \eqref{eq:SDE-gen}, involving the expectation given in \eqref{eq:target_sm} instead of the denoiser function $u_t^\alpha$ \footnote{Equation \eqref{eq:target_sm} and its consequence have been independently derived in the concurrent works \cite{debortoli2024target, akhoundsadegh2024iterated}. Both have been publicly available almost at the same time than the present paper.}. We refer to \Cref{subsec:alternative_SLIPS} for more details. However, we experimentally observed this approach was less stable than our original method, which is why we decided to consider the SDE \eqref{eq:SDE-gen}.

\begin{table}[t!]
	\caption{Stochastic Localization schemes.}
	\label{table:sto-loc-gen}
	\vspace{-0.3cm}
	\begin{center}
		\begin{small}
			\begin{tabular}{lccr}
				\toprule
				SL scheme                  & Hyper-parameters              & $g(t)$                              \\
				\midrule
				Geom-$\infty(\alpha_1)$    & $\alpha_1 \geq 1$             & $t^{\alpha_1/2}$                    \\
				Geom$(\alpha_1, \alpha_2)$ & $\alpha_1\geq1$, $\alpha_2>0$ & $t^{\alpha_1/2}(1-t)^{-\alpha_2/2}$ \\
				\bottomrule
			\end{tabular}
		\end{small}
	\end{center}
	\vspace{-0.5cm}
\end{table}

\begin{figure}[h!]
	\centering
	\includegraphics[width=\linewidth]{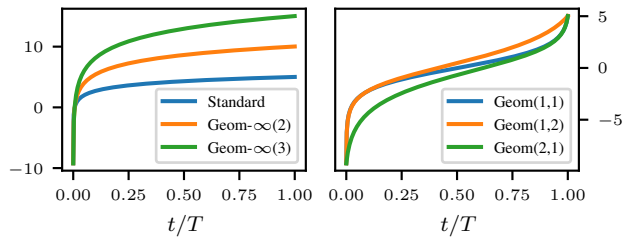}
	\vspace{-0.6cm}
	\caption{Display of $\lsnr$ for the localization schemes Geom-$\infty$ (\textit{left}) and Geom (\textit{right}).}
	\label{fig:log-SNR}
	\vspace{-0.4cm}
\end{figure}

\subsection{Hyper-parameter selection and SNR profile}\label{subsec:guideline}

At first sight, determining appropriate choices for $g$, $\sigma$ and $T_{\text{gen}}$, without introducing further complexity, can be challenging. Below, we propose interpretable choices, relying on the analysis of the \emph{signal-to-noise ratio} of the observation process, which determines the speed at which the denoising process localizes on the target distribution.

An intuitive definition of the Signal-to-Noise Ratio (SNR) of the observation process \eqref{eq:def_y_t-gen} is, for any $t\in [0,T_{\text{gen}})$,
\begin{align}\label{eq:SNR}
	\snr (t) =  \frac{\mathbb{E}\left[\norm{\alpha(t) (X - \mathbb{E}[X])}^2\right]}{\mathbb{E}\left[\norm{\sigma W_t}^2\right]}= \frac{R_\pi^2 g(t)^2}{\sigma^2 d} \eqsp,
\end{align}
where $\textstyle R_\pi^2=\int_{\rset^d} \norm{x -\mathbf{m}_\pi}^2\rmd \pi(x)$ is the scalar variance of $\pi$.
The monotonicity of $g$ ensures that the SNR is strictly increasing on $[0,T_{\text{gen}})$, with $\lim_{t \to 0 } \snr(t) = 0$ (no information on $\pi$ at all) and $\lim_{t \to T_{\text{gen}} } \snr(t) = \infty$ (no noise at all). Further, the scaling in time of the SNR schedule is solely determined by the function $g$, which also controls the time dependence in the localization rate \eqref{prop:standard_gen_sto_loc_cv}. This observation leads us to fix $\sigma=R_\pi/\sqrt{d}$, assuming that $R_\pi$ is known, to obtain a SNR profile
$\snr (t)=g^2(t)$
independent of the target distribution or the dimension\footnote{Note that our framework would be unchanged by taking $\sigma=1$ and defining $g$ up to a multiplicative constant.}. In cases where the scalar variance of the target $\pi$ is unknown, an estimator $\hat{R}_\pi$ can be used instead.

The practical implementation of SL-based sampling requires also the choice of an effective time horizon $T<T_{\text{gen}}$ up to which we integrate the SDE \eqref{eq:SDE-gen}, similarly to the maximum denoising time in diffusion models. It needs to be close enough to $T_{\text{gen}}$ to consider $Y^\alpha_T/\alpha(T)$ as approximately distributed according to $\pi$ and as small as possible to ensure computational efficiency. This trade-off is accounted for by selecting $T$ based on reaching a predefined level of the logarithm of the SNR (log-SNR), which is preferable to use in practice.
Denoting by $\eta$ a positive log-SNR threshold, we set $T=T_\eta$, where $T_\eta$ is such that $\lsnr(T_{\eta})= 2 \log g(T_{\eta}) = \eta$.
In practice, the choice of $\eta$ does not have a significant impact on performance (see \Cref{subsec:gaussian_init} \Cref{fig:app:impact_t_0}).

\begin{figure}[h!]
	\centering
	\includegraphics[width=\linewidth]{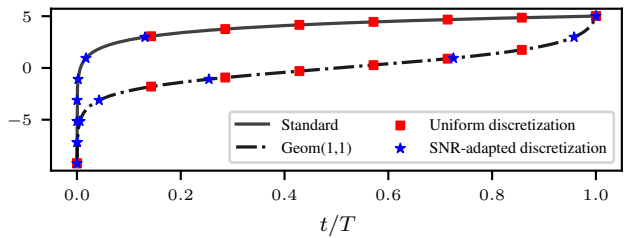}
	\vspace{-0.6cm}
	\caption{SNR-adapted vs uniform time discretization for the schemes Standard and Geom$(1,1)$. The uniform discretization leads to larger log-SNR differences between timesteps where the SNR increases rapidly.}
	\label{fig:adap_dist}
	\vspace{-0.4cm}
\end{figure}

Having reduced the set of hyper-parameters to tune, we investigate two main settings for the denoising schedule $g(t)$. Their main characteristics are summarized in \Cref{table:sto-loc-gen} and examples of corresponding log-SNR profiles are plotted in \Cref{fig:log-SNR}.
\begin{enumerate}[wide, labelindent=0pt, label=(\alph*)]
	\vspace{-0.3cm}
	\item \textbf{Asymptotic geometric schedule (Geom-$\infty$)}, with $T_{\text{gen}}=\infty$ and $g(t)=t^{\alpha_1/2}$ for $\alpha_1 \geq 1$. This is a natural extension of the observation process discussed in \Cref{sec:background} for which $\alpha_1=1$. We refer to it as the \emph{standard} scheme.
	      \vspace{-0.1cm}
	\item \textbf{Non-asymptotic geometric schedule (Geom)}, with $T_{\text{gen}}<\infty$ and $g(t)=(t/T_{\text{gen}})^{\alpha_1/2}(1-t/T_{\text{gen}})^{-\alpha_2/2}$, for $\alpha_1\geq1$ and $\alpha_2>0$. In practice, we only consider $T_{\text{gen}}=1$. If $\alpha_1=\alpha_2$, the log-SNR profile is similar to the one obtained in diffusion models via cosine noise scheduling \cite{nichol2021improved}. Moreover, by taking $\alpha_2 <\alpha_1$, the profile becomes flatter near $t=1$, similarly to the scheduling profiles presented in \cite{kingma2021variational, kingma2023understanding}. Following the framework of \cite{albergo2023stochastic}, $\{(1-t)^{\alpha_2/2}Y_t\}_{t\in [0,1]}$ is a stochastic interpolant between the Dirac mass at $0$ and $\pi$.
\end{enumerate}

\vspace{-0.1cm}
The possibility to adapt the denoising schedule has not yet received attention in the application of SL for sampling. In the following section, we detail the design of our algorithm which adapts to any denoising  schedule.

\vspace{-0.1cm}
\section{$\SLIPS$ sampling algorithm}\label{sec:algos}

In this section, we consider hyper-parameters $\sigma$, $T$ and a denoising schedule $g$ as given in \Cref{subsec:guideline} and expose our strategy for SL-based sampling. As already mentioned, sampling from $\pi$ via SL consists in integrating the SDE \eqref{eq:SDE-gen} to obtain a realization $\{Y_t^\alpha\}_{t\in [0,T]}$ and set $Y_T^\alpha/\alpha(T)$ as an approximate sample from $\pi$. However, the SDE integration faces two main hindrances. First, the drift function $(u_t^{\alpha})_{t \in [0,T]}$ defined in \eqref{eq:def_u_alpha} is not tractable in general.
Second, even if the drift function were known exactly, numerical integration incurs unavoidable discretization errors.
We first tackle the latter issue in \Cref{subsec:discretization}.
We then present a Monte Carlo-based approach to tackle the estimation of $u_t^\alpha$ in \Cref{subsec:estimation}. Finally, we present the key idea of our algorithm $\SLIPS$ in \Cref{subsection:duality}.

\subsection{Handling the discretization error}
\label{subsec:discretization}

Consider a time discretization of the interval $[0,T]$ defined by an increasing sequence of timesteps $(t_k)_{k=0}^K$ where $t_0\geq 0$, $t_K=T$ and $K\geq 1$. We define a sequence of random variables $\{\tY^\alpha_{t_k}\}_{k=0}^{K}$ approximating \eqref{eq:SDE-gen}, defined for any $k \in \iint{0}{K-1}$ by the recursion
\begin{align}\label{eq:recursion-EM}
	\tY^\alpha_{t_{k+1}} = \tY^\alpha_{t_{k}} + w_k u^\alpha_{t_k}(\tY^\alpha_{t_k}) + \sigma \sqrt{\delta_k} Z_{k+1} \eqsp ,
\end{align}
where $\tY^\alpha_{t_0}=0$, $\delta_k=t_{k+1}-t_k$, $w_k= \alpha(t_{k+1})-\alpha(t_{k}) $ and  $(Z_{k})_{k=1}^K$ is distributed according to the centered standard Gaussian distribution. Note that \eqref{eq:recursion-EM} results from an \emph{Euler Maruyama} (EM) discretization scheme applied to \eqref{eq:SDE-gen}, that corresponds for any $k \in \iint{0}{K-1}$ to the exact integration of the SDE
\begin{align}\label{eq:SDE-gen-disc}
	\rmd \tY^\alpha_t = \dot{\alpha}(t) u^\alpha_{t_k}(\tY^\alpha_{t_k}) \rmd t + \sigma \rmd B_t \eqsp, \eqsp t \in[t_k, t_{k+1}] \eqsp.
\end{align}

In practice, we rather start the integration from a timestep $t_0>0$ for reasons that we detail in the next section.
We also adopt a time discretization adapted to the variations of the log-SNR on $[t_0,T]$. Define $\Delta_{\text{SNR}} = \{\lsnr(T)-\lsnr(t_0)\}/K$, and for any $k\in\{0, \hdots, K\}$, choose $t_k$ such that
\begin{align}\label{eq:SNR-disc}
	\lsnr(t_k) = \lsnr(t_0) + \Delta_{\text{SNR}} k \eqsp.
\end{align}
This discretization defines smaller step-sizes when the variation of $\lsnr$ is high, \ie, where the denoising process ``accelerates'' (see \Cref{fig:adap_dist}). We observed that using uniform discretization leads to significantly greater numerical errors in our experiments, see \Cref{app:perfect_score}.

As described in the next section, we suggest to estimate $u^\alpha_{t_k}(\tY^\alpha_{t_k})$ by Monte Carlo methods. Denoting by $\texttt{MC-Est}(u^\alpha_{t}(Y))$ an estimator of the drift function at time $t$ evaluated at $Y$,
we come to consider the joint sequence $\{(\tU^\alpha_{t_k},\tY^\alpha_{t_k})\}_{k=0}^K$ given by
\begin{align}\label{eq:recursion-EM-est}
	 & \tU^\alpha_{t_k} = \texttt{MC-Est}(u^\alpha_{t_k}(\tY^\alpha_{t_k})) \eqsp ,                             \\
	 & \tY^\alpha_{t_{k+1}}= \tY^\alpha_{t_{k}} + w_k \tU^\alpha_{t_k} + \sigma \sqrt{\delta_k} Z_{k+1} \eqsp ,
\end{align}
where
$(t_k)_{k=0}^K$ is the SNR-adapted time discretization of $[t_0,T]$ defined in \eqref{eq:SNR-disc}.
The initialization of $\tY^\alpha_{t_0}$ is crucial and will be discussed in \Cref{subsection:duality}.
We now detail our Monte Carlo estimation scheme defining $\tU^\alpha_{t_k}$.

\vspace{0.1cm}
\subsection{Estimating the denoiser via MCMC} \label{subsec:estimation}

\paragraph{Monte Carlo estimation with posterior sampling.}
We aim to build accurate estimators of the quantities
\begin{align}
	\textstyle u^\alpha_{t_k}(\tY^\alpha_{t_k})=\int_{\rset^d}x \, q_{t_k}^\alpha(x|\tY^\alpha_{t_k})\rmd x \eqsp.
\end{align}
For ease of notation, we will denote the \emph{random} posterior density $q_{t_k}(\cdot|\tY^\alpha_{t_k})$ by $\mu_k$ in this section and
recall that for any $k\in \{0, \hdots, K\}$
\begin{align} \label{eq:mu_k}
	\textstyle \mu_k(x) \propto \pi(x) \densityGaussian(x; \tY^\alpha_{t_k}/\alpha(t_k), \sigma^2/g(t_k)^2\, \Idd)  \eqsp.
\end{align}
One possible approach is to estimate $\tU^\alpha_{t_k}$ with Importance Sampling (IS), by choosing as instrumental distribution $\densityGaussian(\tY^\alpha_{t_k}/\alpha(t_k),  \sigma^2/g(t_k)^2 \, \Idd)$. %
However, IS is known to suffer from large variance in high-dimension unless the proposal is closely adapted to $\pi$ \cite{ceriotti2012inefficiency}. This variance issue translates into sample-size requirements which may grow exponentially with $d$, see e.g., \cite{chatterjee2018sample}. %
\vspace{0.1cm}

Instead, we propose to approximately sample from $\mu_k$ with a MCMC algorithm, in the same fashion as \cite{huang2023monte}, and compute $\tU^\alpha_{t_k}$ as the empirical mean of the obtained samples. More precisely, given a   Markov chain $\{X^{j}_k\}_{j=1}^{M}$  targeting $\mu_k$, we define $\textstyle\tU^\alpha_{t_k} = (1/M)\sum_{j=1}^M  X^{j}_k$.

In practice, we turn to the largely used \emph{Metropolis-Adjusted Langevin Algorithm} (MALA) \cite{roberts1996exponential} leveraging gradient-information of the target log-density. Nonetheless, to justify the use of MCMC here, sampling from $\mu_k$  has to be easier than the original problem of sampling from $\pi$. In particular, we establish in the next paragraph, conditions ensuring that $\mu_k$ is log-concave, implying a reasonable mixing time for MALA \cite{dwivedi2019log}. %

\begin{figure*}[h!]
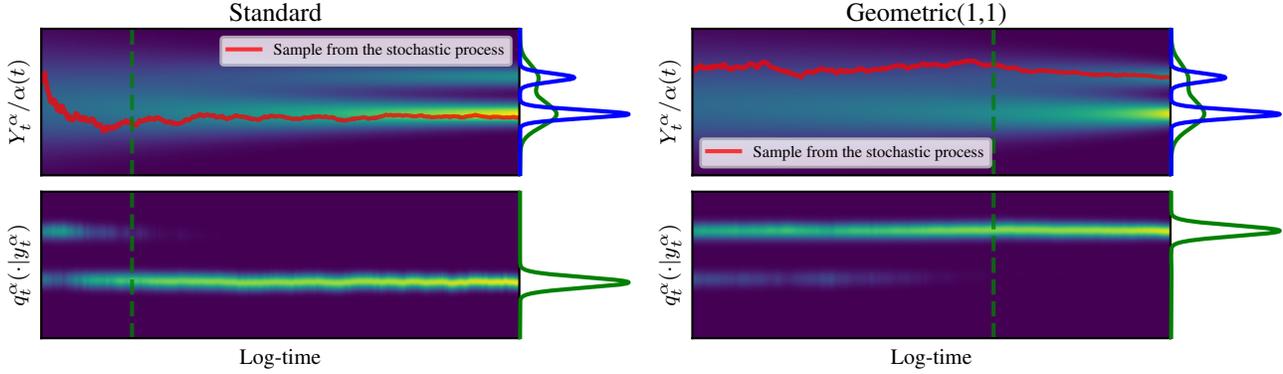

	\centering
	\includegraphics[width=0.49\linewidth]{res/misc/fig1_toy1_alpha_Standard.pdf} \hfill
	\includegraphics[width=0.49\linewidth]{res/misc/fig1_toy1_alpha_Geometric_1,1_.pdf}
	\vspace{-0.2cm}
	\caption{\textbf{Duality of log-concavity}: distribution of $Y^{\alpha}_t / \alpha(t)$ (\textit{up}) and $q^{\alpha}_t(\cdot | y^{\alpha}_t)$ (\textit{down}) for $t\in(0, T_{\text{gen}})$, where $y^{\alpha}_t$ is a realisation of the observation process (\tcmr{red} line), for the standard scheme (\textit{left}) and the Geom$(1,1)$ scheme (\textit{right}). The target distribution $\pi$ is a mixture of two 1D-Gaussian distributions $\densityGaussian(-2/3, (0.05)^2)$ and $\densityGaussian(4/3, (0.05)^2)$ with respective weights $2/3$ and $1/3$, which density is given by the \tcmb{blue} line.
		The heat map represents the likelihood of the distributions and the \tcmv{green} line on the right edge stands for the distributions taken at the time given by the dotted green line.}
	\label{fig:fig1}
	\vspace{-0.4cm}
\end{figure*}

\paragraph{Main issue in posterior sampling.}
To illustrate the main challenge in the sampling of $\mu_k$, we focus on a particular setting. In the same spirit as \citep[Theorem 1]{saremi2023chain}, we consider the following assumption on $\pi$.
\vspace{0.2cm}
\begin{assumption}[Log-concavity outside a compact]
	\label{ass:target} There exist $R>0$ and $\tau>0$ such that $\pi$ is the convolution of $\mu$ and $\mathsf{N}(0,\tau^2\,\Idd)$, where $\mu$ is a distribution whose scalar variance is bounded by $dR^2$, \ie, $\int_{\rset^d}\norm{x-\mathbf{m}_\pi}^2\rmd \mu(x) \leq dR^2$.
\end{assumption}
This assumption can be equivalently formulated as follows: for any random vector $X \sim \pi$, we have $X= U + G$, where $\mathbb{E}[\norm{U-\mathbf{m}_\pi}^2]\leq dR^2$ and $G\sim \densityGaussian(0, \tau^2 \,\Idd)$ is independent of $U$. In this setting, the scalar variance of $\pi$, defined in \Cref{subsec:guideline}, verifies $R_\pi^2= \mathbb{E}[\norm{U-\mathbf{m}_\pi}^2] + d\tau^2$; therefore, we have the upper bound $R_\pi^2 \leq d(R^2 + \tau^2)$. In particular, this setting includes non log-concave distributions such as Gaussian mixtures, see \Cref{app:gaussian_mixture} for more details.
Under Assumption \Cref{ass:target}, we obtain the following result whose formal statement is given in \Cref{app:theory}.
\vspace{0.2cm}
\begin{theorem}
	\label{th:q-log-concave}
	Assume \Cref{ass:target}. There exists $t_{\mathrm{q}} >0$ (explicit in \Cref{app:theory}), depending on $g,d,R$ and $\tau$ such that if $t_k > t_{\mathrm{q}}$, $\mu_k$ is strongly log-concave. In addition, $\mu_k$ is more log-concave as $k$ increases.
\end{theorem}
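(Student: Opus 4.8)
The plan is to bound the Hessian of the potential $-\log\mu_k$ from below by a positive multiple of the identity as soon as $t_k$ is large enough, and to check that this lower bound increases with $k$. Write $\bar R=R\sqrt d$ and $s_k^2=\sigma^2/g(t_k)^2$. Since $\mu$ is a probability measure and the Gaussian density is everywhere positive, $\pi=\mu*\densityGaussian(0,\tau^2\Idd)$ is positive and $\rmC^\infty$, and by \eqref{eq:mu_k},
\begin{align*}
-\log\mu_k(x)=-\log\pi(x)+\frac{\|x-\tY^\alpha_{t_k}/\alpha(t_k)\|^2}{2s_k^2}+\mathrm{const}\eqsp,
\end{align*}
hence $\nabla^2\{-\log\mu_k\}(x)=\nabla^2\{-\log\pi\}(x)+s_k^{-2}\Idd$. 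Everything thus reduces to an $x$-independent lower bound on $\nabla^2\{-\log\pi\}$, which is exactly what the deconvolution structure of \Cref{ass:target} buys.

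For this, I would use $p:=\pi=\mu*\densityGaussian(0,\tau^2\Idd)$: writing $\phi_\tau$ for the density of $\densityGaussian(0,\tau^2\Idd)$, $p(x)=\int_{\rset^d}\phi_\tau(x-u)\rmd\mu(u)$, and differentiating twice under the integral sign — licit because $\mu$ has compact support, so all the relevant integrands are dominated locally uniformly in $x$ — one gets the classical identities
\begin{align*}
\nabla\log p(x)&=\tau^{-2}\bigl(\PE_{\mu_x}[U]-x\bigr)\eqsp,\\
\nabla^2\log p(x)&=\tau^{-4}\Cov_{\mu_x}(U)-\tau^{-2}\Idd\eqsp,
\end{align*}
where $\mu_x(\rmd u)\propto\phi_\tau(x-u)\,\mu(\rmd u)$ is the posterior law of $U$ given $U+G=x$. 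Since $\mu_x$ is supported in $\msb=\ball{\mathbf{m}_\pi}{\bar R}$, we have $\|U-\mathbf{m}_\pi\|\le\bar R$ a.s. under $\mu_x$, so for any unit vector $v$, $v^\top\Cov_{\mu_x}(U)v=\Var_{\mu_x}(v^\top U)\le\PE_{\mu_x}[(v^\top(U-\mathbf{m}_\pi))^2]\le\bar R^2$, i.e. $\Cov_{\mu_x}(U)\preceq\bar R^2\Idd=dR^2\Idd$. Plugging this into the second identity yields the uniform bound $\nabla^2\{-\log\pi\}(x)\succeq(\tau^{-2}-dR^2\tau^{-4})\Idd$ for every $x$.

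Combining the two steps gives
\begin{align*}
\nabla^2\{-\log\mu_k\}(x)\;\succeq\;\Bigl(\frac{g(t_k)^2}{\sigma^2}+\frac{1}{\tau^2}-\frac{dR^2}{\tau^4}\Bigr)\Idd\eqsp.
\end{align*}
By assumptions \ref{item:gsl-0}--\ref{item:gsl-T}, $g:[0,T_{\text{gen}})\to[0,\infty)$ is a continuous strictly increasing bijection, and so is $g^2$; therefore one may take $t_{\mathrm{q}}$ to be any positive number when $dR^2\le\tau^2$ (in which case $\mu_k$ is log-concave for every $k$), and otherwise the unique $t_{\mathrm{q}}\in(0,T_{\text{gen}})$ solving $g(t_{\mathrm{q}})^2/\sigma^2=dR^2/\tau^4-1/\tau^2$. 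For $t_k>t_{\mathrm{q}}$ the displayed coefficient is strictly positive, so $\mu_k$ is $\rho_k$-strongly log-concave with $\rho_k=g(t_k)^2/\sigma^2+\tau^{-2}-dR^2\tau^{-4}$; and since $k\mapsto t_k$ is increasing and $g$ is increasing, $k\mapsto\rho_k$ is increasing, which is the claim that $\mu_k$ is ``more log-concave as $k$ increases''. (Alternatively one can merge the Gaussian factor of $\mu_k$ with the $\densityGaussian(0,\tau^2\Idd)$ appearing inside $\pi$ and write $\mu_k$ itself as a convolution of a measure supported in a ball of radius $(\bar\tau_k^2/\tau^2)\bar R$ with $\densityGaussian(0,\bar\tau_k^2\Idd)$, where $\bar\tau_k^{-2}=\tau^{-2}+s_k^{-2}$, then apply the same covariance estimate; this produces the identical bound.)

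The main obstacle is the middle step: obtaining a covariance bound for $\mu_x$ that is \emph{uniform in $x$}. This is precisely where \Cref{ass:target} is indispensable — without the compactly supported deconvolved measure $\mu$, $\Cov_{\mu_x}(U)$ could grow without bound as $\|x\|\to\infty$ and no global (even asymptotic) log-concavity of $\mu_k$ would follow. The justification of differentiation under the integral is a routine domination argument given the compact support, and everything after the spectral bound (reading off $t_{\mathrm{q}}$ and the monotonicity in $k$) is immediate from assumptions \ref{item:gsl-0}--\ref{item:gsl-T} on $g$.
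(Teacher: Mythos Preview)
Your argument is correct and follows essentially the same route as the paper: decompose $-\log\mu_k$ into $-\log\pi$ plus the quadratic term, apply the Tweedie-type identity $\nabla^2\log\pi(x)=\tau^{-4}\Cov_{\mu_x}(U)-\tau^{-2}\Idd$ coming from $\pi=\mu*\densityGaussian(0,\tau^2\Idd)$, bound the posterior covariance by $dR^2\Idd$ via the support constraint, and read off the same threshold $t_{\mathrm q}=g^{-1}\bigl(\sigma\sqrt{dR^2-\tau^2}/\tau^2\bigr)$ and monotone strong-convexity constant $\rho_k=-\zeta_q(t_k)$. Your justification of differentiation under the integral via compact support is in fact cleaner than the paper's, which leaves it as an unverified ``technical assumption''.
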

This result can be explained by the fact that, for large $t_k$, the Gaussian term dominates the target term in the posterior \eqref{eq:mu_k}, since
$\sigma^2/g(t_k)^2 \approx 0$ - which intuitively ensures increasing log-concavity. On the other hand, if $t_k$ is close to 0, then $\mu_k$ is expected to be close to $\pi$, recalling that $q_0^\alpha=\pi$. In this case, for non log-concave target distributions satisfying \Cref{ass:target}, $\mu_k$ is non log-concave for small $t_k$. We illustrate this behaviour in \Cref{fig:fig1} for a bimodal Gaussian mixture: as depicted by the bottom row, the posterior distribution starts multi-modal (non log-concave) and eventually becomes unimodal (log-concave).

Therefore, for a wide variety of target distributions, our MALA-based posterior sampling approach on the time interval $[t_0, T]$ will fail at the very first steps if $t_0$ is chosen too small. We now explain how to bypass this issue in $\SLIPS$.

\subsection{Duality of log-concavity is all you need}
\label{subsection:duality}

\paragraph{Initialisation of the integration.}
Following the previous discussion, a reliable computation of our MCMC-based estimator given $\tU_{t_k}^\alpha$ requires to have
$t_k > t_{\mathrm{q}}$ in order to ensure log-concavity of the random posterior $\mu_k$.

Thus, we aim to start the integration from a $t_0 > t_{\rmq}$. To initialize the recursion \eqref{eq:recursion-EM-est} from $t_0$, one needs to sample the first iterate $\tY^{\alpha}_{t_0}$ distributed according to $p^\alpha_{t_0}$.
Given that $\tU_{t_0}^\alpha$ is a reliable estimator of the denoiser since $t_0 > t_{\rmq}$, we can derive an approximation of the score $\nabla \log p^{\alpha}_{t_0}$, via Tweedie's formula \eqref{eq:link_score_general}. We use this estimate to sample from $p^{\alpha}_{t_0}$ via  the \emph{Unadjusted Langevin Algorithm} (ULA) \footnote{We do not use MALA since only the score of $p^{\alpha}_{t_0}$ is available.} \cite{roberts1996exponential}. In this initialization procedure, the successive evaluations of the score $\nabla \log p^{\alpha}_{t_0}$ in ULA can be computed by applying an inner loop of MALA on $q_{t_0}^\alpha$. This results in a Langevin-within-Langevin scheme, which is presented in \Cref{alg:langevin_within_langevin}, and detailed in \Cref{app:slips}.

For efficient sampling, ULA however requires a condition of log-concavity on $p^{\alpha}_{t_0}$ \cite{dalalyan2017theoretical,durmus2017nonasymptotic}. In \Cref{th:p-log-concave}, we prove that this property is actually ensured for small values of $t_0$ under Assumption \Cref{ass:target}.

\begin{theorem}
	\label{th:p-log-concave}
	Assume \Cref{ass:target}. There exists $t_{\mathrm{p}} >0$ (explicit in \Cref{app:theory}), depending on $g,d,R$ and $\tau$ such that if $t < t_{\mathrm{p}}$, $p^\alpha_{t}$ is strongly log-concave.
\end{theorem}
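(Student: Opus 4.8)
The strategy is to turn \Cref{ass:target} around: it makes $\pi$ a Gaussian smoothing of a compactly supported law, hence makes $p^\alpha_t$ a Gaussian smoothing of a compactly supported law with an even larger variance, and for $t$ small this variance dominates the (shrinking) support, at which point strong log-concavity follows from the elementary fact that a sufficiently wide Gaussian convolution of a compactly supported measure is strongly log-concave. For the first step I would use the reformulation of \Cref{ass:target} noted in the text: write $X = U + G$ with $U \sim \mu$ supported in $\ball{\mathbf{m}_\pi}{R\sqrt d}$ and $G \sim \mathsf{N}(0,\tau^2\Idd)$ independent. Then $Y^\alpha_t = \alpha(t) U + \bigl(\alpha(t) G + \sigma W_t\bigr)$, and since $\alpha(t)^2 = t\,g(t)^2$, the bracketed term is $\mathsf{N}(0,s_t^2\Idd)$ with $s_t^2 = \alpha(t)^2\tau^2 + \sigma^2 t = t\,(g(t)^2\tau^2+\sigma^2)$. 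Hence $p^\alpha_t = \rho_t * \mathsf{N}(0,s_t^2\Idd)$, where $\rho_t$ is the law of $\alpha(t) U$, supported in a ball of radius $r_t := \alpha(t) R\sqrt d$, so $r_t^2 = t\,g(t)^2 R^2 d$. Being a Gaussian convolution of a finite measure, $p^\alpha_t$ is smooth and strictly positive, so its log-Hessian is well defined everywhere.

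Next I would establish the general claim: if $\rho$ is a probability measure supported in a Euclidean ball of radius $r$ and $s^2 > r^2$, then $\rho * \mathsf{N}(0,s^2\Idd)$ is $\kappa$-strongly log-concave with $\kappa = s^{-2}(1 - r^2/s^2)$. Writing $p(y) \propto \int \exp(-\|y-z\|^2/(2s^2))\,\rmd\rho(z)$, one gets $\log p(y) = \mathrm{const} - \|y\|^2/(2s^2) + \Lambda(y)$, where $\Lambda(y) = \log \int \exp(\langle y,z\rangle/s^2 - \|z\|^2/(2s^2))\,\rmd\rho(z)$ is a log-partition function; its Hessian is $\nabla^2\Lambda(y) = s^{-4}\,\Cov_{\rho_y}(Z)$, with $\rho_y$ the exponentially tilted measure $\rmd\rho_y(z)\propto\exp(\langle y,z\rangle/s^2)\,\rmd\rho(z)$ (the Gaussian weight absorbed into the base measure). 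Since tilting by a positive density does not change the support, $\rho_y$ still lives in the ball of radius $r$ about some center $c$, so $\Cov_{\rho_y}(Z)\preceq\mathbb{E}_{\rho_y}[\|Z-c\|^2]\,\Idd\preceq r^2\Idd$, uniformly in $y$. Therefore $\nabla^2\log p(y) = -s^{-2}\Idd + \nabla^2\Lambda(y)\preceq -\kappa\Idd$ for all $y$.

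Combining the two steps, $p^\alpha_t$ is $\kappa_t$-strongly log-concave with $\kappa_t = s_t^{-2}(1-r_t^2/s_t^2)$ as soon as $s_t^2 > r_t^2$, which after cancelling $t>0$ reads $\sigma^2 > g(t)^2\,(R^2 d - \tau^2)$. If $R^2 d \le \tau^2$ this holds for every $t\in(0,T_{\text{gen}})$ and one may take $t_{\mathrm{p}} = T_{\text{gen}}$. Otherwise $t\mapsto \sigma^2 - g(t)^2(R^2 d - \tau^2)$ is continuous by assumption~\ref{item:gsl-fun}, strictly positive near $0$ because $g(t)\to 0$ by assumption~\ref{item:gsl-0}, and strictly decreasing because $g$ is strictly increasing by assumption~\ref{item:gsl-T}; hence it has a unique zero $t_{\mathrm{p}}\in(0,T_{\text{gen}})$, characterized by $g(t_{\mathrm{p}})^2 = \sigma^2/(R^2 d - \tau^2)$, and $p^\alpha_t$ is strongly log-concave for every $t<t_{\mathrm{p}}$. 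This closed form for $t_{\mathrm{p}}$ (and the explicit $\kappa_t$) is what is advertised as ``explicit in \Cref{app:theory}''.

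I do not expect a genuine obstacle here; the computations are routine. The only points needing a little care are the covariance-in-a-ball bound (immediate by bounding the covariance by the second moment about the center, or slightly sharper via Popoviciu's inequality) and noting that the exponential tilt preserves the support. It is worth contrasting this with \Cref{th:q-log-concave}: log-concavity of the posterior $\mu_k$ needs the Gaussian factor to dominate (large $t$), while log-concavity of the marginal $p^\alpha_t$ needs the compactly supported component $\alpha(t) U$ to be negligible (small $t$) — the ``duality'' of this section, which is exactly why one can safely initialise the integration at a $t_0$ with $t_{\mathrm{q}} < t_0 < t_{\mathrm{p}}$ once $t_{\mathrm{q}} < t_{\mathrm{p}}$.
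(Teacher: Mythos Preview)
Your proposal is correct and is essentially the paper's own argument: the paper writes $p^\alpha_t(y)=\int \densityGaussian(y;\alpha(t)u,\,s_t^2\,\Idd)\,\rmd\mu(u)$ with $s_t^2=\alpha(t)^2\tau^2+\sigma^2 t$, applies the Tweedie-type identity $\nabla_y^2\log p^\alpha_t(y)=-s_t^{-2}\Idd+\alpha(t)^2 s_t^{-4}\,\Cov[U\mid Y^\alpha_t=y]$, bounds the conditional covariance by $dR^2\,\Idd$ via the support assumption, and reads off the same condition $g(t)^2(dR^2-\tau^2)<\sigma^2$ and the same explicit $t_{\mathrm p}=g^{-1}\bigl(\sigma/\sqrt{dR^2-\tau^2}\bigr)$ (or $t_{\mathrm p}=T_{\text{gen}}$ when $dR^2\le\tau^2$). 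Your log-partition/tilt phrasing is just another name for the same computation, since the tilted measure $\rho_y$ is precisely the posterior of $\alpha(t)U$ given $Y^\alpha_t=y$.
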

We provide a formal version of this result in \Cref{app:theory}. Intuitively, when $t$ is small, the Gaussian noise in the observation process  \eqref{eq:def_y_t-gen} overwhelms $\alpha(t)X$, which makes $p^\alpha_t$ log-concave. Note however that this log-concavity property is not verified for large $t$ as $p_t^\alpha$ becomes as log-concave as $\pi$ by the localization property, see \eqref{prop:standard_gen_sto_loc_cv}. We illustrate this behaviour in the upper row of \Cref{fig:fig1}.

\paragraph{Duality of log-concavity.}Ideally, we would like to have (i) $t_0>t_{\mathrm{q}}$ (to ensure that the estimation from $\tU_{t_k}^\alpha$ is correct) and (ii) $t_0<t_{\mathrm{p}}$ (to ensure that $p^{\alpha}_{t_0}$ is log-concave). Under an additional assumption to \Cref{ass:target}, \Cref{th:duality} demonstrates that such $t_0$ can exist. A proof of this result is given in \Cref{app:theory}.
\begin{theorem}\label{th:duality}
	Assume \Cref{ass:target}, with $dR^2/\tau^2< 2$. Then, $t_{\mathrm{q}}<t_{\mathrm{p}}$, where $t_{\mathrm{q}}$ and $t_{\mathrm{p}}$ are given in \Cref{th:q-log-concave} and \Cref{th:p-log-concave}.
\end{theorem}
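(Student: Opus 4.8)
The plan is to make the two earlier results quantitative and then exhibit a direct inequality between the two thresholds. First I would go back to \Cref{th:q-log-concave} and \Cref{th:p-log-concave} and extract the explicit expressions for $t_{\mathrm{q}}$ and $t_{\mathrm{p}}$ in terms of $g, d, R$ and $\tau$. In both cases the log-concavity criterion comes from a competition between the strongly log-concave Gaussian factor and the (only locally controllable) part coming from $\mu$ supported on $\ball{\mathbf{m}_\pi}{R\sqrt d}$: by the Bakry--Émery / Cafarelli-type bounds used there, $\mu_k$ is strongly log-concave once the Gaussian variance $\sigma^2/g(t_k)^2$ is small enough relative to the radius $R\sqrt d$ and the convolution width $\tau$, while $p^\alpha_t$ is strongly log-concave once the observation-process variance $\sigma^2 t$ is large enough relative to $\alpha(t)^2 = t\,g(t)^2$ times the same quantities. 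Both conditions, after substituting $\sigma = R_\pi/\sqrt d$ (or keeping $\sigma$ symbolic since it cancels), reduce to inequalities on $g(t)^2$ alone.

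The key observation is that both thresholds are determined by a level set of the \emph{same} monotone function $g$: $t_{\mathrm{q}}$ is the time at which $g(t)^2$ first exceeds some value $c_{\mathrm{q}}(d,R,\tau)$, and $t_{\mathrm{p}}$ is the time at which $g(t)^2$ first exceeds some value $c_{\mathrm{p}}(d,R,\tau)$. Since $g$ is strictly increasing (assumption \ref{item:gsl-T}), we have $t_{\mathrm{q}} < t_{\mathrm{p}}$ \emph{if and only if} $c_{\mathrm{q}} < c_{\mathrm{p}}$. So the whole statement collapses to a comparison of two explicit constants. I would then carry out the algebra: write $c_{\mathrm{q}}$ and $c_{\mathrm{p}}$ explicitly, take the ratio, and check that the inequality $c_{\mathrm{q}} < c_{\mathrm{p}}$ holds precisely when $dR^2/\tau^2 < 2$. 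The factor $2$ should appear naturally: the Gaussian convolution contributes variance $\tau^2$ to $\pi$, the compact part contributes at most $dR^2$, and the two log-concavity windows overlap exactly when the ``bad'' part $dR^2$ is dominated by twice the ``good'' Gaussian part $\tau^2$ — i.e. when the curvature gained from the external Gaussian in the posterior outpaces the curvature lost from the bounded-support perturbation.

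The main obstacle I anticipate is purely bookkeeping: making sure the explicit constants $t_{\mathrm{q}}, t_{\mathrm{p}}$ from \Cref{app:theory} are used in a form where the comparison is clean, and that the $g$-dependence genuinely factors out through a single level-set argument rather than getting tangled with the $t$-prefactor in $\alpha(t)^2 = t g(t)^2$ (the $p$-threshold involves $\sigma^2 t$ vs $\alpha(t)^2$, so a spurious factor of $t$ could appear). I expect this to resolve because in the $p^\alpha_t$ criterion the relevant ratio is $\sigma^2 t / \alpha(t)^2 = \sigma^2/g(t)^2$, the same quantity as in the $q$ criterion, so the $t$-prefactor cancels and both conditions are genuinely conditions on $g(t)^2$. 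Once that is confirmed, the proof is: (i) recall explicit $t_{\mathrm{q}}, t_{\mathrm{p}}$; (ii) reduce each to $g(t)^2 \gtrless$ an explicit constant; (iii) use strict monotonicity of $g$ to turn $t_{\mathrm{q}} < t_{\mathrm{p}}$ into a constant inequality; (iv) verify that constant inequality is equivalent to $dR^2/\tau^2 < 2$.
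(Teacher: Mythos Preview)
Your proposal is correct and follows essentially the same route as the paper: the explicit thresholds from \Cref{prop:log_concavity} are $t_{\mathrm{p}} = g^{-1}\!\bigl(\sigma/\sqrt{dR^2-\tau^2}\bigr)$ and $t_{\mathrm{q}} = g^{-1}\!\bigl(\sigma\sqrt{dR^2-\tau^2}/\tau^2\bigr)$ (with $t_{\mathrm{q}}=0$, $t_{\mathrm{p}}=T_{\text{gen}}$ when $dR^2\le\tau^2$), and the paper uses exactly your step (iii)--(iv), reducing via strict monotonicity of $g$ to the algebraic inequality $(dR^2-\tau^2)^2<\tau^4\iff dR^2<2\tau^2$. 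Your anticipated cancellation of the $t$-prefactor in the $p^\alpha_t$ criterion is precisely what happens.
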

Note however that this extra assumption is restrictive and could be slightly improved, as done for Gaussian mixtures in \Cref{app:theory}. %
Then, choosing $t_0$ boils down to finding a sweet spot where $p^{\alpha}_{t_0}$ and $q_{t_0}^\alpha$ are both approximately log-concave. Together, these requirements form what we call the \emph{`duality of log-concavity'}. In \Cref{fig:fig1}, we show that $t_0$ can be found for two different localization schemes when considering a Gaussian mixture that does not fit the assumption made in \Cref{th:duality}. Furthermore, we highlight in \Cref{app:mcmc_ablation_study} that a sweet spot generally exists for a wider variety of target distributions independently of the localization scheme.

\vspace{0.3cm}
In practice, we ensure the log-concavity of $q_{t_0}^\alpha$ first, as the Monte Carlo estimation of $u^\alpha_{t_0}$ is the starting point of the $\SLIPS$ algorithm.

\subsection{Implementation of $\SLIPS$.}

By combining all the previous observations made above, we propose the \emph{Stochastic Localization via Iterative Posterior Sampling} ($\SLIPS$) algorithm, summarized in \Cref{alg:slips}, which we detail now.

The $\SLIPS$ algorithm has five inputs :
\begin{enumerate}[wide, labelindent=0pt, label=(\alph*)]
	\vspace{-0.3cm}
	\item the noising schedule in the observation process $t \mapsto\alpha(t)$ defined on a time interval $[0, T_{\text{gen}})$,
	      \vspace{-0.1cm}
	\item the \emph{initial} integration time in \eqref{eq:SDE-gen}: $t_0\in (0, T_{\text{gen}})$,
	      \vspace{-0.1cm}
	\item the \emph{final} integration time in \eqref{eq:SDE-gen}: $T\in (t_0, T_{\text{gen}})$, or equivalently $\eta>0$ by log-SNR correspondence,
	      \vspace{-0.1cm}
	\item the number of discretization steps in \eqref{eq:SDE-gen}: $K\geq 1$,
	      \vspace{-0.1cm}
	\item the number of samples for posterior sampling $M \geq 1$.
\end{enumerate}
The algorithm can be decomposed in two main parts : \ref{item:enum_slips_a} its initialization (summarized in \Cref{alg:langevin_within_langevin}) and \ref{item:enum_slips_b} the run of the discretized SDE \eqref{eq:recursion-EM-est} via posterior sampling.
\begin{enumerate}[wide, labelindent=0pt, label=(\roman*)]
	\item \label{item:enum_slips_a} The initialization of $\SLIPS$ consists in approximately sampling from $p^{\alpha}_{t_0}$. Since its score can be expressed via the posterior distribution $q^{\alpha}_{t_0}$, see \eqref{eq:def_u_alpha} and \eqref{eq:link_score_general}, and $q^{\alpha}_{t_0}$ can itself be sampled from with MCMC methods (see \Cref{subsec:estimation}), we propose to use a Langevin-within-Langevin procedure : each step of the outer loop corresponds to an iteration of the Langevin algorithm to sample from $p^{\alpha}_{t_0}$, and requires local MALA steps to estimate the score by sampling from the posterior $q^{\alpha}_{t_0}$. The exact computations are detailed in \Cref{app:slips}. The final iterate of this algorithm is used as the initialization of the second part of $\SLIPS$.
	\item \label{item:enum_slips_b} Once the initialization of $\SLIPS$ is done, we turn to the actual core part of $\SLIPS$, which consists in running the discretized SDE given in \eqref{eq:recursion-EM-est}. At step $k+1$ of this recursion (corresponding to the time $t_{k+1}$ in the initial SDE), the denoiser term is approximated by running the MALA algorithm on the posterior distribution $q^{\alpha}_{t_k}$ conditioned on the $k$-th iterate of the recursion. Finally, the last iterate of this recursion is considered as an approximate sample from the target distribution.
\end{enumerate}

\paragraph{Complexity of $\SLIPS$.} Note that running this procedure is meant to produce one sample from the target distribution. In practice, one may want to produce several samples. In this case, $\SLIPS$ can be easily parallelised by running simultaneously independent realizations of the procedure described above. Moreover, we emphasize that the initialization of $\tY^\alpha_{t_0}$ is the most challenging step in $\SLIPS$ as posterior sampling only gets easier afterwards, by \Cref{th:q-log-concave}. The computational cost of this initialization, involving a Langevin-within-Langevin procedure, remains however reasonable thanks to a persistent initialization of the ULA and MALA chains (see \Cref{app:slips}). As such, only a few steps of each is needed in practice (see \Cref{app:tde}).
\vspace{-0.2cm}
\paragraph{Limitation of $\SLIPS$.} We highlight that $t_0$ is the only hyper-parameter in our algorithm that requires careful tuning. However, \Cref{fig:app:w2_with_t_0} and \Cref{fig:app:w2_with_t_0_sigmas} in \Cref{app:mcmc_ablation_study} highlight moderate sensitivity to this hyper-parameter.

\vspace{-0.2cm}
\begin{algorithm}[h!]
	\caption{SLIPS}
	\label{alg:slips}
	\begin{algorithmic}
		\STATE {\bfseries Input:} $\alpha$, $t_0$, $\eta$, $K$, $M$
		\STATE Set $T=T_\eta$ and $\sigma=\hat{R}_\pi/\sqrt{d}$, see \Cref{subsec:guideline}
		\STATE Set $(t_k)_{k=0}^K$ as the SNR-adapted disc. of $[t_0, T]$, see \eqref{eq:SNR-disc}
		\STATE Initialize $\tY^\alpha_{t_0}$ with \Cref{alg:langevin_within_langevin}
		\FOR{$k=0$ {\bfseries to} $K-1$}
		\STATE Define $\delta_k=t_{k+1}-t_k$, $w_k= \alpha(t_{k+1})-\alpha(t_{k})$
		\STATE Simulate $\{X^k_j\}_{j=1}^M \sim\mu_k$ with MALA, see \eqref{eq:mu_k}
		\STATE Estimate the denoiser by $\tU_{t_k}^\alpha=(1/M)\sum_{j=1}^M X_j^k$
		\STATE Simulate $\tY^\alpha_{t_{k+1}} \sim \densityGaussian(\tY^\alpha_{t_{k}} + w_k \tU_{t_k}^\alpha, \sigma^2 \delta_k\,\Idd)$
		\ENDFOR
		\STATE {\bfseries Output:} $\tY^\alpha_{t_{K}}/\alpha(t_K)$
	\end{algorithmic}
\end{algorithm}
\vspace{-0.3cm}
\begin{algorithm}[h!]
	\caption{Langevin-within-Langevin initialization}
	\label{alg:langevin_within_langevin}
	\begin{algorithmic}
		\STATE {\bfseries Input:} $\alpha(t_0)$, $t_0$, $\sigma$, $N$, $M$
		\STATE Set $Y^{(0)} \sim \densityGaussian(0, \sigma^2 t_0 \, \Idd)$ and $\lambda = \sigma^2 t_0 / 2$
		\FOR{$n = 0$ {\bfseries to} $N-1$}
		\STATE Simulate $\{X^{(n)}_j\}_{j=1}^M \sim q^{\alpha}_{t_0}(\cdot | Y^{(n)})$ with MALA
		\STATE Estimate the denoiser by $U^{(n)}= (1/M) \sum_{j=1}^M X^{(n)}_j$
		\STATE Set $\hat{s}^{\alpha}_{t_0}(Y^{(n)}) = \left(\alpha(t_0) U^{(n)} - Y^{(n)}\right) / (\sigma^2 t_0)$, see \eqref{eq:SDE-gen}
		\STATE Simulate $Y^{(n+1)} \sim \densityGaussian(Y^{(n)} + \lambda \hat{s}^{\alpha}_{t_0}(Y^{(n)}), 2 \lambda \, \Idd)$
		\ENDFOR
		\STATE {\bfseries Output:} $Y^{(N)}$
	\end{algorithmic}
\end{algorithm}

\vspace{-0.4cm}
\section{Related work}

\vspace{-0.1cm}
\paragraph{Score-based sampling with VI.} Building upon score-based generative models and VI, recent works have proposed deep-learning approaches for density-based sampling.
Those sampling schemes amount to discretized versions of denoising processes with a parameterized drift function. Two main settings may be distinguished. On one hand, the VI framework is seen as a stochastic optimal control problem involving the time-reversal of the denoising process, see e.g., \cite{zhang2021path, berner2022optimal, vargas2023bayesian,vargas2023denoising}. We refer to \cite{richter2023improved} for a global overview of this approach and its extensions. On the other hand, another line of work has proposed parameterized extensions of Annealed Importance Sampling (AIS) \cite{doucet2022score,geffner2023langevin,vargas2023transport}. Since $\SLIPS$ is learning-free, these algorithms are not included in our numerical tests as it would be difficult to draw comparisons at equal computational budget.

\paragraph{Score-based sampling with Monte Carlo.} Preliminary study of Monte Carlo score estimation was done by \cite{huang2021schrodinger}, followed by \cite{vargas2023bayesian}, who considered the Föllmer diffusion \cite{follmer2005entropy,follmer2006time} bridging $\updelta_0$ to $\pi$ in a finite-time setting. However, their method relying on IS does not scale well with dimension, and suffers from numerical unstability. Note that it can be recasted as a particular example of our scheme Geom(1,1), via the stochastic interpolant analogy, see \Cref{subsec:guideline}.

Closely related to this work, \cite{huang2023monte} recently proposed \emph{Reverse Diffusion Monte Carlo} (RDMC), a sampling algorithm based on a Monte Carlo estimation of the drift of the time-reversal of a variance preserving diffusion. Many algorithmic choices in RDMC are similar to $\SLIPS$, turning to Langevin for the drift estimation and using Langevin-within-Langevin for initialization. These authors also discuss the choice of the initial integration time, which plays a role similar to our $t_0$. Notably, under Lipschitz assumption on the score, they derive an upper bound on the overall complexity of RDMC, depending on this time. The present work complements the approach of \cite{huang2023monte}. We formalize the crucial trade-off in choosing $t_0$ according to our notion of `duality of log-concavity' (see \Cref{subsection:duality}), which is only briefly mentioned in \cite{huang2023monte}. This focal point naturally arises from our numerical tests in high-dimension that allows us to assess the potential of SL-based sampling in practice, where \cite{huang2023monte} remained mostly theoretical. A precise comparison is provided in \Cref{app:related_work}.

\vspace{-0.2cm}
\paragraph{Multi-Noise Measurements sampling.}
Recently, \cite{saremi2022multimeasurement} exploited a non-Markovian stochastic process to propose a novel data-based sampling scheme. The Multi-Noise Measurements (MNM) process $(Y^m)_{m=1}^M$ is defined by $Y^m = X + \sigma Z^m$ where $\sigma > 0$, $X \sim \pi$ and $Z^m$ are independently sampled from $\densityGaussian(0,\Idd)$.
Here, the \emph{measurement} $Y^m$ plays the same role as the \emph{observation} in standard SL\footnote{Indeed, both denoising processes have the same localization rate when $M=T$, see \Cref{app:related_work} for more details.}.
In \cite{saremi2023chain}, the authors suggest to use this process in the density-based setting by computing $Y^{1:M}$ sequentially through the sampling of $Y^m$ conditioned on $Y^{1:m-1}$. They show that those conditional distributions are increasingly log-concave, see \citep[Theorem 1]{saremi2023chain}, making their \emph{Once-At-a-Time} (OAT) algorithm efficient.
In contrast to our framework, \cite{saremi2023chain}  mainly assume that the score of the measurement process is analytically available. Although a Monte Carlo-based approach is proposed to estimate the score in realistic settings, results significantly degrade compared to the analytical case. As we detail in \Cref{app:related_work}, this can be explained by the fact that OAT faces but does not tackle the challenge of `duality of log-concavity'.

\vspace{-0.1cm}
\section{Numerical experiments} \label{sec:xps}

\vspace{-0.1cm}
In this section, we compare $\SLIPS$ against SMC, AIS, RDMC \cite{huang2023monte} and OAT \cite{saremi2023chain}. Note that we deliberately omit an exhaustive comparison with standard MCMC methods, as they notoriously fail to sample from multi-modal distributions, see \Cref{app:mcmc}. For $\SLIPS$, we consider three different SL schemes : Standard, Geom(1,1) and Geom(2,1). Except for RDMC, all the algorithms are informed by the scalar variance $R_\pi^2$ of the target distribution (or an estimation). We tuned the hyper-parameters of each algorithm with coarse grid searches of similar size and similar computational budgets assuming access to an oracle distance metric to the target distribution
(see details in \Cref{app:numerics}). For OAT, we estimated the intermediate scores with IS.

\vspace{-0.3cm}
\paragraph{Toy target distributions.} We first discuss standard target distributions including the \emph{8-Gaussians} ($d=2$), the \emph{Rings} ($d=2$) and the \emph{Funnel} ($d=10$) distributions. Details about their respective definitions are provided in \Cref{app:numerics}. Apart from Funnel, for which we provide results based on the sliced Kolmogorov-Smirnov (KS) distance as per \cite{grenioux2023onsampling}, we compare the samples obtained with the algorithms against the ground truth using the entropic regularized 2-Wasserstein distance. The first four columns of \Cref{tbl:bench_toys_bays} show that $\SLIPS$ is on par with most of its competitors on those toy distributions.

\vspace{-0.2cm}
\paragraph{Bayesian Logistic Regression.}

Beyond toy distributions, we sample from the posterior of a Bayesian logistic regression model on two popular datasets : \emph{Ionosphere} ($d = 34$) and \emph{Sonar} ($d = 61$). More details on the design of the model are available in \Cref{app:numerics}. We evaluate the quality of sampling by computing the average predictive log-likelihood of the obtained samples. The last two columns of \Cref{tbl:bench_toys_bays} show that, in these higher dimensions, $\SLIPS$ is slightly superior to its counterparts, especially OAT and RDMC.

\vspace{-0.2cm}
\paragraph{High-dimensional Gaussian Mixtures.}
As a challenging task, we seek to estimate the relative weight of a bimodal Gaussian mixture with modes $\densityGaussian\left(x; -(2/3)\mathbf{1}_{d}, \Sigma\right)$ and $\densityGaussian\left(x; (4/3) \mathbf{1}_{d}, \Sigma\right)$, where $\Sigma = 0.05 \, \mathrm{I}_{d}$ and $\mathbf{1}_{d}$ is the $d$-dimensional vector with all components equal to $1$, and respective weights $2/3$ and $1/3$. In our experiments, these weights are computed as $\hat{W}$ and $1-\hat{W}$, where $\hat{W}$ is a Monte Carlo estimation of $\textstyle \int_{\rset^d}\mathbbm{1}_{(-\infty,0)^d}(x)\rmd \pi(x)$. Here, we consider increasing values of $d$. \Cref{fig:bench:two_modes} (\textit{top}) shows that $\SLIPS$ recovers the relative weight of the target at a 1\% accuracy, even in high dimensions, whereas the other methods fail to give accurate estimates in most dimensions. In \Cref{fig:bench:two_modes} (\textit{bottom}), we also illustrate the superiority of $\SLIPS$ to capture the local properties of the distribution by displaying the sliced Wasserstein distance to the target distribution. Due to high estimation error, AIS and SMC results are omitted in \Cref{fig:bench:two_modes} and displayed in \Cref{app:numerics}.

\begin{table*}[t]
	\caption{Metrics when sampling toy distributions and posteriors from Bayesian logistic regression models. Bold font indicates best results. The metric for 8-Gaussians and Rings ($d = 2$) is the entropic regularized 2-Wasserstein distance (with regularization hyper-parameter $0.05$) (\textit{the lowest, the best}), the metric for Funnel ($d=10$) is the sliced Kolmogorov-Smirnov distance (\textit{the lowest, the best}) and the metric for the Bayesian logistic regression on Sonar ($d=34$) and Ionosphere ($d=61$) datasets corresponds to the average predictive posterior log-likelihood (\textit{the highest, the best}).}
	\label{tbl:bench_toys_bays}
	\vspace{-0.3cm}
	\begin{center}
		\begin{small}
			\begin{sc}
				\begin{tabular}{lccccc}
					\toprule
					Algorithm                  & 8-Gaussians ($\downarrow$) & Rings ($\downarrow$)     & Funnel ($\downarrow$)      & Sonar  ($\uparrow$)         & Ionosphere ($\uparrow$)    \\
					\midrule
					AIS                        & $1.10 \pm 0.09$            & $0.19 \pm 0.02$          & $0.037 \pm 0.004$          & $-111.04 \pm 0.08$          & $-87.92 \pm 0.13$          \\
					SMC                        & $0.99 \pm 0.16$            & $0.28 \pm 0.06$          & $0.035 \pm 0.005$          & $-111.02 \pm 0.13$          & $-87.82 \pm 0.16$          \\
					OAT \cite{saremi2023chain} & $0.91 \pm 0.09$            & $0.18 \pm 0.02$          & $0.105 \pm 0.005$          & $-280.92 \pm 1.34$          & $-205.49 \pm 0.61$         \\
					RDMC \cite{huang2023monte} & $1.01 \pm 0.05$            & $0.30 \pm 0.01$          & $0.082 \pm 0.006$          & $-129.88 \pm 0.12$          & $-109.84 \pm 0.10$         \\
					\hline
					$\SLIPS$ Standard          & $0.76 \pm 0.05$            & $\mathbf{0.19 \pm 0.01}$ & $\mathbf{0.024 \pm 0.003}$ & $-109.25 \pm 0.07$          & $-86.65 \pm 0.04$          \\
					$\SLIPS$ Geom(1,1)         & $\mathbf{0.74 \pm 0.12}$   & $0.20 \pm 0.01$          & $0.032 \pm 0.002$          & $\mathbf{-109.14 \pm 0.09}$ & $\mathbf{-86.32 \pm 0.10}$ \\
					$\SLIPS$ Geom(2,1)         & $0.75 \pm 0.10$            & $0.22 \pm 0.02$          & $0.040 \pm 0.007$          & $-110.24 \pm 0.05$          & $-86.78 \pm 0.08$          \\
					\bottomrule
				\end{tabular}
			\end{sc}
		\end{small}
	\end{center}
	\vskip -0.1in
\end{table*}

\begin{figure}[h!]
	\centering
	\includegraphics[width=\linewidth]{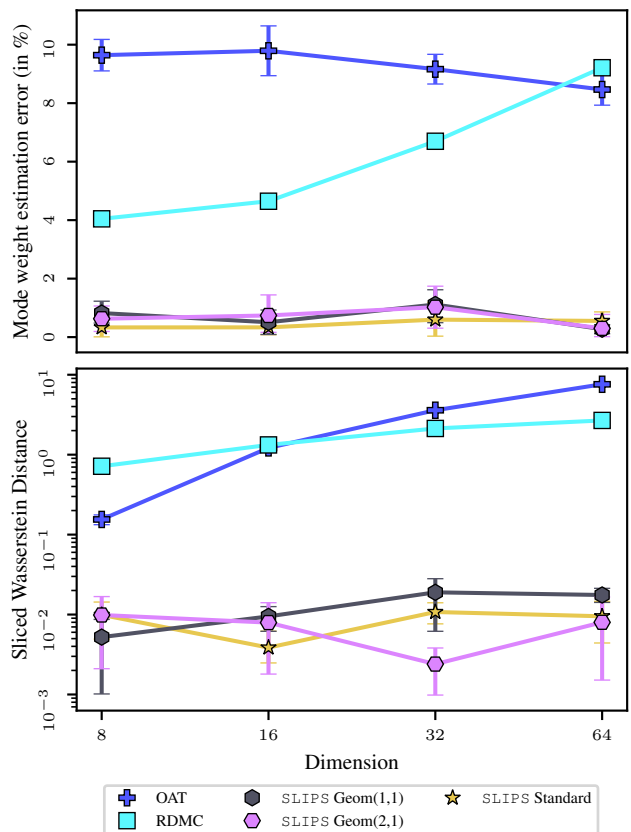}
	\vspace{-0.5cm}
	\caption{Metrics when sampling a bimodal Gaussian mixture with $d$ growing. \textbf{Top}: Relative weight estimation error. \textbf{Bottom}: Sliced Wasserstein distance.}
	\label{fig:bench:two_modes}
	\vspace{-0.5cm}
\end{figure}

\paragraph{Field system $\phi^4$ from statistical mechanics.} Lastly, we sample the 1D $\phi^4$ model, which was recently used as a benchmark in \cite{gabrie2019adaptive}. At the chosen temperature, the distribution has two well distinct modes with relative weight that can be adjusted through a `local-field' parameter $h$. We discretize this continuous model with a grid size of 100 (\ie, $d = 100$).
We estimate the relative weight between the two modes and compare the results with a Laplace approximation.

\begin{figure}[h!]
	\centering
	\includegraphics[width=\linewidth]{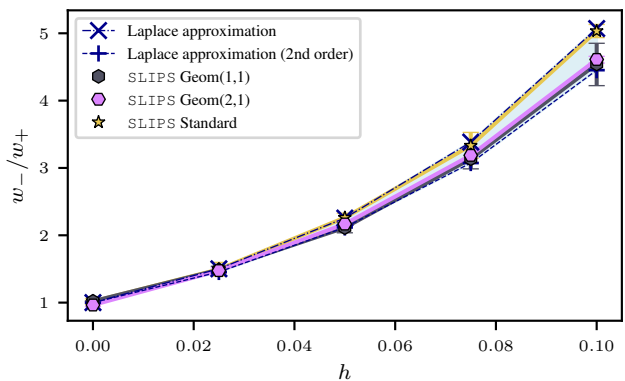}
	\vspace{-0.6cm}
	\caption{Estimation of the mode weight ratio of $\phi^4$ with increasing $h$ - Only $\SLIPS$ produced correct samples.}
	\label{fig:bench:weight_ratio_sto_loc}
	\vspace{-0.5cm}
\end{figure}

\Cref{fig:bench:weight_ratio_sto_loc} shows that the relative weight estimated by $\SLIPS$ lie between the 0-th and 2-nd order Laplace approximations. Due to high estimation error, the results from concurrent algorithms are omitted in \Cref{fig:bench:weight_ratio_sto_loc}. We refer to \Cref{app:numerics} for more details on the setting of this numerical experiment.

\section{Discussion}

In this paper, we introduced a Stochastic Localization (SL) scheme, that features flexible denoising schedule, in order to sample from any unnormalized target density. Relying on this framework, we proposed our algorithm, \emph{Stochastic Localization via Iterative Posterior Sampling} ($\SLIPS$) which leverages Monte Carlo estimation of the SL denoiser. Its design notably reveals the so-called ``duality of log-concavity'', that can be seen as a trade-off between sampling from the observation process and sampling from the SL posterior. For various localization schemes, we illustrate the performance of $\SLIPS$ in high-dimension. In future work, we would like to derive more theoretical guarantees on the phenomenon of duality, investigate improved designs of denoising schedule and consider more challenging target distributions.

\newpage

\section*{Acknowledgements}

The authors would like to thank Saeed Saremi and Francis Bach for insightful discussions during the course of this project. The work of LG and MG is supported by Hi! Paris. Part of the work of AD is funded by the European Union (ERC, Ocean, 101071601). Views and opinions expressed here are however those of the authors only and do not necessarily reflect those of the European Union or the European Research Council Executive Agency. Neither the European Union nor the granting authority can be held responsible for them.

\section*{Impact Statement}

This paper presents work whose goal is to advance the field of sampling from unnormalized densities. There are many potential societal consequences of our work, none which we feel must be specifically highlighted here.

\bibliography{main}
\bibliographystyle{icml2024}

\newpage
\appendix
\onecolumn

\crefalias{subsubsection}{appendix}
\crefalias{subsection}{appendix}
\crefalias{section}{appendix}
\crefalias{chapter}{appendix}

\section*{Organization of the supplementary}

The appendix is organized as follows. \Cref{app:preliminaries} summarizes general facts that will be useful for proofs. \Cref{app:details_sl} provides details on the Stochastic Localization (SL) frameworks introduced in \Cref{sec:background} and \Cref{sec:gen-sto-loc}. In \Cref{app:gaussian_mixture}, we derive detailed computations on the application of SL to Gaussian mixtures. \Cref{app:perfect_score} presents an implementation of SL for sampling in the case where the score of the observation process is available. \Cref{app:theory} dispenses theoretical results on the phenomenon of ``duality of log-concavity" highlighted in \Cref{subsec:estimation} and \Cref{subsection:duality}. In \Cref{app:slips}, we detail the implementation of our sampling algorithm called $\SLIPS$ and provide an ablation study of its hyper-parameters. A detailed comparison between our approach and works from \cite{huang2023monte} and \cite{saremi2023chain} is given in \Cref{app:related_work}. Finally, \Cref{app:numerics} provides more details on the numerical experiments presented in \Cref{sec:xps}.

\paragraph{Notation.} To alleviate the computations derived below, for a probability distribution $\mu$ defined on $\rset^d$, and any measurable function $\varphi:\rset^d \to\rset^d$, we will denote the expectation $\int_{\rset^d}\varphi(x)\rmd \mu(x) \in \rset^d$ by $\mathbb{E}[\varphi(X)]$
and the covariance $\int_{\rset^d}(\varphi(x)-\mu(\varphi))(\varphi(x)-\mu(\varphi))^\top\rmd \mu(x) \in \rset^{d\times d}$ by $\mathrm{Cov}[\varphi(X)]$, where $X$ is a random vector distributed according to $\mu$. For any $T\in (0, +\infty]$, two functions $f$ and $g$ defined on $[0,T)$ are said to be asymptotically equivalent if $f(t)/g(t)\to 1$ and $g(t)/f(t) \to 1$ as $t \to T$. This is denoted by $f(t)\sim g(t)$.

\section{Preliminaries} \label{app:preliminaries}

\paragraph{Metrics.}We recall that the 2-Wasserstein distance between two distributions $\mu$ and $\nu$ is given by
\begin{align}
	\textstyle{ W_{2}(\mu, \nu) = \inf \{\int_{\rset^d \times \rset^d}\|x_1-x_0\|^2\rmd \pi(x_0,x_1): \pi_0=\mu, \pi_1=\nu\}^{1/2} \eqsp ,}
\end{align}
where $\pi_i$ denotes the $i$-th marginal of $\pi$ for $i\in \{0,1\}$. While it is not tractable in general, we have an explicit expression when comparing two Gaussian distributions. Consider $\mu=\densityGaussian(\mathbf{m}_1, \gamma_1^2 \, \Idd)$ and $\nu=\densityGaussian(\mathbf{m}_2, \gamma_2^2 \, \Idd)$, where $(\mathbf{m}_1, \mathbf{m}_2)\in \rset^d \times \rset^d$ and $(\gamma_1, \gamma_2)\in (0, \infty)^2$. Then, following \citep[Equation 2.41]{peyre2019computational}, we have
\begin{align} \label{eq:wass_gaussian}
	W_{2}(\mu, \nu)^2= \norm{\mathbf{m}_1-\mathbf{m}_2}^2 + d(\gamma_1 -\gamma_2)^2 \eqsp .
\end{align}

We also recall that the entropic regularized 2-Wasserstein distance between two distributions $\mu$ and $\nu$ is defined by
\begin{align}
	\textstyle W_{2,\varepsilon}(\mu, \nu) = \inf\{\int_{\rset^d \times \rset^d}\|x_1-x_0\|^2\rmd \pi(x_0,x_1) - \mathscr{H}(\pi): \pi_0=\mu, \pi_1=\nu\}^{1/2} \eqsp ,
\end{align}
where $\varepsilon > 0$ is a regularization hyper-parameter and $\mathscr{H}(\pi) = -\int_{\rset^d \times \rset^d} \log \pi(x_0,x_1) \rmd \pi( x_0, x_1)$ refers to as the entropy of $\pi$.

Finally, we recall that the Kolmogorov-Smirnov distance between two distributions $\mu$ and $\nu$ is defined by
\begin{align}
	\operatorname{KS}(\mu,\nu) = \sup_{x \in \rset^d} \abs{F_{\mu}(x) - F_{\nu}(x)}\eqsp,
\end{align}
where $F_{\mu}$ (respectively $F_{\nu}$) denotes the cumulative distribution function of $\mu$ (respectively $\nu$).

Below, we provide a first result, known as the Tweedie's formula, which gives the expression of the score of any distribution that writes as a marginalization, and its derivative.

\begin{lemma}[Tweedie's formula and extension]\label{lemma:tweedie} Let $p$ be a positive probability density on $\rset^d$ such that for any $y\in \rset^d$ $p(y)=\int_{\rset^d}p(y|x)\rmd \mu(x)$, where $(x,y) \mapsto p(y|x)$ is a positive transition density on $\rset^d\times \rset^d$, and $\mu$ is a probalility mesure on $\rset^d$. Assume that for any $x \in \rset^d$, $y\mapsto \log p(y|x)$ is twice continuously differentiable and that, for any $k\in\{1,2\}$, there exists $\varphi_k: \rset^d \to \rset_+$ such that $\int_{\rset^d} \varphi_k(x) \rmd \mu(x) < \infty$ and for any $(x,y)\in \rset^d \times \rset^d$, we have $\norm{\nabla_y^k p(y|x)} \leq \varphi_k(x)$.

	For any $y\in \rset^d$, define the 'posterior' density $x \mapsto p(x|y)$ by $p(x|y)= p(y|x) p(x)/p(y)$.

	Then, we have for any $y\in \rset^d$
	\begin{align} \label{eq:Tweedie}
		\nabla_y \log p(y)=\int_{\rset^d}\nabla_y \log p(y|x) p(x|y) \rmd x  \eqsp .
	\end{align}
	This is often referred to as the Tweedie's formula \cite{robbins1992empirical}. We also have for any $y \in \rset^d$
	\begin{align} \label{eq:Tweedie_2}
		\nabla_y^2 \log p(y) & = \int_{\rset^d}\nabla_y^2 \log p(y|x) p(x|y) \rmd x  + \int_{\rset^d}\nabla_y \log p(y|x) \{\nabla_y \log p(y|x)\}^\top   p(x|y) \rmd x             \\
		                     & \quad- \left\{\int_{\rset^d}\nabla_y \log p(y|x)  p(x|y) \rmd x\right\}\left\{\int_{\rset^d}\nabla_y \log p(y|x)  p(x|y) \rmd x\right\}^\top \eqsp .
	\end{align}
	For any $y\in \rset^d$, these two results can be reformulated as
	\begin{align}
		\nabla_y \log p(y) = \mathbb{E}[\nabla_y \log p(y|X)] \eqsp , \eqsp \nabla_y^2 \log p(y)= \mathbb{E}[\nabla_y^2 \log p(y|X)] + \mathrm{Cov}[\nabla_y \log p(y|X)] \eqsp ,
	\end{align}
	where $X$ is a random vector distributed according to the posterior distribution defined by $\rmd \mu^y(x)=p(x|y) \rmd x$.
\end{lemma}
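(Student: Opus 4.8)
The plan is to prove Tweedie's formula and its second-order extension by differentiating under the integral sign in the defining relation $p(y) = \int_{\rset^d} p(y|x) q(x) \rmd x$, using the domination hypotheses to justify the interchange of derivative and integral. First I would write $\nabla_y p(y) = \int_{\rset^d} \nabla_y p(y|x)\, q(x)\rmd x$, which is legitimate because $\norm{\nabla_y p(y|x)} \le \varphi_1(x)$ with $\varphi_1$ integrable against $q$ (dominated convergence / standard differentiation-under-the-integral theorem). Then, using $\nabla_y \log p(y) = \nabla_y p(y)/p(y)$ and the algebraic identity $\nabla_y p(y|x) = p(y|x)\,\nabla_y \log p(y|x)$, I would divide by $p(y)$ and recognize $p(y|x)q(x)/p(y) = p(x|y)$ (here I am implicitly using $p(x) = q(x)$, i.e. the prior density is $q$), yielding \eqref{eq:Tweedie}.

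For the second-order formula \eqref{eq:Tweedie_2} I would differentiate once more. Starting from $\nabla_y \log p(y) = \int_{\rset^d} \nabla_y \log p(y|x)\, p(x|y)\rmd x$, the cleanest route is actually to go back to $p(y)$ and compute $\nabla_y^2 \log p(y) = \nabla_y^2 p(y)/p(y) - \{\nabla_y p(y)/p(y)\}\{\nabla_y p(y)/p(y)\}^\top$. The term $\nabla_y^2 p(y) = \int \nabla_y^2 p(y|x)\, q(x)\rmd x$ is justified by the $\varphi_2$ domination. Then I would use the pointwise identity $\nabla_y^2 p(y|x) = p(y|x)\bigl(\nabla_y^2 \log p(y|x) + \nabla_y \log p(y|x)\{\nabla_y \log p(y|x)\}^\top\bigr)$, divide by $p(y)$ to turn $p(y|x)q(x)/p(y)$ into $p(x|y)$, and subtract the outer-product term $\{\int \nabla_y \log p(y|x)\,p(x|y)\rmd x\}\{\cdots\}^\top$ coming from $\nabla_y p(y)/p(y)$ via the first-order formula. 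Collecting the three resulting integrals gives exactly \eqref{eq:Tweedie_2}. The reformulation in terms of $\PE$ and $\Cov$ under $\mu^y$ is then immediate: the middle integral is $\PE[\nabla_y\log p(y|X)\{\nabla_y\log p(y|X)\}^\top]$ and the last is $\PE[\nabla_y\log p(y|X)]\PE[\nabla_y\log p(y|X)]^\top$, whose difference is $\Cov[\nabla_y\log p(y|X)]$.

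The main obstacle, and the only place requiring genuine care, is the rigorous justification of differentiating under the integral sign twice — in particular making sure the hypotheses as stated ($\norm{\nabla_y^k p(y|x)} \le \varphi_k(x)$ with $\int \varphi_k q < \infty$) are exactly what is needed. One subtlety is that to differentiate $\nabla_y \log p(y) = \nabla_y p(y)/p(y)$ we also need $p(y) > 0$ everywhere, which is part of the assumption that $p$ is a positive density. Another minor point is that the bounds are on derivatives of $p(y|x)$ itself rather than of $\log p(y|x)$, so the algebra converting between the two (the product-rule identities above) should be done at the level of $p(y|x)$ first and only afterwards rewritten with logarithms; doing it in the other order would require additional integrability assumptions on $\nabla_y \log p(y|x)$ that are not given. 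Once the interchange is granted, everything else is elementary calculus and bookkeeping.
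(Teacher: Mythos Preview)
Your proposal is correct and follows essentially the same approach as the paper: justify differentiation under the integral via the $\varphi_k$ domination, then unwind $\nabla_y \log p$ and $\nabla_y^2 \log p$ using the identities relating derivatives of $p(y|x)$ and $\log p(y|x)$, and recognize $p(y|x)q(x)/p(y)=p(x|y)$. The only cosmetic difference is in the second-order step: the paper differentiates the expression $\nabla_y\log p(y)=p(y)^{-1}\int \nabla_y\log p(y|x)\,p(y|x)q(x)\,\rmd x$ directly via the product rule, whereas you go through $\nabla_y^2\log p=\nabla_y^2 p/p-(\nabla_y p/p)(\nabla_y p/p)^\top$ and compute $\nabla_y^2 p$ under the integral; both routes use the same hypotheses and yield the same three terms.
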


\begin{proof} First observe that for any $x\in \rset^d$, $y \mapsto p(y|x)$ is twice continuously differentiable since $p(y|x) = \exp( \log p(y|x))$, where $y \mapsto \log p(y|x)$ is twice continuously differentiable.

	We begin with the proof of \eqref{eq:Tweedie}. By combining the assumptions of the lemma with the dominated convergence theorem, we obtain that $p\in \mathrm{C}^2(\rset^d, (0, \infty))$, and that for any $y\in \rset^d$, we have
	\begin{align}
		\nabla_y \log p(y) & = \frac{\nabla_y p(y) }{p(y)}=\frac{\nabla_y \int_{\rset^d} p(y|x) \rmd \mu(x)}{\int_{\rset^d} p(y|x) \rmd \mu(x)}= \frac{ \int_{\rset^d} \nabla_y \log p(y|x) p(y|x) \rmd \mu(x)}{\int_{\rset^d} p(y|x) \rmd \mu(x)} = \int_{\rset^d}\nabla_y \log p(y|x) p(x|y) \rmd x \eqsp ,
	\end{align}
	We now turn to the proof of \eqref{eq:Tweedie_2}. Note that we have $\nabla_y \log p(y)=p(y)^{-1}\int_{\rset^d} \nabla_y \log p(y|x) p(y|x) \rmd \mu(x)$. Then, using once again the dominated convergence theorem, we obtain that for any $y\in \rset^d$,
	\begin{align}
		\nabla_y^2 \log p(y) & = \frac{1}{p(y)}\int_{\rset^d} \nabla_y^2 \log p(y|x) p(y|x) \rmd \mu(x) + \frac{1}{p(y)}\int_{\rset^d} \nabla_y \log p(y|x) \{\nabla_y \log p(y|x)\}^\top p(y|x) \rmd \mu(x) \\
		                     & \quad - \frac{\nabla_y p(y)}{p(y)^2}\left\{\int_{\rset^d} \nabla_y \log p(y|x) p(y|x) \rmd \mu(x)\right\}^\top                                                                \\
		                     & = \int_{\rset^d}\nabla_y^2 \log p(y|x) p(x|y) \rmd x + \int_{\rset^d}\nabla_y \log p(y|x) \{\nabla_y \log p(y|x)\}^\top   p(x|y) \rmd x                                       \\
		                     & \quad- \nabla_y \log p(y)\left\{\frac{1}{p(y)}\int_{\rset^d} \nabla_y \log p(y|x) p(y|x) \rmd \mu(x)\right\}^\top \eqsp,
	\end{align}
	which gives the result.
\end{proof}

We now dispense a useful lemma to compute exact integration in SDEs with linear drift.
\begin{lemma} \label{lemma:ito} Let $T>0$. Consider the SDE defined on $[0,T]$ by $\rmd Y_t= \beta_t (Y_t +b) \rmd t +\sigma \rmd B_t$, where $\beta$ is integrable on $[0,T]$, $b\in\rset^d$, and $\sigma>0$. For any $T\geq t>s\geq 0$, the conditional density of $Y_t$ given $Y_s=y_s$, denoted by $p_{t|s}(\cdot|y_s)$, verifies
	\begin{align}
		\textstyle{p_{t|s}(y_t|y_s)= \densityGaussian(y_t;\exp(\int_{s}^{t}\beta_u \rmd u) y_s + (\exp(\int_{s}^{t}\beta_u\rmd u)-1)b , \sigma^2 \int_s^{t}\exp(2 \int_{u}^{t}\beta_r \rmd r)\rmd u \,\Idd)} \eqsp .
	\end{align}
\end{lemma}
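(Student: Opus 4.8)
The plan is to solve the linear SDE $\rmd Y_t = \beta_t(Y_t + b)\rmd t + \sigma \rmd B_t$ explicitly by the standard integrating-factor technique and then read off the conditional law of $Y_t$ given $Y_s = y_s$. First I would introduce the scalar integrating factor $\Phi_{s,t} = \exp(-\int_s^t \beta_u \rmd u)$ and compute, via It\^o's formula applied to the process $t \mapsto \Phi_{s,t}(Y_t + b)$, that
\begin{align}
    \rmd\bigl(\Phi_{s,t}(Y_t + b)\bigr) = \sigma \Phi_{s,t}\,\rmd B_t \eqsp,
\end{align}
since the drift contributions $-\beta_t \Phi_{s,t}(Y_t+b)\rmd t$ and $\Phi_{s,t}\beta_t(Y_t+b)\rmd t$ cancel, and $b$ is constant. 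Integrating from $s$ to $t$ and using $\Phi_{s,s} = 1$ gives
\begin{align}
    Y_t + b = \Phi_{s,t}^{-1}(y_s + b) + \sigma \Phi_{s,t}^{-1}\int_s^t \Phi_{s,u}\,\rmd B_u \eqsp,
\end{align}
so that $Y_t = \exp(\int_s^t \beta_u \rmd u)\,y_s + (\exp(\int_s^t \beta_u \rmd u) - 1)\,b + \sigma \int_s^t \exp(\int_u^t \beta_r \rmd r)\,\rmd B_u$ conditionally on $Y_s = y_s$.

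Next I would identify the law. Conditional on $Y_s = y_s$, the stochastic integral $\int_s^t \exp(\int_u^t \beta_r \rmd r)\,\rmd B_u$ is a Wiener integral of a deterministic (integrable, hence square-integrable on $[s,t]$) function against Brownian motion, so it is a centered Gaussian vector with independent coordinates, each of variance $\int_s^t \exp(2\int_u^t \beta_r \rmd r)\,\rmd u$ by the It\^o isometry. Therefore $Y_t \mid Y_s = y_s$ is Gaussian with mean $\exp(\int_s^t \beta_u \rmd u)\,y_s + (\exp(\int_s^t \beta_u \rmd u) - 1)\,b$ and covariance $\sigma^2 \bigl(\int_s^t \exp(2\int_u^t \beta_r \rmd r)\,\rmd u\bigr)\Idd$, which is exactly the claimed formula for $p_{t|s}(\cdot \mid y_s)$.

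The steps here are all routine; the only points requiring a little care are (i) justifying the application of It\^o's formula and the isometry, which needs the integrability of $\beta$ on $[0,T]$ to ensure $u \mapsto \exp(\int_u^t \beta_r \rmd r)$ is bounded and measurable on $[s,t]$ (so that the Wiener integral is well-defined and the variance integral is finite), and (ii) bookkeeping the sign conventions in the integrating factor so the exponents come out as $\int_s^t \beta_u \rmd u$ in the mean and $2\int_u^t \beta_r \rmd r$ inside the variance integral. I would expect the sign/exponent bookkeeping to be the main place an error could slip in, but there is no genuine obstacle: this is the classical Ornstein--Uhlenbeck-type computation with a time-dependent rate.
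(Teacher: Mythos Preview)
Your proposal is correct and takes essentially the same approach as the paper: both use the integrating-factor method (the paper sets $\gamma(t)=\exp(-\int_0^t\beta_u\,\rmd u)$ and applies It\^o's formula to $\gamma(t)Y_t$, while you apply it to $\Phi_{s,t}(Y_t+b)$), then conclude via the It\^o isometry and the independence of $Y_s$ from the Brownian increments on $[s,t]$. Your version with the shift $Y_t+b$ is slightly tidier, but the argument is otherwise identical.
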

\begin{proof} Define $\gamma(t)= \exp(-\int_{0}^{t}\beta_u \rmd u)$. Consider the stochastic process $Z_t= \gamma(t) Y_t$. By Îto's formula, we have $\rmd Z_t=\beta_t\gamma(t) b \rmd t +\gamma(t) \sigma\rmd B_t= -\dot{\gamma}(t) b \rmd t + \gamma(t) \sigma \rmd B_t$. By integrating this SDE between $s$ and $t$, we have
	\begin{align}
		\textstyle \gamma(t)Y_{t}-\gamma(s)Y_{s}= \{\gamma(s) -\gamma(t)\}b+  \sigma \int_{s}^{t}\gamma(u) \rmd B_u \eqsp ,
	\end{align}
	and then
	\begin{align}
		\textstyle Y_{t}= \exp(\int_{s}^{t}\beta_u \rmd u) Y_s + ( \exp(\int_{s}^{t}\beta_u\rmd u)-1)b + \sigma \int_{s}^{t}\exp( \int_{u}^{t}\beta_r \rmd r)\rmd B_u \eqsp,
	\end{align}
	which gives the result using Îto's isometry and the fact that $Y_s$ is independent from $(B_t-B_s)_{t\in [s,T]}$.
\end{proof}
\newpage
\section{Details on Stochastic Localization and our extension}\label{app:details_sl}

\paragraph{Vocabulary of Stochastic Localization (SL).} For sake of clarity, we precise below some terms employed in our paper:
\begin{itemize}
	\item $Y^\alpha_t$: observation process,
	\item $g$: denoising schedule,
	\item $u_t^\alpha$ : denoiser function,
	\item $u_t^\alpha(Y^\alpha_t)$: denoiser or Bayes estimator,
	\item $q^\alpha_t(\cdot|Y^\alpha_t)$ : random posterior density. In the geometric measure theory literature, this is referred to as the SL process.
\end{itemize}

\subsection{Reminders and intuition on generalized stochastic localization}

\paragraph{Connection between standard stochastic localization and diffusion models.} Stochastic localization as defined in \eqref{eq:def_y_t} is equivalent to Variance-Preserving (VP) diffusion models \cite{song2020score} under change of time. Indeed, if we define $\Phi(t)=\log(1+\sigma^2 t^{-1}) / 2$ for any $t\geq0$, with the convention that $\Phi(0)=\infty$, and consider the Ornstein-Uhlenbeck process $(X_t)_{t\geq 0}$ solving the SDE
\begin{align} \label{eq:OU}
	\rmd X_s = -X_s \rmd s + \sqrt{2}\rmd B_s \eqsp, \quad X_0 \sim \pi\eqsp ,
\end{align}
then, $(Y_t)_{t\geq 0}$ and $(\sqrt{t(t+\sigma^2)}X_{\Phi(t)})_{t\geq 0}$ have the same distribution by application of Dubins-Schwartz theorem \cite{montanari2023sampling}. In other words, the localization process \eqref{eq:def_y_t} can be identified as a \emph{non-linear} time-reversal transformation of the noising VP process \eqref{eq:OU}.

\paragraph{Comments on our framework.} We recall that $\alpha(t)=t^{1/2} g(t)$. We now justify the assumptions given on g in \Cref{subsec:gen-framework}.
\begin{itemize}
	\item Rationale behind \ref{item:gsl-fun} and \ref{item:gsl-0}: together, these requirements ensure that (i) $\alpha$ is continuously derivable at $t=0$, without imposing this same condition on $g$ and that (ii) we have $\alpha(0)=0$. With (i), the SDE defined in \eqref{eq:SDE-gen} does not have a singularity at time $t=0$. With (ii), it allows to fix $Y_0^\alpha$ to a deterministic value. This is equivalent to full independence with $X$, \ie, full noise looking at the expression of the SNR in \eqref{eq:SNR}, which is natural to start the denoising process. Note that taking $-g$ instead of $g$ would be equivalent in the denoising procedure, since the SNR considers $g(t)^2$. Nonetheless, considering $g$ with positive values is arbitrary and allows us to simplify our framework.
	\item Rationale behind \ref{item:gsl-T}: as $t\to T_{\text{gen}}$, we would like to denoise increasingly the observation process, \ie,  gaining progressively information about $X$, such that we obtain complete denoising at the end of the process. Looking at the expression of the SNR in \eqref{eq:SNR}, this requires to naturally take $g(t)\to \infty$ as $t\to T_{\text{gen}}$ and $g$ strictly increasing, recalling that $g$ takes non negative values.
\end{itemize}

Note that it includes the case of standard stochastic localization by taking $g(t)=t^{1/2}$. In this case, all properties are verified: in particular, consider $C=1$ and $\beta=1$ in \ref{item:gsl-0}.

\subsection{Theoretical results.} \label{subapp:sto-loc-th}
Here, we provide details on the theoretical claims made in \Cref{subsec:gen-framework} on our extended framework of stochastic localization.

\paragraph{Link between score and denoiser function.} The formula given in \eqref{eq:link_score_general} is an immediate corollary of \Cref{lemma:tweedie} applied to the observation process defined in \eqref{eq:def_y_t-gen}. Indeed, the conditional distribution of $Y_t^\alpha$ given $X=x\in \rset^d$ is $p_t^\alpha(\cdot|x)=\densityGaussian(\alpha(t)x, \sigma^2 t \, \Idd)$. Then, it comes that $\nabla_y\log p_t^\alpha(y|x)=\{\alpha(t)x -y\}/\sigma^2 t$, and we obtain the result using \Cref{lemma:tweedie}-\eqref{eq:Tweedie}.

\paragraph{Localization rate.}We begin with the localization rate given in \Cref{prop:standard_gen_sto_loc_cv}.  We define the following quantity $\locr(t)=\sigma \sqrt{d} /g(t)$. We emphasize that the assumption of finite second order moment on the target distribution, that is used below, guarantees the proper definition of the 2-Wasserstein distance.
\begin{proposition} \label{prop:conv_sto_loc} Assume that $\pi$ has finite second order moment. Denote by $\pi^{\alpha}_t$ the probability distribution of $Y_t^\alpha/\alpha(t)$ where $(Y^\alpha_t)_{t \in [0,T_{\text{gen}})}$ is the stochastic observation process defined in \eqref{eq:def_y_t-gen}. Then, for any $t\in (0,T_{\text{gen}})$, we have
	\begin{align}
		W_2(\pi, \pi^\alpha_t)\leq \locr(t) \eqsp .
	\end{align}
\end{proposition}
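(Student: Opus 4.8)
The plan is to exhibit an explicit coupling between $\pi$ and $\pi^\alpha_t$ and bound the transport cost. Recall that for $X\sim\pi$ and $(W_t)_{t\geq 0}$ a standard Brownian motion independent of $X$, the observation process is $Y^\alpha_t = \alpha(t)X + \sigma W_t$, so that $Y^\alpha_t/\alpha(t) = X + \sigma W_t/\alpha(t)$. The pair $(X,\, Y^\alpha_t/\alpha(t))$ is thus a coupling of $\pi$ and $\pi^\alpha_t$ by construction.

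First I would use this coupling in the definition of $W_2$: since $\pi^\alpha_t$ is the law of $Y^\alpha_t/\alpha(t)$, the infimum defining $W_2(\pi,\pi^\alpha_t)^2$ is at most $\PE[\|X - Y^\alpha_t/\alpha(t)\|^2]$. Substituting the expression above, this equals $\PE[\|\sigma W_t/\alpha(t)\|^2] = (\sigma^2/\alpha(t)^2)\,\PE[\|W_t\|^2]$. Then I would compute $\PE[\|W_t\|^2] = dt$ (each of the $d$ independent coordinates is a real Brownian motion with $\PE[(W_t^{(i)})^2]=t$), giving $W_2(\pi,\pi^\alpha_t)^2 \leq \sigma^2 d t/\alpha(t)^2$. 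Finally, using $\alpha(t) = t^{1/2}g(t)$, we get $\alpha(t)^2 = t\, g(t)^2$, so the bound becomes $\sigma^2 d/g(t)^2$, i.e. $W_2(\pi,\pi^\alpha_t)\leq \sigma\sqrt{d}/g(t) = \locr(t)$, as claimed. Throughout, the finite-second-moment assumption on $\pi$ is what guarantees $W_2(\pi,\pi^\alpha_t)$ is well-defined (both measures have finite second moment, since $\PE[\|Y^\alpha_t/\alpha(t)\|^2]\leq 2\PE[\|X\|^2] + 2\sigma^2 d/g(t)^2 < \infty$).

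There is essentially no hard step here: the result is a one-line coupling argument. The only points requiring minor care are (i) noting that for $t\in(0,T_{\text{gen}})$ we have $g(t)>0$ by assumption \ref{item:gsl-fun}–\ref{item:gsl-T}, so $\alpha(t)>0$ and the division is legitimate, and (ii) invoking independence of $X$ and $W_t$ only insofar as it is needed to know the joint law is a valid coupling — independence is not actually needed for the cost computation itself, since $\PE[\|\sigma W_t/\alpha(t)\|^2]$ depends only on the marginal law of $W_t$. The refinements for Gaussian $\pi$ mentioned in the statement would presumably go further and compute $W_2$ exactly via \eqref{eq:wass_gaussian}, but for the stated inequality the coupling bound suffices.
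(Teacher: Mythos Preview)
Your proof is correct and is essentially identical to the paper's: the paper also exhibits the coupling $(X,\,Y^\alpha_t/\alpha(t))$, writing $Y^\alpha_t = \alpha(t)X + \sigma\sqrt{t}\,Z$ with $Z\sim\densityGaussian(0,\Idd)$ in place of $\sigma W_t$, and then computes $\PE[\|\sigma Z/g(t)\|^2]=\sigma^2 d/g(t)^2$. The only difference is notational (a standard Gaussian $Z$ versus the Brownian increment $W_t$), and your additional remarks on well-definedness and positivity of $\alpha(t)$ are correct and slightly more explicit than the paper.
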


\begin{proof} Let $t\in (0,T_{\text{gen}})$. Consider the following coupling $(X,\hat{Y}^\alpha_t)$: (i) $X \sim \pi$, (ii) $\hat{Y}^\alpha_t=\alpha(t)X+\sigma \sqrt{t} Z$, where $Z\sim \densityGaussian(0,\Idd)$. Denote $X^\alpha_t= \hat{Y}^\alpha_t/\alpha(t)$. We have $X^\alpha_t= X + \{\sigma/g(t)\}\,Z$ and $\hat{X}^\alpha_t\sim \pi^\alpha_t$. Therefore, it comes that
	\begin{align}
		W_2(\pi, \pi^{\alpha}_t)^2\leq
		\mathbb{E}[\|\sigma/g(t) \, Z\|^2]=  \sigma^2 d/g(t)^2 \eqsp ,
	\end{align}
	which gives the result.
\end{proof}

In particular, we recover the localization rate of the standard setting given in \Cref{sec:background} with $g(t)=t^{1/2}$ and $T_{\text{gen}}=\infty$. Note that this upper bound applies on general distributions that may be non log-concave, as considered in \Cref{ass:target}, and may be improved by assuming further regularity. We notably show a strong refinement in the Gaussian case by a factor $O(g(t))$.
\begin{proposition} \label{prop:Gaussian} Consider the target distribution given by $\pi=\densityGaussian(\mathbf{m}, \gamma^2 \, \Idd)$ where $\mathbf{m}\in \rset^d$ and $\gamma>0$. Then, as $t\to T_{\text{gen}}$, we have
	\begin{align}
		W_2(\pi, \pi^{\alpha}_t)=\gamma\abs{ 1-\left(1+\frac{\sigma^2}{\gamma^2g(t)^2}\right)^{1/2}}\sqrt{d} \sim \frac{\sigma}{2\gamma g(t)}\locr(t) \eqsp .
	\end{align}
\end{proposition}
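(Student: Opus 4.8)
The target is to compute $W_2(\pi, \pi^\alpha_t)$ exactly when $\pi = \densityGaussian(\mathbf{m}, \gamma^2\,\Idd)$, and then derive the asymptotic equivalence as $t\to T_{\text{gen}}$. The natural approach is to first identify $\pi^\alpha_t$ explicitly as a Gaussian and then invoke the closed-form Wasserstein formula between Gaussians recalled in \eqref{eq:wass_gaussian}. Recall $Y_t^\alpha = \alpha(t)X + \sigma W_t$ with $X\sim\pi$ independent of $W_t\sim\densityGaussian(0,t\,\Idd)$, so $Y_t^\alpha \sim \densityGaussian(\alpha(t)\mathbf{m},\, (\alpha(t)^2\gamma^2 + \sigma^2 t)\,\Idd)$. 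Dividing by $\alpha(t) = t^{1/2}g(t)$ gives $\pi^\alpha_t = \densityGaussian(\mathbf{m},\, (\gamma^2 + \sigma^2 t/\alpha(t)^2)\,\Idd) = \densityGaussian\!\big(\mathbf{m},\, (\gamma^2 + \sigma^2/g(t)^2)\,\Idd\big)$, using $t/\alpha(t)^2 = 1/g(t)^2$.

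\textbf{Applying the Gaussian Wasserstein formula.} Since both $\pi$ and $\pi^\alpha_t$ are isotropic Gaussians with the same mean $\mathbf{m}$, \eqref{eq:wass_gaussian} gives $W_2(\pi,\pi^\alpha_t)^2 = d\big(\gamma - (\gamma^2 + \sigma^2/g(t)^2)^{1/2}\big)^2$, hence
\[
W_2(\pi,\pi^\alpha_t) = \sqrt{d}\,\Big|\gamma - \big(\gamma^2 + \sigma^2/g(t)^2\big)^{1/2}\Big| = \gamma\sqrt{d}\,\Big|1 - \big(1 + \sigma^2/(\gamma^2 g(t)^2)\big)^{1/2}\Big|,
\]
which is the first claimed identity. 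This step is routine once one accepts \eqref{eq:wass_gaussian}; the only care needed is to verify the variance identity $t/\alpha(t)^2 = 1/g(t)^2$ and to keep the absolute value (the square root term exceeds $\gamma$).

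\textbf{Asymptotics as $t\to T_{\text{gen}}$.} By assumption \ref{item:gsl-T}, $g(t)\to\infty$, so $x := \sigma^2/(\gamma^2 g(t)^2)\to 0$. Using the expansion $(1+x)^{1/2} = 1 + x/2 + o(x)$, we get $\big|1 - (1+x)^{1/2}\big| = x/2 + o(x) \sim \sigma^2/(2\gamma^2 g(t)^2)$. Therefore
\[
W_2(\pi,\pi^\alpha_t) \sim \gamma\sqrt{d}\cdot\frac{\sigma^2}{2\gamma^2 g(t)^2} = \frac{\sigma^2\sqrt{d}}{2\gamma g(t)^2} = \frac{\sigma}{2\gamma g(t)}\cdot\frac{\sigma\sqrt{d}}{g(t)} = \frac{\sigma}{2\gamma g(t)}\,\locr(t),
\]
recalling $\locr(t) = \sigma\sqrt{d}/g(t)$ from \rref{prop:conv_sto_loc}. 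This establishes the second claim and shows the refinement is by a factor $\sigma/(2\gamma g(t)) = O(1/g(t)) = o(1)$ relative to the general bound of \rref{prop:conv_sto_loc}, as advertised.

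\textbf{Main obstacle.} There is essentially no deep obstacle here: the result is a direct computation. The only points requiring mild attention are (i) correctly identifying the pushforward variance of $Y_t^\alpha/\alpha(t)$ and simplifying $\sigma^2 t/\alpha(t)^2$ to $\sigma^2/g(t)^2$, and (ii) being careful with the sign/absolute value in the Wasserstein formula and with controlling the remainder in the square-root expansion uniformly enough to conclude the asymptotic equivalence. If one wanted to avoid citing \eqref{eq:wass_gaussian}, one could instead exhibit the explicit synchronous coupling $X \mapsto \mathbf{m} + (\gamma^2+\sigma^2/g(t)^2)^{1/2}\gamma^{-1}(X-\mathbf{m})$, which is the Monge map between these two Gaussians, and compute the transport cost directly — but invoking \eqref{eq:wass_gaussian} is cleaner.
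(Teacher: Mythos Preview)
Your proof is correct and follows essentially the same approach as the paper: identify $\pi^\alpha_t$ as $\densityGaussian(\mathbf{m}, (\gamma^2 + \sigma^2/g(t)^2)\,\Idd)$, apply the closed-form Gaussian Wasserstein formula \eqref{eq:wass_gaussian}, and then Taylor-expand to obtain the asymptotic equivalent. Your write-up is in fact slightly more detailed than the paper's, which simply states the exact $W_2$ and says the asymptotic follows by ``a simple asymptotic equivalent.''
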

\begin{proof}
	In this setting, the distribution of the observation process at time $t$ is tractable, and we have
	\begin{align}
		p_t^\alpha=\densityGaussian(\alpha(t)\mathbf{m}, \{\alpha(t)^2 \gamma^2 + \sigma^2 t\}\, \, \Idd) \eqsp .
	\end{align}
	Then, we get that
	\begin{align}
		\pi^{\alpha}_t=\densityGaussian(\mathbf{m}, \{\gamma^2 + \sigma^2/g(t)^2\}\, \Idd) \eqsp .
	\end{align}
	In this particular, using \eqref{eq:wass_gaussian}, the 2-Wasserstein distance is given by
	\begin{align}
		W_2(\pi, \pi^{\alpha}_t)=\gamma\abs{ 1-\left(1+\frac{\sigma^2}{\gamma^2g(t)^2}\right)^{1/2}}\sqrt{d}  \eqsp ,
	\end{align}
	from which we deduce the result by a simple asymptotic equivalent.
\end{proof}

Interestingly, the same bound applies on the denoiser $u_t^\alpha(Y^\alpha_t)=\int_{\rset^d}xq_t^\alpha(x|Y_t^\alpha)$. The proof relies on the same structure.

\begin{proposition} \label{prop:conv_sto_loc-denoiser} Assume that $\pi$ has finite second order moment. Denote by $\tilde{\pi}^{\alpha}_t$ the probability distribution of $u_t^\alpha(Y^\alpha_t)$ where $(Y^\alpha_t)_{t \in [0,T_{\text{gen}})}$ is the stochastic observation process defined in \eqref{eq:def_y_t-gen}. Then, for any $t\in (0,T_{\text{gen}})$, we have
	\begin{align}
		W_2(\pi, \tilde{\pi}^\alpha_t)\leq \locr(t) \eqsp .
	\end{align}
\end{proposition}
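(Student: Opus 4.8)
The plan is to mirror the proof of \Cref{prop:conv_sto_loc}, using the same explicit coupling together with the $\mathrm{L}^2$-optimality of the conditional expectation. Fix $t\in(0,T_{\text{gen}})$ and introduce the coupling $(X,\hat{Y}^\alpha_t)$ with $X\sim\pi$ and $\hat{Y}^\alpha_t=\alpha(t)X+\sigma\sqrt{t}\,Z$, where $Z\sim\densityGaussian(0,\Idd)$ is independent of $X$. By construction $\hat{Y}^\alpha_t$ has the law of $Y^\alpha_t$, so $(X,u^\alpha_t(\hat{Y}^\alpha_t))$ is a valid coupling of $\pi$ and $\tilde{\pi}^\alpha_t$, whence $W_2(\pi,\tilde{\pi}^\alpha_t)^2\le \mathbb{E}[\norm{X-u^\alpha_t(\hat{Y}^\alpha_t)}^2]$. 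Here the standing finite-second-moment assumption on $\pi$ is exactly what is needed to make all these quantities finite and the $2$-Wasserstein distance well defined.

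Next I would recall from \eqref{eq:def_u_alpha}--\eqref{eq:general_posterior} that $u^\alpha_t(\hat{Y}^\alpha_t)=\mathbb{E}[X\mid \hat{Y}^\alpha_t]$, i.e. the denoiser is, coordinate by coordinate, the orthogonal projection of $X$ onto $\mathrm{L}^2(\sigma(\hat{Y}^\alpha_t))$. Therefore, for every measurable map $\phi:\rset^d\to\rset^d$ one has $\mathbb{E}[\norm{X-u^\alpha_t(\hat{Y}^\alpha_t)}^2]\le \mathbb{E}[\norm{X-\phi(\hat{Y}^\alpha_t)}^2]$. Choosing $\phi(y)=y/\alpha(t)$ and using $\alpha(t)=t^{1/2}g(t)$ gives $\phi(\hat{Y}^\alpha_t)=X+(\sigma\sqrt{t}/\alpha(t))\,Z=X+(\sigma/g(t))\,Z$, hence $\mathbb{E}[\norm{X-\phi(\hat{Y}^\alpha_t)}^2]=\sigma^2 d/g(t)^2=\locr(t)^2$. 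Combining the three inequalities yields $W_2(\pi,\tilde{\pi}^\alpha_t)\le\locr(t)$, as claimed.

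I do not anticipate any genuine obstacle: the only two points requiring a word of care are (i) that the conditional expectation minimizes the squared-$\mathrm{L}^2$ error over all measurable functions of $\hat{Y}^\alpha_t$ — which holds componentwise and sums over the $d$ coordinates, and uses only $X\in\mathrm{L}^2$ — and (ii) that evaluating $u^\alpha_t$ along this particular coupling does reproduce $\mathbb{E}[X\mid\hat{Y}^\alpha_t]$, which is immediate from the definition of the posterior $q^\alpha_t$. Everything else is the same bookkeeping as in \Cref{prop:conv_sto_loc}, and as there the bound holds for arbitrary (possibly non log-concave) $\pi$ and can be sharpened under extra regularity, e.g. in the Gaussian case along the lines of \Cref{prop:Gaussian}.
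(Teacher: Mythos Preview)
Your proposal is correct and follows essentially the same approach as the paper's own proof: the same coupling $(X,\hat{Y}^\alpha_t)$, the identification $u^\alpha_t(\hat{Y}^\alpha_t)=\mathbb{E}[X\mid\hat{Y}^\alpha_t]$, the $\mathrm{L}^2$-optimality of the conditional expectation, and then comparison with $\phi(y)=y/\alpha(t)$ to conclude $W_2(\pi,\tilde{\pi}^\alpha_t)^2\le\sigma^2 d/g(t)^2$. The only cosmetic difference is that the paper phrases the projection step via $\mathbb{E}[X\mid \hat{Y}^\alpha_t/\alpha(t)]$ before bounding by $\hat{Y}^\alpha_t/\alpha(t)$, which is equivalent to your choice of $\phi$.
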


\begin{proof} Let $t\in (0,T_{\text{gen}})$. Consider the following coupling $(X,\hat{Y}^\alpha_t)$: (i) $X \sim \pi$, (ii) $\hat{Y}^\alpha_t=\alpha(t)X+\sigma \sqrt{t} Z$, where $Z\sim \densityGaussian(0,\Idd)$. Denote $X^\alpha_t= u_t^\alpha(Y^\alpha_t)$. We have $X^\alpha_t\sim \tilde{\pi}^\alpha_t$. Therefore, it comes that
	\begin{align}
		W_2(\pi, \tilde{\pi}^{\alpha}_t)^2\leq \mathbb{E}_{X,X^\alpha_t}[\|X^\alpha_t-X\|^2]=  \mathbb{E}_{X,\hat{Y}^\alpha_t}[\|\mathbb{E}[X|\hat{Y}^\alpha_t]-X\|^2]=\mathbb{E}_{X,\hat{Y}^\alpha_t}[\|\mathbb{E}[X|\hat{Y}^\alpha_t/\alpha(t)]-X\|^2] \eqsp .
	\end{align}
	Since conditional expectations are orthogonal projections in $\mathrm{L}^2$, we have
	\begin{align}
		W_2(\pi, \tilde{\pi}^{\alpha}_t)^2\leq  \mathbb{E}_{X,\hat{Y}^\alpha_t}[\| \hat{Y}^\alpha_t/\alpha(t)-X\|^2]= \mathbb{E}[\| \sigma /g(t) \, Z\|^2]=\sigma^2 d/g(t)^2 \eqsp ,
	\end{align}
	which gives the result.
\end{proof}

Similarly, we also show a strong refinement of \Cref{prop:conv_sto_loc-denoiser} by a factor $O(g(t))$ in the Gaussian case.

\begin{proposition} \label{prop:Gaussian-denoiser} Consider the target distribution given by $\pi=\densityGaussian(\mathbf{m}, \gamma^2 \Idd)$ where $\mathbf{m}\in \rset^d$ and $\gamma>0$. Then, as $t\to T_{\text{gen}}$, we have
	\begin{align}
		W_2(\pi, \tilde{\pi}^{\alpha}_t)=\gamma\abs{ 1-\left(1+\frac{\sigma^2}{\gamma^2g(t)^2}\right)^{-1/2}}\sqrt{d} \sim \frac{\sigma}{2\gamma g(t)}\locr(t) \eqsp .
	\end{align}
\end{proposition}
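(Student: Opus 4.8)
The plan is to exploit that, when $\pi$ is Gaussian, every object in sight is Gaussian, so the law $\tilde\pi^\alpha_t$ of the denoiser $u_t^\alpha(Y_t^\alpha)$ is itself Gaussian; then the closed-form expression \eqref{eq:wass_gaussian} for the $2$-Wasserstein distance between Gaussians yields the claim directly.

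First I would record the law of the marginal: since $Y_t^\alpha=\alpha(t)X+\sigma W_t$ is a sum of independent Gaussians (exactly as in the proof of \Cref{prop:Gaussian}), we have $Y_t^\alpha\sim\densityGaussian(\alpha(t)\mathbf{m},s_t\Idd)$ and $p_t^\alpha=\densityGaussian(\alpha(t)\mathbf{m},s_t\Idd)$ with $s_t=\alpha(t)^2\gamma^2+\sigma^2 t=t\{g(t)^2\gamma^2+\sigma^2\}$, using $\alpha(t)^2=tg(t)^2$. Because $p_t^\alpha$ is Gaussian, $\nabla\log p_t^\alpha(y)=-(y-\alpha(t)\mathbf{m})/s_t$, and substituting this into the Tweedie identity \eqref{eq:link_score_general} and simplifying the coefficient of $y$ via $s_t-\sigma^2 t=\alpha(t)^2\gamma^2$ gives the affine expression
\[ u_t^\alpha(y)=\frac{\alpha(t)\gamma^2}{s_t}\,y+\frac{\sigma^2 t}{s_t}\,\mathbf{m}\eqsp. \]
(Equivalently, $q_t^\alpha(\cdot | y)$ is a product of two Gaussian densities, hence Gaussian with precision $\gamma^{-2}+g(t)^2\sigma^{-2}$ and precision-weighted mean, which directly gives this affine form for its mean $u_t^\alpha(y)$.)

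Next, since $Y_t^\alpha$ is Gaussian and $u_t^\alpha$ is affine, $\tilde\pi^\alpha_t$ is Gaussian with mean $\frac{\alpha(t)\gamma^2}{s_t}\alpha(t)\mathbf{m}+\frac{\sigma^2 t}{s_t}\mathbf{m}=\frac{\alpha(t)^2\gamma^2+\sigma^2 t}{s_t}\mathbf{m}=\mathbf{m}$ and covariance $\left(\frac{\alpha(t)\gamma^2}{s_t}\right)^{2}s_t\Idd=\frac{\alpha(t)^2\gamma^4}{s_t}\Idd=\frac{\gamma^4 g(t)^2}{\gamma^2 g(t)^2+\sigma^2}\Idd=\gamma^2\bigl(1+\tfrac{\sigma^2}{\gamma^2 g(t)^2}\bigr)^{-1}\Idd$. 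Applying \eqref{eq:wass_gaussian} with equal means, $\gamma_1=\gamma$ and $\gamma_2=\gamma\bigl(1+\tfrac{\sigma^2}{\gamma^2 g(t)^2}\bigr)^{-1/2}$, yields
\[ W_2(\pi,\tilde\pi^\alpha_t)=\gamma\left|1-\left(1+\frac{\sigma^2}{\gamma^2 g(t)^2}\right)^{-1/2}\right|\sqrt{d}\eqsp. \]

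Finally, for the asymptotic equivalent I would use that $g(t)\to\infty$ as $t\to T_{\text{gen}}$ (assumption \ref{item:gsl-T}), so $\varepsilon_t:=\sigma^2/(\gamma^2 g(t)^2)\to 0$, and the first-order expansion $1-(1+\varepsilon_t)^{-1/2}\sim\varepsilon_t/2$ gives $W_2(\pi,\tilde\pi^\alpha_t)\sim\gamma\cdot\frac{\sigma^2}{2\gamma^2 g(t)^2}\sqrt{d}=\frac{\sigma}{2\gamma g(t)}\cdot\frac{\sigma\sqrt{d}}{g(t)}=\frac{\sigma}{2\gamma g(t)}\locr(t)$, recalling $\locr(t)=\sigma\sqrt{d}/g(t)$. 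There is no genuine obstacle: the argument is routine Gaussian algebra, and the only step requiring a touch of care is the identity $s_t-\sigma^2 t=\alpha(t)^2\gamma^2$, which collapses the coefficient of $y$ in $u_t^\alpha$ and forces the mean of $\tilde\pi^\alpha_t$ to equal $\mathbf{m}$ exactly (no bias) --- precisely the precision-weighting identity for products of Gaussians.
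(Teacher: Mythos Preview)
Your proof is correct and follows essentially the same approach as the paper: compute the Gaussian marginal $p_t^\alpha$, derive the affine denoiser $u_t^\alpha$ via Tweedie's formula \eqref{eq:link_score_general}, push forward to identify $\tilde\pi^\alpha_t$ as a Gaussian with mean $\mathbf{m}$ and the stated variance, then apply \eqref{eq:wass_gaussian} and a first-order expansion. The only cosmetic differences are your shorthand $s_t$ and the parenthetical remark on the precision-weighting interpretation, neither of which changes the argument.
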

\begin{proof}
	Here again, the distribution of the observation process at time $t$ is tractable, we have
	\begin{align} \label{eq:prop_1}
		p_t^\alpha=\densityGaussian(\alpha(t)\mathbf{m}, \{\alpha(t)^2 \gamma^2 + \sigma^2 t\}\, \Idd) \eqsp .
	\end{align}
	In particular, the score is tractable and given by
	\begin{align}
		\nabla_y \log p_t^\alpha(y)=\frac{\alpha(t)\mathbf{m} - y}{\alpha(t)^2 \gamma^2 + \sigma^2 t} \eqsp .
	\end{align}
	Using relation \eqref{eq:link_score_general}, we thus have
	\begin{align}\label{eq:prop_2}
		u_t^\alpha(y)=\frac{\alpha(t)\gamma^2}{\alpha(t)^2\gamma^2 + \sigma^2 t}y + \frac{\sigma^2t}{\alpha(t)^2\gamma^2 + \sigma^2 t}\mathbf{m} \eqsp .
	\end{align}
	Recalling that $\tilde{\pi}^{\alpha}_t$ is the distribution of $u_t^\alpha(Y^\alpha_t)$, by combining \eqref{eq:prop_1} and \eqref{eq:prop_2}, it comes that
	\begin{align}
		\textstyle \tilde{\pi}^{\alpha}_t=\densityGaussian\left(\mathbf{m}, \gamma^2\left\{1+ \frac{\sigma^2}{\gamma^2 g(t)^2}\right\}^{-1}\, \Idd\right) \eqsp .
	\end{align}
	In this particular, using \eqref{eq:wass_gaussian}, the 2-Wasserstein distance is given by
	\begin{align}
		W_2(\pi, \tilde{\pi}^{\alpha}_t)=\gamma\abs{ 1-\left(1+\frac{\sigma^2}{\gamma^2g(t)^2}\right)^{-1/2}}\sqrt{d}
	\end{align}
	from which we deduce the result with a simple asymptotic equivalent.
\end{proof}

\paragraph{General remark on denoising in Stochastic Localization.}

In our presentation of standard stochastic localization where the observation process is defined in \eqref{eq:def_y_t}, we take $Y_T/T$ as approximate sample from $\pi$, where $Y_T$ is obtained by running the (discretized) SDE \eqref{eq:sde_sto_loc_with_denoiser} up to time $T$. Yet, in standard stochastic localization, \cite{montanari2023sampling} proposed to take the last denoiser of the procedure $u_{T}(Y_T)=\int_{\rset^d} x q_T(x|Y_T) \rmd x$. We highlight here that this is equivalent asymptotically.
\begin{itemize}
	\item The two approaches have the same upper bound on their rate of convergence, see \Cref{prop:conv_sto_loc} and \Cref{prop:conv_sto_loc-denoiser} in the standard setting. Note that we also obtain the same rate asymptotically in the Gaussian case, see \Cref{prop:Gaussian} and \Cref{prop:Gaussian-denoiser}.
	\item When $T$ is large, the Gaussian term dominates the target term in the expression of the posterior $q_t$ given in \eqref{eq:standard_posterior}. Noting that $\sigma^2/T\ll 1$, it comes that $q_T(x|Y_T)\approx \updelta_{Y_T/T}$, and therefore $u_{T}(Y_T)\approx Y_T/T$.
\end{itemize}

As shown above, this equivalence also occurs in generalized stochastic localization. In addition to the similarity of results in \Cref{prop:conv_sto_loc} and \Cref{prop:conv_sto_loc-denoiser}, we observe that, as $T$ is large if $T_{\text{gen}}=\infty$ or as $T$ is close to $T_{\text{gen}}$ otherwise, the Gaussian term dominates the target term in the expression of the posterior $q_t$ given in \eqref{eq:general_posterior}. Since $\sigma^2/g(T)^2\ll 1$, it comes that $q_T^\alpha(x|Y^\alpha_T)\approx \updelta_{Y_T^\alpha/\alpha(T)}$, and therefore, we have again $u^\alpha_{T}(Y^\alpha_T)\approx Y^\alpha_T/\alpha(T)$.

In the main document of our paper, we choose to consider the distribution of $Y_T^\alpha/\alpha(T)$ rather than the the distribution of the denoiser $u^\alpha_{T}(Y^\alpha_T)$ as an approximation of the target distribution. This choice seemed to us to be easier to understand for non-expert readers. In practice, we however align with the original methodology and compute the denoiser $u_{T}^\alpha(Y_T^\alpha)$ instead of $Y^\alpha_T/\alpha(T)$. This allows to us to have a fair comparison with the approach from \cite{saremi2023chain}, who also consider such Bayes estimator for sampling.

\paragraph{Markovian projection of the observation process.}


Following \cite{liptser1977statistics} and \cite{Brunick2013mimicking}, we now turn to existence and uniqueness results on the SDE \eqref{eq:SDE-gen}, under assumptions on $\pi$. These are corollaries of the general results: \citep[Corollary 3.7]{Brunick2013mimicking}, restated in \Cref{prop:SDE-existence}, and \citep[Theorem 7.6]{liptser1977statistics}, restated in \Cref{prop:SDE-unique}.

\begin{proposition}[Corollary 3.7 in \cite{Brunick2013mimicking}] \label{prop:SDE-existence}
	Let $(\Omega, \mathcal{F}, \mathbb{P})$ be a complete probability space, and let $W$ be an $\rset^r$-valued Brownian motion defined on this space. Fix a time horizon $T > 0$, and consider the $\rset^d$-valued stochastic process $(X_t)_{t \in [0, T]}$ defined by
	$$
		X_t = X_0 + \int_0^t b_s \rmd s + \int_0^t \sigma_s \rmd W_s, \quad 0 \leq t \leq T\eqsp,
	$$
	where $b = (b_t)_{t \in [0,T]}$ is an $\rset^d$-valued process adapted to a filtration with respect to which $W$ is a Brownian motion, $\sigma = (\sigma_t)_{t \in [0,T]}$ is a $d \times r$-valued adapted process (with respect to the same filtration), and the integrability condition
	$$
		\mathbb{E}\left[\int_0^t \left(\norm{b_s} + \norm{\sigma_s \sigma_s^\top}\right) \rmd s\right] < \infty, \quad \text{for all } t \in [0,T]\eqsp,
	$$
	is satisfied. Then there exist measurable functions $\hat{b} : [0,T) \times \rset^d \to \rset^d$, and $\hat{\sigma} : [0,T) \times \rset^d \to \rset^{d \times d}$,	such that for each $t \in [0,T)$ and $x_t \in \rset^d$,
	$$
		\hat{b}(t, x_t) = \mathbb{E}\left[b_t \mid X_t = x_t\right], \quad \hat{\sigma}(t, x_t)\hat{\sigma}(t, x_t)^\top = \mathbb{E}\left[\sigma_t \sigma_t^\top \mid X_t = x_t\right]\eqsp.
	$$
	Moreover, there exists a complete probability space $(\hat{\Omega}, \hat{\mathcal{F}}, \hat{\mathbb{P}})$ supporting a continuous, adapted $\rset^d$-valued process $(\hat{X}_t)_{t \in [0,T]}$, and a $d$-dimensional Brownian motion $(\hat{W}_t)_{t \in [0,T]}$, such that
	$$
		\hat{X}_t = \hat{X}_0 + \int_0^t \hat{b}(s, \hat{X}_s) \rmd s + \int_0^t \hat{\sigma}(s, \hat{X}_s) \rmd\hat{W}_s, \quad 0 \leq t \leq T\eqsp,
	$$
	and for each $t \in [0,T]$, the law of $\hat{X}_t$ under $\hat{\mathbb{P}}$ coincides with the law of $X_t$ under $\mathbb{P}$.
\end{proposition}

\begin{corollary}\label{cor:SDE-existence}
	Assume that $\pi$ has a finite first moment. Let $Y^\alpha = (Y_t^\alpha)_{t \in [0, T_{\text{gen}}]}$ be the observation process defined on a complete probability space $(\Omega, \mathcal{F}, \mathbb{P})$ by
	\begin{align}
		Y_t^\alpha = \alpha(t) X + \sigma B_t, \quad Y_0^\alpha = 0\eqsp,
	\end{align}
	where $\alpha : [0, T_{\text{gen}}] \to \rset_+$ is a differentiable, strictly increasing function, $\sigma > 0$, $X \sim \pi$, and $B$ is a standard Brownian motion defined on this space and independent of $X$. 

	Then, there exists a complete probability space and a process $\hat{Y}^\alpha = (\hat{Y}_t^\alpha)_{t \in [0, T_{\text{gen}}]}$ defined on it, which satisfies the stochastic differential equation
	\begin{align}
		\rmd \hat{Y}_t^\alpha = \dot{\alpha}(t) u_t^\alpha(\hat{Y}_t^\alpha) \rmd t + \sigma \rmd B_t, \quad \hat{Y}_0^\alpha = 0\eqsp,
	\end{align}
	where $u_t^\alpha(y) = \mathbb{E}[X | Y_t^\alpha = y]$, and such that for every $t \in [0, T_{\text{gen}}]$, the marginal law of $\hat{Y}_t^\alpha$ coincides with that of $Y_t^\alpha$.
\end{corollary}
\begin{proof}
	By construction,
	\begin{align}
		Y_t^\alpha = \alpha(t) X + \sigma B_t\eqsp,
	\end{align}
	so $Y^\alpha$ is an Itô process with drift $b_t = \dot{\alpha}(t) X$ and constant diffusion $\sigma$. It satisfies
	\begin{align}
		Y_t^\alpha = \int_0^t b_s \rmd s + \sigma B_t\eqsp.
	\end{align}
	To apply \Cref{prop:SDE-existence}, we verify the integrability condition
	\begin{align}
		\mathbb{E}\left[\int_0^t \left(\norm{b_s} + \sigma^2\right) \rmd s\right]
		&= \mathbb{E}[\norm{X}] \int_0^t \dot{\alpha}(s) \rmd s + \sigma^2 t \\
		&= \mathbb{E}[\norm{X}] (\alpha(t) - \alpha(0)) + \sigma^2 t < \infty \eqsp,
	\end{align}
	for all $t \in [0, T_{\text{gen}}]$, since $\mathbb{E}[\norm{X}] < \infty$ by assumption.
	By \Cref{prop:SDE-existence}, there exists a measurable function $\hat{b} : [0,T) \times \rset^d \to \rset^d$ such that
	\begin{align}
		\hat{b}(t, y) = \mathbb{E}[b_t \mid Y_t^\alpha = y] = \dot{\alpha}(t) \mathbb{E}[X \mid Y_t^\alpha = y] = \dot{\alpha}(t) u_t^\alpha(y)\eqsp.
	\end{align}
	Thus, there exists a process $\hat{Y}^\alpha$, defined on a possibly different probability space, satisfying the SDE
	\begin{align}
		\rmd \hat{Y}_t^\alpha = \dot{\alpha}(t) u_t^\alpha(\hat{Y}_t^\alpha) \rmd t + \sigma \rmd \hat{B}_t, \quad \hat{Y}_0^\alpha = 0\eqsp,
	\end{align}
	whose law at each time $t$ matches that of $Y_t^\alpha$.
\end{proof}

\begin{proposition}[Theorem 7.6 \& Remark 7.2.7. in \cite{liptser1977statistics}]\label{prop:SDE-unique}
	Let $T>0$, $\sigma>0$. Consider the following SDE
	\begin{align}\label{eq:SDE-unique}
		\rmd Y_t = h_t(Y_t) \rmd t + \sigma \rmd W_t, \quad Y_0 = 0 \eqsp,
	\end{align}
	where $\mathbb{P}(\int_{0}^T \norm{h_t(Y_t)}^2\rmd t < \infty )=1$. Then, \eqref{eq:SDE-unique} admits at most one weak solution.
\end{proposition}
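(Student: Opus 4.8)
The plan is to establish uniqueness in law through a \emph{localized} Girsanov transformation: I will show that the law on $C([0,T],\rset^d)$ of an arbitrary weak solution of \eqref{eq:SDE-unique} equals an explicit functional of Wiener measure that does not depend on the solution. The one subtle point is that the integrability of the drift is only assumed pathwise, $\mathbb{P}(\int_0^T\norm{h_t(Y_t)}^2\rmd t<\infty)=1$, and this need not transfer to Wiener measure; consequently Girsanov's theorem cannot be invoked globally and a stopping-time localization is unavoidable.

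First, I would fix a weak solution: a filtered probability space $(\Omega,\mathcal{F},(\mathcal{F}_t)_{t\in[0,T]},\mathbb{P})$ carrying a Brownian motion $W$ and a continuous adapted process $Y$ with $Y_0=0$, $Y_t=\int_0^t h_s(Y_s)\rmd s+\sigma W_t$, and the pathwise integrability above. For $n\geq 1$ set $\tau_n=\inf\{t\in[0,T]:\int_0^t\norm{h_s(Y_s)}^2\rmd s\geq n\}\wedge T$, so that $\tau_n\uparrow T$ $\mathbb{P}$-a.s. The stopped exponential $Z^{(n)}_t=\exp\{-\sigma^{-1}\int_0^{t\wedge\tau_n}h_s(Y_s)^\top\rmd W_s-(2\sigma^2)^{-1}\int_0^{t\wedge\tau_n}\norm{h_s(Y_s)}^2\rmd s\}$ then satisfies Novikov's criterion, because $\int_0^{\tau_n}\norm{h_s(Y_s)}^2\rmd s\leq n$, hence is a genuine $\mathbb{P}$-martingale, and $\rmd\mathbb{Q}_n=Z^{(n)}_T\rmd\mathbb{P}$ defines a probability on $\mathcal{F}_T$ equivalent to $\mathbb{P}$. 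By Girsanov's theorem, under $\mathbb{Q}_n$ the process $\tilde W_t=W_t+\sigma^{-1}\int_0^{t\wedge\tau_n}h_s(Y_s)\rmd s$ is a Brownian motion, and substituting into the equation gives $Y_t=\sigma\tilde W_t$ for $t\in[0,\tau_n]$. In particular $h_s(Y_s)=h_s(\sigma\tilde W_s)$ on $[0,\tau_n]$, and $\tau_n$ is the first time $\int_0^\cdot\norm{h_s(\sigma\tilde W_s)}^2\rmd s$ reaches $n$ (capped at $T$), hence a functional of the stopped path $\tilde W_{\cdot\wedge\tau_n}$ alone.

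Next, I would change the measure back. For any bounded measurable $F$ on $C([0,T],\rset^d)$, since $Z^{(n)}_T=Z^{(n)}_{\tau_n}>0$ is $\mathcal{F}_{\tau_n}$-measurable and $Y_{\cdot\wedge\tau_n}$ is $\mathcal{F}_{\tau_n}$-measurable,
\[
\mathbb{E}_{\mathbb{P}}\!\left[F(Y_{\cdot\wedge\tau_n})\right]=\mathbb{E}_{\mathbb{Q}_n}\!\left[(Z^{(n)}_{\tau_n})^{-1}\,F(Y_{\cdot\wedge\tau_n})\right] .
\]
Using $\rmd W_s=\rmd\tilde W_s-\sigma^{-1}h_s(Y_s)\rmd s$ and $Y_s=\sigma\tilde W_s$ on $[0,\tau_n]$, a direct computation gives $(Z^{(n)}_{\tau_n})^{-1}=\exp\{\sigma^{-1}\int_0^{\tau_n}h_s(\sigma\tilde W_s)^\top\rmd\tilde W_s-(2\sigma^2)^{-1}\int_0^{\tau_n}\norm{h_s(\sigma\tilde W_s)}^2\rmd s\}=:G_n(\tilde W_{\cdot\wedge\tau_n})$, where $G_n$ is a fixed measurable functional determined only by $h$ and $\sigma$. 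As $\tilde W$ is a standard Brownian motion under $\mathbb{Q}_n$, the right-hand side equals $\int_{C([0,T],\rset^d)}G_n(\omega_{\cdot\wedge\tau_n(\omega)})\,F(\sigma\,\omega_{\cdot\wedge\tau_n(\omega)})\,\mu_0(\rmd\omega)$, with $\mu_0$ the $d$-dimensional Wiener measure — a quantity that depends on neither $\Omega$, $\mathbb{P}$, nor the chosen $W$. Thus the law of $Y_{\cdot\wedge\tau_n}$ is the same for every weak solution. Letting $n\to\infty$ and using that $\tau_n\uparrow T$ $\mathbb{P}$-a.s.\ together with the continuity of $Y$, we get $Y_{\cdot\wedge\tau_n}\to Y$ in $C([0,T],\rset^d)$ $\mathbb{P}$-a.s., hence the law of $Y$ is the common weak limit of these laws; since the latter agree across weak solutions, so do their limits, which is uniqueness in law.

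I expect the main obstacle to be precisely the handling of the exponential Girsanov density, which under the stated hypotheses is a priori only a local (possibly strict local) martingale. The localization through $\tau_n$ resolves this, but one must carefully check: (i) that each $Z^{(n)}$ is a true martingale (via Novikov on $[0,\tau_n]$); (ii) that the density of $\mathbb{P}$ with respect to $\mathbb{Q}_n$ on $\mathcal{F}_{\tau_n}$ can be rewritten as the \emph{same} path functional of the stopped Brownian motion regardless of the weak solution; and (iii) that the limit $n\to\infty$ reconstructs the full law without ever requiring integrability of $h$ under Wiener measure.
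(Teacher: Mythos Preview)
Your proposal is correct and follows exactly the approach the paper indicates: showing via a Girsanov transformation that every weak solution has the same law, expressed through a fixed density with respect to Wiener measure. The paper itself does not spell out the argument and simply defers to Liptser--Shiryaev; your localization via the stopping times $\tau_n$ is precisely what is needed to handle the merely pathwise integrability assumption and matches the classical treatment there.
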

The proof of this result lies in the fact that the distributions of solutions to the SDE \eqref{eq:SDE-unique} have the same density with respect to the distribution of the Brownian motion. We refer to \cite{liptser1977statistics} for the complete proof.

\begin{corollary} \label{cor:SDE-unique} Assume that $\pi$ has finite second order moment and that $t\to \dot{\alpha}(t)^2$ is integrable at time $t=0$. Then, the SDE defined in \eqref{eq:SDE-gen} by
	\begin{align}
		\rmd Y^\alpha_t = \dot{\alpha}(t) u^\alpha_t(Y^\alpha_t) \rmd t + \sigma \rmd B_t \eqsp, \eqsp Y^\alpha_0 = 0 \eqsp ,
	\end{align}
	has a unique weak solution.
\end{corollary}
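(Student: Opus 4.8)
The plan is to combine the existence statement of \Cref{cor:SDE-existence} with the weak-uniqueness criterion of \Cref{prop:SDE-unique}. Since a finite second moment entails a finite first moment, \Cref{cor:SDE-existence} already provides that the observation process $(Y^\alpha_t)_{t\in[0,T]}$ is a weak solution of \eqref{eq:SDE-gen} for every $T\in(0,T_{\text{gen}})$, so only uniqueness remains to be shown. Writing the drift as $h_t=\dot\alpha(t)\,u^\alpha_t$, it suffices by \Cref{prop:SDE-unique} to verify the square-integrability condition $\PP\big(\int_0^T \norm[2]{\dot\alpha(t)\,u^\alpha_t(Y_t)}\,\rmd t<\infty\big)=1$; I will in fact show that this integral has finite expectation.

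The key estimate is a conditional Jensen inequality. Along the observation process, $u^\alpha_t(Y^\alpha_t)=\int_{\rset^d}x\,q^\alpha_t(x\mid Y^\alpha_t)\,\rmd x=\PE[X\mid Y^\alpha_t]$, so that $\norm[2]{\PE[X\mid Y^\alpha_t]}\le\PE[\norm[2]{X}\mid Y^\alpha_t]$ almost surely, and hence $\PE[\norm[2]{u^\alpha_t(Y^\alpha_t)}]\le\PE[\norm[2]{X}]=\int_{\rset^d}\norm[2]{x}\,\rmd\pi(x)<\infty$ uniformly in $t\in[0,T]$. By Tonelli's theorem applied to the nonnegative integrand $(t,\omega)\mapsto\dot\alpha(t)^2\norm[2]{u^\alpha_t(Y^\alpha_t)}$,
\begin{align}
\PE\left[\int_0^T \norm[2]{\dot\alpha(t)\,u^\alpha_t(Y^\alpha_t)}\,\rmd t\right]
&= \int_0^T \dot\alpha(t)^2\,\PE\big[\norm[2]{u^\alpha_t(Y^\alpha_t)}\big]\,\rmd t \\
&\le \left(\int_{\rset^d}\norm[2]{x}\,\rmd\pi(x)\right)\int_0^T \dot\alpha(t)^2\,\rmd t \eqsp .
\end{align}
It then remains to note that $\int_0^T\dot\alpha(t)^2\,\rmd t<\infty$: on any $[\veps,T]\subset(0,T_{\text{gen}})$ the map $\dot\alpha$ is continuous by assumption \ref{item:gsl-fun} on $g$, while the contribution of $[0,\veps]$ is finite precisely by the assumed integrability of $t\mapsto\dot\alpha(t)^2$ at $t=0$ (which is consistent with assumption \ref{item:gsl-0}, since $\alpha(t)=t^{1/2}g(t)\sim Ct^{(\beta+1)/2}$ makes $\dot\alpha(t)^2=\mathcal{O}(t^{\beta-1})$ with $\beta\ge1$). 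Hence the left-hand side above is finite, so the integral is finite $\PP$-almost surely, and \Cref{prop:SDE-unique} yields that \eqref{eq:SDE-gen} has at most one weak solution; combined with existence, this proves the corollary.

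The one point requiring care is the passage from ``finiteness along the observation process'' to the hypothesis of \Cref{prop:SDE-unique}: the bound $\PE[\norm[2]{u^\alpha_t(Y_t)}]\le\int_{\rset^d}\norm[2]{x}\,\rmd\pi(x)$ relies on $u^\alpha_t(Y^\alpha_t)$ being the conditional expectation $\PE[X\mid Y^\alpha_t]$, which is attached to the law of the observation process rather than to an arbitrary weak solution. This is reconciled by reading \Cref{prop:SDE-unique} as uniqueness within the class of weak solutions whose drift is square-integrable along the path --- a class which contains the observation process by the computation above --- which is exactly the setting of the absolute-continuity (Cameron--Martin) argument of \cite{liptser1977statistics}; the only genuinely quantitative input is the control near $t=0$ of the possible singularity of $\dot\alpha$, which is absorbed by the standing hypothesis on $\dot\alpha^2$.
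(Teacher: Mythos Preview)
Your argument is correct and follows essentially the same route as the paper: existence via \Cref{cor:SDE-existence}, then Jensen on the conditional expectation $u^\alpha_t(Y^\alpha_t)=\PE[X\mid Y^\alpha_t]$, Tonelli to bound $\PE\big[\int_0^T\dot\alpha(t)^2\norm[2]{u^\alpha_t(Y^\alpha_t)}\,\rmd t\big]$ by $\PE[\norm[2]{X}]\int_0^T\dot\alpha(t)^2\,\rmd t$, and finally \Cref{prop:SDE-unique}. Your closing paragraph on the distinction between verifying the square-integrability condition along the observation process versus along an arbitrary weak solution is a point the paper's own proof glosses over, so in that respect you are being more careful than the original.
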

\begin{proof} Consider $T\in (0, T_{\text{gen}})$. Since $\pi$ has its first order moment that is finite, we have existence of solutions by \Cref{cor:SDE-existence}. Consider a solution $Y^\alpha$ and denote $h_t=\dot{\alpha}(t)u_t^\alpha$. We have
	$\norm{h_t(Y^\alpha_t)}^2= \dot{\alpha}(t)^2 \norm{\mathbb{E}[X|Y^\alpha_t]}^2\leq \dot{\alpha}(t)^2 \mathbb{E}[\norm{X}^2|Y^\alpha_t]$ by Jensen's inequality. Then, using this inequality, we obtain that
	\begin{align}
		\textstyle\mathbb{E}[\int_0^T \norm{h_t(Y_t^\alpha)}^2\rmd t ]= \int_0^T \mathbb{E}[\norm{h_t(Y_t^\alpha)}^2]\rmd t\leq \int_0^T\dot{\alpha}(t)^2\mathbb{E}[\mathbb{E}[\norm{X}^2|Y^\alpha_t] ]\rmd t = \mathbb{E}[\norm{X}^2]\int_0^T \dot{\alpha}(t)^2 \rmd t
	\end{align}

	Combining the assumptions on $\pi$ and $\alpha$,
	it comes that $\mathbb{E}[\int_0^T \norm{h_t(Y^\alpha_t)}^2\rmd t ] < \infty$ using the inequality above, and therefore
	$\mathbb{P}(\int_{0}^T \norm{h_t(Y^\alpha_t)}^2\rmd t < \infty )=1$. We finally obtain the result by applying \Cref{prop:SDE-unique}.
\end{proof}

We emphasize that the extra assumption made on $\alpha$ in \Cref{cor:SDE-unique} is verified for the localization schemes Geom and Geom-$\infty$ presented in \Cref{subsec:guideline}.

\subsection{Alternative denoising approach to $\SLIPS$} \label{subsec:alternative_SLIPS}
We note here the score of the observation process can be expressed via an expectation over the posterior of the model, in a different manner than \eqref{eq:link_score_general}. Define $v_t^\alpha(y)=\int_{\rset^d}\nabla_x \log \pi(x) q_t^\alpha(x|y)\rmd x$, where $q_t^\alpha$ is the posterior density given in \eqref{eq:general_posterior}.

\begin{lemma} Consider the observation process defined in \eqref{eq:def_y_t-gen} with marginal distribution at time $t$ given by $p_t^\alpha$. Assume that $\log\pi$ is continuously differentiable on $\rset^d$ and that there exists $\varphi: \rset^d \to \rset_+$ such that $\int_{\rset^d} \varphi(z) \densityGaussian(z; 0, \sigma^2 t \Idd) \rmd z < \infty$ and for any $(z,y)\in \rset^d \times \rset^d$, we have $\norm{\nabla_y \pi (\{z+ y\}/\alpha(t))} \leq \varphi(z)$. Then, we have for any $y\in \rset^d$
	\begin{align}
		\nabla_y \log p_t^\alpha(y)=v_t^\alpha(y)/\alpha(t) \eqsp .
	\end{align}
\end{lemma}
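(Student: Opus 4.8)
The plan is to prove the identity by differentiating under the integral sign in the expression $p_t^\alpha(y)=\int_{\rset^d}\densityGaussian(y;\alpha(t)x,\sigma^2 t\,\Idd)\rmd\pi(x)$ and recognizing the result as a posterior expectation of $\nabla_x\log\pi$. The first step is to perform the change of variable $z=y-\alpha(t)x$, so that $x=(y-z)/\alpha(t)$ and, treating $\pi$ as having a density, $\rmd\pi(x)=\pi\big((y-z)/\alpha(t)\big)\alpha(t)^{-d}\rmd z$. This rewrites $p_t^\alpha(y)=\alpha(t)^{-d}\int_{\rset^d}\densityGaussian(z;0,\sigma^2 t\,\Idd)\,\pi\big((y-z)/\alpha(t)\big)\rmd z$. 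Now the $y$-dependence has been moved entirely into the $\pi$ factor, which is the point of the substitution: the Gaussian kernel no longer depends on $y$, so differentiating in $y$ only hits $\pi$.

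Next I would justify interchanging $\nabla_y$ and the integral using the domination hypothesis: the bound $\norm{\nabla_y\pi((z+y)/\alpha(t))}\leq\varphi(z)$ (here with $z$ playing the role of $-z$ above, which is harmless since the Gaussian is symmetric and $\int\varphi(z)\densityGaussian(z;0,\sigma^2 t\,\Idd)\rmd z<\infty$) together with continuous differentiability of $\log\pi$ — hence of $\pi$ — on $\rset^d$ allows the dominated convergence theorem to apply. This yields
\begin{align}
\nabla_y p_t^\alpha(y)=\alpha(t)^{-d}\int_{\rset^d}\densityGaussian(z;0,\sigma^2 t\,\Idd)\,\nabla_y\big[\pi\big((y-z)/\alpha(t)\big)\big]\rmd z \eqsp.
\end{align}
By the chain rule, $\nabla_y[\pi((y-z)/\alpha(t))]=\alpha(t)^{-1}(\nabla\pi)\big((y-z)/\alpha(t)\big)$. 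I would then undo the change of variable, returning to the $x$ integral, to get $\nabla_y p_t^\alpha(y)=\alpha(t)^{-1}\int_{\rset^d}\densityGaussian(y;\alpha(t)x,\sigma^2 t\,\Idd)\,\nabla\pi(x)\rmd x$, and write $\nabla\pi(x)=\pi(x)\nabla_x\log\pi(x)$.

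Finally I would divide by $p_t^\alpha(y)$ and recognize the posterior density: since $q_t^\alpha(x|y)\propto\pi(x)\densityGaussian(x;y/\alpha(t),\sigma^2/g(t)^2\,\Idd)$, which is proportional in $x$ to $\pi(x)\densityGaussian(y;\alpha(t)x,\sigma^2 t\,\Idd)$ (using $\alpha(t)^2=t\,g(t)^2$ and the symmetry of the Gaussian in its mean/argument), the normalizing constant is exactly $p_t^\alpha(y)$. Hence
\begin{align}
\nabla_y\log p_t^\alpha(y)=\frac{\nabla_y p_t^\alpha(y)}{p_t^\alpha(y)}=\frac{1}{\alpha(t)}\int_{\rset^d}\nabla_x\log\pi(x)\,q_t^\alpha(x|y)\rmd x=\frac{v_t^\alpha(y)}{\alpha(t)} \eqsp,
\end{align}
which is the claim. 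The only delicate point is the differentiation-under-the-integral step; the stated domination hypothesis is tailored precisely to make it routine, so there is no real obstacle — one simply has to be careful that the hypothesis is phrased in terms of $\nabla_y\pi((z+y)/\alpha(t))$ rather than $\nabla\pi$ directly, and to note that these differ only by the harmless constant factor $\alpha(t)^{-1}$ already accounted for.
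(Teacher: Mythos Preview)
Your proposal is correct and follows essentially the same approach as the paper: perform the change of variable to shift the $y$-dependence from the Gaussian kernel into $\pi$, differentiate under the integral using the stated domination hypothesis, apply the chain rule to extract the factor $\alpha(t)^{-1}$, and undo the substitution to recognize the posterior expectation defining $v_t^\alpha$. The only cosmetic differences are that the paper uses the substitution $z=\alpha(t)x-y$ (the negative of yours) and packages the differentiation-under-the-integral step as an invocation of its Tweedie-type \Cref{lemma:tweedie}, whereas you carry it out explicitly.
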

\begin{proof}
	We recall that $\textstyle p^\alpha_t(y)=\int_{\rset^d}\densityGaussian(y;\alpha(t)x, \sigma^2 t\, \Idd)  \rmd \pi(x)$. Then, by change of variable $z=\alpha(t)x-y$, we have
	\begin{align}
		p^\alpha_t(y)\propto\int_{\rset^d} \pi(\{z+y\}/\alpha(t)) \densityGaussian(z; 0, \sigma^2 t \, \Idd )  \rmd z \eqsp ,
	\end{align}
	where the multiplicative constant does not depend on $y$. For any $y\in \rset^d$, denote by $\tilde{q}^\alpha_t(\cdot|y)$ the density defined up to a normalizing constant by $\tilde{q}^\alpha_t(z|y)\propto \pi(\{z+y\}/\alpha(t)) \densityGaussian(z; 0, \sigma^2 t \, \Idd )$. By combining the assumptions of the lemma with the result of \Cref{lemma:tweedie}-\eqref{eq:Tweedie} for this new expression of $p^\alpha_t$, we obtain that
	\begin{align}
		\nabla_y \log p_t^\alpha(y) & = \int_{\rset^d } \nabla_y \log \pi(\{z+y\}/\alpha(t)) \tilde{q}^\alpha_t(z|y)\rmd z                  \\
		                            & = \frac{1}{\alpha(t)}\int_{\rset^d } \nabla \log \pi(\{z+y\}/\alpha(t)) \tilde{q}^\alpha_t(z|y)\rmd z \\
		                            & = \frac{1}{\alpha(t)}\int_{\rset^d } \nabla_x \log \pi(x) q^\alpha_t(x|y)\rmd x \eqsp,
	\end{align}

	where we re-applied the change of variable $z=\alpha(t)x-y$ in the last equality.
\end{proof}

Therefore, the SDE \eqref{eq:SDE-gen} is strictly equivalent to the SDE
\begin{align}\label{eq:SDE-gen-2}
	\textstyle\rmd Y^\alpha_t = \frac{\dot{\alpha}(t)}{\alpha(t)}\{Y^\alpha_t + \frac{\sigma^2t}{\alpha(t)}v_t^\alpha(Y_t^\alpha)\} \rmd t + \sigma \rmd B_t \eqsp, \eqsp Y^\alpha_0 = 0 \eqsp .
\end{align}
Hence, to simulate from the observation process in a learning-free fashion, we can adopt a similar strategy to $\SLIPS$, that rather involves the SDE derived above. Given a SNR-adapted discretization $(t_k)_{k=1}^K$ of a time interval $[t_0,T]$, where $t_0>0$ and $K\geq 1$, it amounts to consider a sequence $\{(\tY^\alpha_{t_k}, V_{t_k}^\alpha)\}_{k=1}^K$, where $V_{t_k}^\alpha$ is a Monte Carlo estimator of  $v_{t_k}^\alpha(\tY_{t_k}^\alpha)$ and $\tY^\alpha_{t_k}$ is obtained by solving the SDE
\begin{align}\label{eq:SDE-gen-2-disc}
	\textstyle\rmd \tY^\alpha_t = \frac{\dot{\alpha}(t)}{\alpha(t)}\{\tY^\alpha_t + \frac{\sigma^2t_k}{\alpha(t_k)}V_{t_k}\} \rmd t + \sigma \rmd B_t \eqsp, t\in [t_k, t_{k+1}]  \eqsp .
\end{align}

In theory, this approach leads to the same level difficulty in the Monte Carlo estimation as $\SLIPS$, since the involved random poster densities are the same. In particular, this formulation does not bypass the ``duality of log-concavity'' explained in \Cref{subsection:duality}. The main difference with the approach presented in the main of our paper however lies in the integration of the SDE \eqref{eq:SDE-gen-2-disc}, which is not tractable at first sight. By combining the result of \Cref{lemma:ito} and the use of the stochastic Exponential Integrator scheme \cite{durmus2015quantitative}, we show in \Cref{subapp:integration-EI} how to solve this SDE for our localization schemes. We emphasize that this implementation is also available in our code.

Unfortunately, we found in our early experiments that this approach suffered from numerical unstability and showed higher variance than $\SLIPS$. We think that this is due to the evaluation of $\nabla \log \pi$ in our MC estimation for the early steps of the algorithm, and could be overcome by using the $\SLIPS$ recursion given in \eqref{eq:recursion-EM-est} instead. We leave this study for future work.

\section{Detailed computations for Gaussian mixtures}\label{app:gaussian_mixture}

In this section, we consider the special case where $\pi$ is a mixture of $N$ Gaussians with weights $(w_i)_{i=1}^N$, means $(\mathbf{m}_i)_{i=1}^N$ and covariance matrices $(\gamma_i^2 \, \Idd)_{i=1}^N$. Under this assumption, $p^{\alpha}_t$ can be explicitly written for any $y \in \rset^d$ and $t \in (0, T_{\text{gen}})$ as
\begin{align}
	p^{\alpha}_t(y) = \sum_{i=1}^N w_i \densityGaussian(y; \alpha(t) \mathbf{m}_i, t (g^2(t) \gamma_i^2 + \sigma^2) \, \Idd)\eqsp.
\end{align}
This means that the distribution of the observation process is itself a mixture of Gaussians with the same weights as $\pi$ but with means $(\alpha(t) \mathbf{m}_i)_{i=1}^N$ and covariance matrices $(t (g^2(t) \gamma_i^2 + \sigma^2) \, \Idd)_{i=1}^N$. This elementary result is obtained by applying the rule of linear combination of independent Gaussian random variables. The score of the observation process can also be computed by noticing that for all $y \in \rset^d$
\begin{align}
	\nabla \log p^{\alpha}_t(y) = \frac{\nabla \log p^{\alpha}_t(y)}{p^{\alpha}_t(y)} = -\frac{\sum_{i=1}^N w_i t^{-1} (g^2(t) \gamma_i^2 + \sigma^2)^{-1} (y - \alpha(t) \mathbf{m}_i) \densityGaussian(y; \alpha(t) \mathbf{m}_i, t (g^2(t) \gamma_i^2 + \sigma^2) \, \Idd)}{\sum_{i=1}^N w_i \densityGaussian(y; \alpha(t) \mathbf{m}_i (g^2(t) \gamma_i^2 + \sigma^2) \, \Idd)}\eqsp.
\end{align}

One can also compute the posterior distribution $q^{\alpha}_t$ for any $t \in (0, T_{\text{gen}})$ and $x,y \in \rset^d$ as
\begin{align}
	q^{\alpha}_t(x | y) & \propto \sum_{i=1}^N w_i \densityGaussian(x; \mathbf{m}_i, \gamma_i^2 \, \Idd) \densityGaussian(y; \sqrt{t} g(t) x, \sigma^2 t\,  \Idd)                                                                                                                                                                                                                                                                                                                                                                                                                                \\
	                    & = \sum_{i=1}^N w_i (\sqrt{t} g(t))^{-d} \densityGaussian(x; \mathbf{m}_i, \gamma_i^2 \, \Idd) \densityGaussian\left(x; \frac{y}{\sqrt{t} g(t)}, \frac{\sigma^2}{g^2(t)} \Idd\right)                                                                                                                                                                                                                                                                                                                                                                                    \\
	                    & = \sum_{i=1}^N \underbrace{w_i \alpha(t)^{-d} \densityGaussian\left(\mathbf{m}_i; \frac{y}{\alpha(t)}, \left(\gamma_i^2 + \frac{\sigma^2}{g^2(t)}\right) \Idd\right)}_{= \tilde{w}^{\alpha}_{t,y,i}} \densityGaussian\left(x; \underbrace{\left(\frac{\gamma_i^2 \sigma^2}{\sigma^2 + g^2(t) \gamma_i^2}\right) \left(\frac{\mathbf{m}_i}{\gamma_i^2} + \frac{y g(t)}{\sqrt{t} \sigma^2}\right)}_{= \mathbf{m}^{\alpha}_{t,y,i}}, \underbrace{\left(\frac{\gamma_i^2 \sigma^2}{\sigma^2 + g^2(t) \gamma_i^2}\right)}_{= (\gamma^{\alpha}_{t,y,i})^2} \Idd\right)\eqsp.
\end{align}
This shows that the posterior is itself a mixture of Gaussian distributions with weights $(w^{\alpha}_{t,y,i})_{i=1}^N$ where $w^{\alpha}_{t,y,i} = \tilde{w}^{\alpha}_{t,y,i} / \sum_{j=1}^N \tilde{w}^{\alpha}_{t,y,j}$, means $(\mathbf{m}^{\alpha}_{t,y,i})_{i=1}^N$ and covariance matrices $((\gamma^{\alpha}_{t,y,i})^2 \, \Idd)_{i=1}^N$.

Additionally, we can derive tight expressions for the constants $R$ and $\tau$ introduced in \Cref{ass:target} in the case where $\pi$ is a Gaussian mixture parameterized by $N = 2$, $w_1 = 1 - w_2 = w$, with $w\in (0,1)$,  $\mathbf{m}_2 = -\mathbf{m}_1 = a \, \mathbf{1}_d$\footnote{ We recall that $\mathbf{1}_d$ stands for the $d$-dimensional vector with all components equal to $1$.}, with $a>0$, and $\gamma_1 = \gamma_2 = \gamma$, with $\gamma>0$. In this case, $\pi$ verifies \Cref{ass:target}, where (i) $\mu$ is a mixture of two Dirac masses at $-a \, \mathbf{1}_d$ and $+a \, \mathbf{1}_d$ with respective weights $w$ and $1-w$ and (ii) $\tau = \gamma$. Moreover, for any random vector $U \sim \mu$, it holds
\begin{align}
	\mathbb{E}[\norm{U - \mathbf{m}_{\pi}}^2] & = \mathbb{E}[\norm{U - a(1 - 2w) \, \mathbf{1}_d}^2]                                                                       \\
	                                                 & = w \norm{-a \, \mathbf{1}_d - a(1 - 2w) \, \mathbf{1}_d}^2 + (a-w) \norm{a \, \mathbf{1}_d - a(1 - 2w) \, \mathbf{1}_d}^2 \\
	                                                 & = 4 w (1-w) a^2 d
\end{align}
Therefore, we obtain that $R = 2 \sqrt{w(1-w)} a$ in \Cref{ass:target}, where the inequality holds tightly.
Note also that $\Bar{\pi}$, defined as the distribution of $X-\mathbf{m}_\pi$ where $X\sim \pi$, is still a Gaussian mixture that verifies \Cref{ass:target} with same constants $R$ and $\tau$.

In the experiments conducted in \Cref{sec:xps}, \Cref{app:perfect_score} and \Cref{app:mcmc_ablation_study}, we will consider a rolling example given by the target distribution $\Bar{\pi}$ where $w=1/3$, $a=1.0$ and $\gamma^2=0.05$. In this case, we have $R^2=8/9$ and $\tau^2=0.05$, and the corresponding density is defined as
\begin{equation}\label{eq:target_mixture}
	\textstyle x\in \rset^d \mapsto \frac{2}{3} \densityGaussian\left(x; -\frac{2}{3}\, \mathbf{1}_{d}, 0.05 \, \mathrm{I}_{d}\right) + \frac{1}{3} \densityGaussian\left(x; \frac{4}{3}\, \mathbf{1}_{d}, 0.05 \,\mathrm{I}_{d}\right) \eqsp.
\end{equation}
Due to the tightness of $R$, the scalar variance of this target distribution verifies $R_\pi^2= d(R^2+\tau^2)$.

\section{Sampling via Stochastic Localization in an ideal setting} \label{app:perfect_score}

The goal of this section is to validate the claims from \Cref{subsec:discretization} about the minimization of the integration error. In this section, we work under the assumption that the score $\nabla \log p^{\alpha}_t$ is a known function and do not consider any MCMC method at all (even for the initialization). This setting removes entirely the estimation error that we deal with in \Cref{subsec:estimation} to solely focus on the integration error. For the initialization, we now consider an arbitrary $t_0 > 0$ with $Y^\alpha_{t_0}$ distributed as $\densityGaussian(0, \sigma^2 t_0 \, \Idd)$ (the best estimation that we have at the beginning of the SDE).  We will clarify 4 different points in this section.
\begin{enumerate}[wide, labelindent=0pt, label=(\alph*)]
	\item Exploring \emph{Exponential Integration} (EI) based schemes for SDE discretization;
	\item Analyzing the impact of the SNR-adapted discretization on the integration error;
	\item Exploring the limits of the Gaussian approximation $Y^\alpha_{t_0} \sim \densityGaussian(0, \sigma^2 t_0 \, \Idd)$;
	\item Exploring the impact of the computational budget.
\end{enumerate}
In the numerical examples presented below, we consider the target distribution defined in \eqref{eq:target_mixture} where $d=10$. We compute the exact score using the analytical formulas from \Cref{app:gaussian_mixture}. We choose to display two complementary results based on (i) the empirical Sliced Wasserstein distance \cite{bonneel2015sliced,nadjahi2019asymptotic}, which tells how local information on $\pi$ is recovered, and (ii) the error in estimating the weight of the first mode, which tells about global properties of the estimation. Moreover, for any SL scheme $\alpha$, the values of $t_0$ and $T$ will be taken so that the log-SNR evaluated at times $t_0$ and $t_K$ has the same value (the initial log-SNR is taken as $-4.0$ and the last log-SNR is taken as $5.0$). This enables fair comparison across different schedules.

\subsection{Exploring new integration schemes}\label{subapp:integration-EI}

By exploiting the relation given in \eqref{eq:link_score_general} between the denoiser function and the score of the observation process, which is now assumed to be tractable, the SDE \eqref{eq:SDE-gen} is equivalently defined on $[t_0,T]$ by
\begin{align}\label{eq:SDE-gen-perfect}
	\textstyle\rmd Y^\alpha_t = \frac{\dot{\alpha}(t)}{\alpha(t)}\{Y^\alpha_t +\sigma^2t \nabla \log p_t^\alpha(Y_t^\alpha)\} \rmd t + \sigma \rmd B_t \eqsp, \eqsp Y^\alpha_0 = 0 \eqsp .
\end{align}

Through this formulation, we have fully removed the problem of score estimation, but the issue of discretization error still remains to sample from $(Y^\alpha_t)_{t\in[t_0,T]}$ with \eqref{eq:SDE-gen-perfect}. Due to the divergence of the coefficient $t \mapsto \dot{\alpha}(t)/\alpha(t)$ at time $0$ (and time $T_{\text{gen}}$ in the finite-time setting), we propose to use the stochastic \emph{Exponential Integrator} (EI) scheme \cite{durmus2015quantitative} in this setting. Consider a time discretization of the interval $[t_0,T]$ defined by an increasing sequence of timesteps $(t_k)_{k=0}^K$ where $t_K=T$ and $K\geq 1$. Then, the EI scheme applied on the SDE \eqref{eq:SDE-gen-perfect} amounts to define a sequence of random variables $\{\tY^\alpha_{t_k}\}_{k=0}^K$ obtained by \emph{exactly} integrating the SDE defined for any $k \in \{0, \hdots, K-1\}$ by
\begin{align}\label{eq:SDE-gen-perfect-disc}
	\textstyle\rmd \tY^\alpha_t = \frac{\dot{\alpha}(t)}{\alpha(t)}\{\tY^\alpha_t + \sigma^2t_k \nabla \log p_{t_k}^\alpha(\tY_{t_k}^\alpha)\} \rmd t + \sigma \rmd B_t \eqsp, \eqsp t \in[t_k, t_{k+1}] \eqsp .
\end{align}
Although the exact integration is not guaranteed for a general schedule $\alpha$ as defined in \Cref{subsec:gen-framework}, our specific design of the denoising schedule $g(t)=t^{-1/2}\alpha(t)$ in the schemes Geom and Geom-$\infty$ provides tractable computations following the result of \Cref{lemma:ito}. We treat these cases separately below.

\paragraph{EI scheme combined with Geom-$\infty$ localization scheme.} In this setting, we recall that $g(t)=t^{\alpha_1 /2}$. Then, we have
\begin{align}
	\frac{\dot{\alpha}(t)}{\alpha(t)}=\frac{\alpha_1+1}{2t} \eqsp .
\end{align}
Note that $\dot{\alpha}(t)/\alpha(t)\to 0$ as $t\to 0$. Following \Cref{lemma:ito}, the sequence $\{\tY^\alpha_{t_k}\}_{k=0}^K$ obtained by the EI scheme is defined by the recursion
\begin{align}
	\tY^\alpha_{t_{k+1}}= \left(\frac{t_{k+1}}{t_k}\right)^{\frac{\alpha_1+1}{2}}\tY^\alpha_{t_k} + \left\{\left(\frac{t_{k+1}}{t_k}\right)^{\frac{\alpha_1+1}{2}} -1\right\}\sigma^2t_k \nabla \log p_{t_k}^\alpha(\tY_{t_k}^\alpha) + \sigma\sqrt{\frac{t_{k+1}}{\alpha_1 t_{k}^{\alpha_1}}(t_{k+1}^{\alpha_1}-t_{k}^{\alpha_1})}Z_{k+1} \eqsp .
\end{align}
where $(Z_k)_{k=1}^K$ is distributed according to the standard centered Gaussian distribution. In the \emph{standard} case, \ie, $\alpha_1=1$, this simplifies as
\begin{align}
	\textstyle\tY^\alpha_{t_{k+1}}= \frac{t_{k+1}}{t_k}\tY^\alpha_{t_k} + \{\frac{t_{k+1}}{t_k} -1\}\sigma^2t_k \nabla \log p_{t_k}^\alpha(\tY_{t_k}^\alpha) + \sigma\sqrt{\frac{t_{k+1}}{ t_{k}}(t_{k+1}-t_{k})}Z_{k+1} \eqsp .
\end{align}

\paragraph{EI scheme combined with Geom localization scheme.}In this second setting, we recall that $g(t)=t^{\alpha_1 /2}(1-t)^{-\alpha_2/2}$ ($T_{\text{gen}}=1$). Then, we have
\begin{align}
	\frac{\dot{\alpha}(t)}{\alpha(t)}=\frac{\alpha_1+1}{2t} + \frac{\alpha_2}{2(1-t)} \eqsp .
\end{align}
Note that $\dot{\alpha}(t)/\alpha(t)\to 0$ as $t\to 0$ and $t\to 1$. Following \Cref{lemma:ito}, the sequence $\{\tY^\alpha_{t_k}\}_{k=0}^K$ obtained by the EI scheme is defined by the recursion
\begin{align}
	\tY^\alpha_{t_{k+1}} & = \left(\frac{t_{k+1}}{t_k}\right)^{\frac{\alpha_1+1}{2}}\left(\frac{1-t_{k}}{1-t_{k+1}}\right)^{\frac{\alpha_2}{2}}\tY^\alpha_{t_k}                                                                 \\
	                     & \quad + \left\{\left(\frac{t_{k+1}}{t_k}\right)^{\frac{\alpha_1+1}{2}}\left(\frac{1-t_{k}}{1-t_{k+1}}\right)^{\frac{\alpha_2}{2}} -1\right\}\sigma^2t_k \nabla \log p_{t_k}^\alpha(\tY_{t_k}^\alpha) \\
	                     & \quad + \frac{\sigma}{\sqrt{\alpha_1}}\frac{t_{k+1}^{\frac{\alpha_1+1}{2}}}{(1-t_{k+1})^{\frac{\alpha_2}{2}}}
	\sqrt{ {}_2 \mathrm{F}_1(-\alpha_1, -\alpha_2, 1-\alpha_1, t_k)t_k^{-\alpha_1} - {}_2 \mathrm{F}_1(-\alpha_1, -\alpha_2, 1-\alpha_1, t_{k+1})t_{k+1}^{-\alpha_1} }Z_{k+1} \eqsp ,
\end{align}
where ${}_2 \mathrm{F}_1$ denotes the hypergeometric function, see \citep[Chapter 15]{olver2010nist}, and $(Z_k)_{k=1}^K$ is distributed according to the standard centered Gaussian distribution.

When $(\alpha_1,\alpha_2)=(1,1)$, this simplifies as
\begin{align}
	\tY^\alpha_{t_{k+1}} & = \textstyle (\frac{t_{k+1}}{t_k})(\frac{1-t_{k}}{1-t_{k+1}})^{\frac{1}{2}}\tY^\alpha_{t_k}                                                      \\
	                     & \textstyle\quad + \{(\frac{t_{k+1}}{t_k})(\frac{1-t_{k}}{1-t_{k+1}})^{\frac{1}{2}} -1\}\sigma^2t_k \nabla \log p_{t_k}^\alpha(\tY_{t_k}^\alpha)  \\
	                     & \textstyle\quad + \sigma\sqrt{\frac{t_{k+1}^2}{1-t_{k+1}}\log(\frac{t_k}{t_{k+1}}) + \frac{t_{k+1}(t_{k+1}-t_k)}{t_k(1-t_{k+1})}}Z_{k+1} \eqsp .
\end{align}

When $(\alpha_1,\alpha_2)=(2,1)$, this simplifies as
\begin{align}
	\tY^\alpha_{t_{k+1}} & \textstyle= (\frac{t_{k+1}}{t_k})^{\frac{3}{2}}(\frac{1-t_{k}}{1-t_{k+1}})^{\frac{1}{2}}\tY^\alpha_{t_k}                                                      \\
	                     & \textstyle\quad + \{(\frac{t_{k+1}}{t_k})^{\frac{3}{2}}(\frac{1-t_{k}}{1-t_{k+1}})^{\frac{1}{2}} -1\}\sigma^2t_k \nabla \log p_{t_k}^\alpha(\tY_{t_k}^\alpha) \\
	                     & \textstyle\quad + \sigma t_{k+1}(\frac{t_{k+1}/t_k -1}{1-t_{k+1}})^{\frac{1}{2}}
	\sqrt{ \frac{t_{k}+t_{k+1}}{2t_k t_{k+1}}-1 }Z_{k+1} \eqsp .
\end{align}

When $(\alpha_1,\alpha_2)=(1,2)$, this simplifies as
\begin{align}
	\tY^\alpha_{t_{k+1}} & \textstyle= (\frac{t_{k+1}}{t_k})^{\frac{1}{2}}(\frac{1-t_{k}}{1-t_{k+1}})^{\frac{3}{2}}\tY^\alpha_{t_k}                                                      \\
	                     & \textstyle\quad + \{(\frac{t_{k+1}}{t_k})^{\frac{1}{2}}(\frac{1-t_{k}}{1-t_{k+1}})^{\frac{3}{2}} -1\}\sigma^2t_k \nabla \log p_{t_k}^\alpha(\tY_{t_k}^\alpha) \\
	                     & \textstyle\quad + \frac{\sigma}{1-t_{k+1}}
	\sqrt{ (t_{k+1}-t_k)\{t_{k+1}^2+ \frac{t_{k+1}}{t_k}\} + 2 t_{k+1}^2\log(\frac{t_{k}}{t_{k+1}})\} }Z_{k+1} \eqsp .
\end{align}

\subsection{Studying the impact of the SNR-adapted time discretization}

\begin{figure}[t!]
	\centering
	\includegraphics[width=\linewidth]{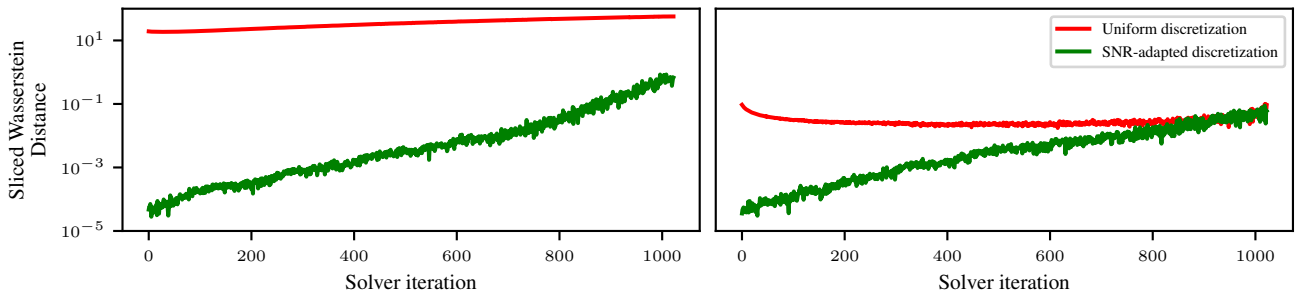}
	\caption{Sliced Wasserstein distance between the true samples from $p^{\alpha}_{t_k}$ and the empirical distribution of $\tY^{\alpha}_{t_k}$ obtained by EI scheme combined with uniform time discretization (\tcmv{green}) and SNR-adapted discretization (\tcmr{red}) for the Standard (\textit{left}) and Geom(1,1) (\textit{right}) schemes.}
	\label{fig:app:adaptive_disc_both_wasserstein}
\end{figure}

In \Cref{subsec:discretization}, we suggested to take a SNR-adapted time discretization by $(t_k)_{k=0}^K$ such that
\begin{align}
	\lsnr(t_k) = \lsnr(t_0) + \Delta_{\text{SNR}} k\eqsp,
\end{align}
with $t_0 > 0$ and $\Delta_{\text{SNR}} = (\lsnr(T) - \lsnr(t_0)) / K$. \Cref{fig:app:adaptive_disc_both_wasserstein} shows that the SNR-adapted discretization efficiently reduces the integration error for both schedules Geom and Geom-$\infty$. Note, the impact on the schedule Geom(1,1) is more moderated as the uniform initialization already splits the curve into moderated log-SNR increments (see \Cref{fig:adap_dist}) because of a slower rate near $t_0$.

\subsection{Studying the impact of the Gaussian approximation in the initialization} \label{subsec:gaussian_init}

\begin{figure}[t]
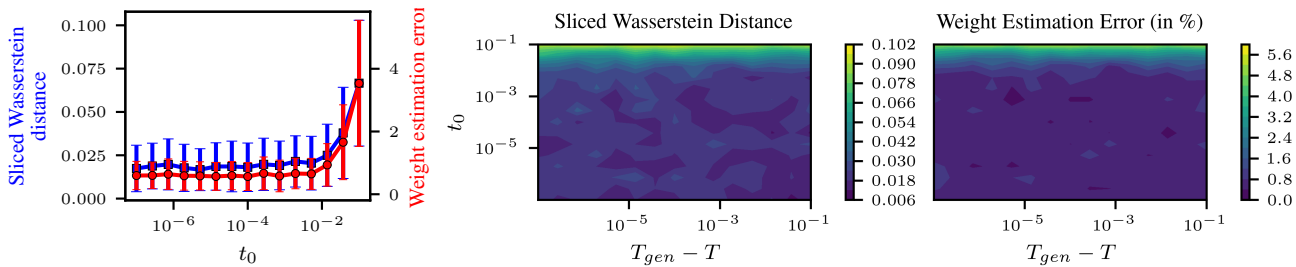

	\centering
	\includegraphics[width=0.33\linewidth]{res/perfect_score/impact_epsilon_classic.pdf}
	\hfill
	\includegraphics[width=0.66\linewidth]{res/perfect_score/different_epsilons_finite_time_contourf_alpha_type_geometric_1_1.png}
	\caption{Impact of $t_0$ with different schedules measured with the relative weight estimation error and the sliced Wasserstein distance. \textbf{Left}: Impact of $t_0$ in the standard scheme. \textbf{Right}: Impact of $t_0$ and $(T_{\text{gen}} - T)$ in the Geom(1,1) scheme.}
	\label{fig:app:impact_t_0}
\end{figure}

In this section only, we replace the target distribution considered in \eqref{eq:target_mixture} by the mixture of two Gaussian distributions $\densityGaussian(-3\, \mathbf{1}_{10}, \Sigma)$ and $\densityGaussian(3\, \mathbf{1}_{10}, \Sigma)$, where $\Sigma=0.05\,\mathrm{I}_{10}$, with weights respectively given by $2/3$ and $1/3$. Therefore, $\pi$ has non-zero mean and high variance, which provides a challenging setting for our Gaussian approximation at initialization given by $Y^\alpha_{t_0}\sim\densityGaussian(0, \sigma^2 t_0 \, \Idd)$. Without information on the target, this is the best estimation that we have when $t_0$ is close to $0$. \Cref{fig:app:impact_t_0} shows that only high values of $t_0$ (\ie~above $10^{-2}$) degrades the performance in our localization schemes. This underlines the need of running the Langevin-within-Langevin correction procedure of this approximation as explained in \Cref{subsection:duality}. Additionally, note that \Cref{fig:app:impact_t_0} (\textit{right}) shows that taking $(T_{\text{gen}} - T)$ low in the finite time setting does not degrade performance, which was not obvious from the log-SNR shape near $T_{\text{gen}}$ in \Cref{fig:log-SNR}.

\subsection{Studying the impact of the number of discretization steps}

\begin{figure}[t]
	\centering
	\includegraphics[width=0.90\linewidth]{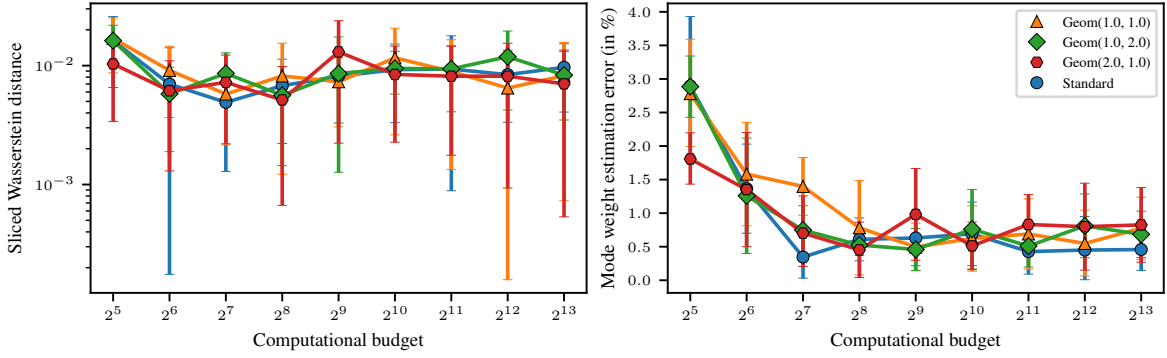}
	\caption{Impact of the computational budget $K$ with different schemes - The sampling error is computed either with the Sliced Wasserstein distance (\textit{left}) or the relative weight error (\textit{right}).}
	\label{fig:app:impact_T_K}
\end{figure}

The bottom row of \Cref{fig:app:impact_T_K} shows that, in the ideal case where the score is known, using the SNR-adapted discretization makes all the different schemes equivalent\footnote{We recall that we choose $t_0$ and $T$ so that the starting and ending levels of SNR are the same across different schemes.}. Moreover, we see that the sampling error quickly stabilizes with medium budgets which highlights that the integration error was successfully minimized.

\section{Theoretical results on the duality of log-concavity} \label{app:theory}

In this section, we detail the results on log-concavity provided in \Cref{subsec:estimation} and \Cref{subsection:duality}. For sake of readability, those are stated with a general $\sigma>0$. Our first result provides uniform upper bounds on the Hessian of the log-density of the observation process and its corresponding log-posterior density.

\begin{lemma}\label{lemma:concavity} Assume \Cref{ass:target}. Let $t\in (0, T_{\text{gen}})$. We recall that $p_t^\alpha$ stands for the marginal distribution at time $t$ of the observation process defined by \eqref{eq:def_y_t-gen}, while $q_t^\alpha$ stands for the corresponding posterior density. Under regularity assumptions on $\pi$ detailed in the proof of the lemma, we have for any $(x,y)\in \rset^d\times \rset^d$ that
	\begin{align}
		\nabla^2_y \log p_t^\alpha(y)   & \preccurlyeq \zeta_p(t) \, \Idd \eqsp, \quad \text{where } \zeta_p(t)=\textstyle\frac{\alpha(t)^2 d R^2}{(\alpha(t)^2\tau^2 + \sigma^2 t)^2} -\frac{1}{\alpha(t)^2\tau^2 + \sigma^2 t} \eqsp ,\label{eq:zeta_p_t} \\
		\nabla^2_x \log q_t^\alpha(x|y) & \preccurlyeq \zeta_q(t) \, \Idd \eqsp , \quad \text{where } \zeta_q(t)=\textstyle\frac{d R^2}{\tau^4} -\frac{1}{\tau^2}-\frac{g(t)^2}{\sigma^2} \eqsp .\label{eq:zeta_q_t}
	\end{align}
	In particular, $\zeta_q$ is a strictly decreasing function on $(0, T_{\text{gen}})$.
\end{lemma}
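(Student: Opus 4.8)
The plan is to read off both Hessian bounds from the extended Tweedie identity of \Cref{lemma:tweedie}, applied to the two mixture representations that \Cref{ass:target} provides, and then to control the resulting posterior covariances using the compact support of $\mu$.

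First I would handle the posterior density $q_t^\alpha$. Expanding \eqref{eq:general_posterior}, for fixed $y$ we have $\log q_t^\alpha(x|y)=\log\pi(x)-\{g(t)^2/(2\sigma^2)\}\|x-y/\alpha(t)\|^2+c(y)$ with $c(y)$ not depending on $x$, hence $\nabla_x^2\log q_t^\alpha(x|y)=\nabla^2\log\pi(x)-\{g(t)^2/\sigma^2\}\Idd$, and the problem reduces to bounding $\nabla^2\log\pi$. By \Cref{ass:target}, $\pi(x)=\int_{\rset^d}\densityGaussian(x;u,\tau^2\Idd)\,\rmd\mu(u)$. Differentiating twice under the integral sign (this is \Cref{lemma:tweedie} applied with kernel $\densityGaussian(\cdot;u,\tau^2\Idd)$; the same computation is valid even when $\mu$ has no density, since the relevant Gaussian derivatives are bounded uniformly in $u$) yields $\nabla^2\log\pi(x)=-\tau^{-2}\Idd+\tau^{-4}\Cov[U]$, where $U$ is distributed according to the posterior $\mu^x\propto\densityGaussian(x;\cdot,\tau^2\Idd)\,\mu$. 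Since $\mu^x\ll\mu$, the law of $U$ is still supported on $\msb=\ball{\mathbf{m}_\pi}{R\sqrt d}$, so for every unit vector $v$, $v^\top\Cov[U]v=\mathbb{E}[(v^\top(U-\mathbb{E}[U]))^2]\le\mathbb{E}[\|U-\mathbb{E}[U]\|^2]\le\mathbb{E}[\|U-\mathbf{m}_\pi\|^2]\le dR^2$ (the second inequality because the mean minimises the mean-squared deviation, the last by the support bound), i.e. $\Cov[U]\preccurlyeq dR^2\Idd$. This gives $\nabla^2\log\pi(x)\preccurlyeq(dR^2/\tau^4-1/\tau^2)\Idd$ and hence \eqref{eq:zeta_q_t}.

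For the marginal $p_t^\alpha$ I would apply the same mechanism after folding the Gaussian part of $\pi$ into the observation noise. Writing $X=U+G$ with $U\sim\mu$ and $G\sim\densityGaussian(0,\tau^2\Idd)$ independent as in \Cref{ass:target}, \eqref{eq:def_y_t-gen} becomes $Y_t^\alpha=\alpha(t)U+\{\alpha(t)G+\sigma W_t\}$ with $\alpha(t)G+\sigma W_t\sim\densityGaussian(0,s_t^2\Idd)$, where $s_t^2=\alpha(t)^2\tau^2+\sigma^2 t$; thus $p_t^\alpha(y)=\int_{\rset^d}\densityGaussian(y;\alpha(t)u,s_t^2\Idd)\,\rmd\mu(u)$. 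Differentiating under the integral as before gives $\nabla_y^2\log p_t^\alpha(y)=-s_t^{-2}\Idd+\alpha(t)^2 s_t^{-4}\Cov[U]$, the posterior of $U$ again being absolutely continuous w.r.t. $\mu$; bounding $\Cov[U]\preccurlyeq dR^2\Idd$ as above produces exactly \eqref{eq:zeta_p_t}. The last claim is immediate: by \ref{item:gsl-T}, $g$ is positive and strictly increasing on $(0,T_{\text{gen}})$, so $t\mapsto g(t)^2$ is strictly increasing and therefore $\zeta_q(t)=dR^2/\tau^4-1/\tau^2-g(t)^2/\sigma^2$ is strictly decreasing.

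The only step requiring genuine care — and the reason the statement hedges with ``under regularity assumptions on $\pi$'' — is justifying the differentiation under the integral (equivalently, the domination hypotheses of \Cref{lemma:tweedie}) in the two representations above. This is precisely where the compact support of $\mu$ is used: the kernels $\densityGaussian(\cdot;u,\tau^2\Idd)$ and $\densityGaussian(\cdot;\alpha(t)u,s_t^2\Idd)$ together with their first two $y$-derivatives are bounded by quantities depending only on $\tau$, $s_t$, $d$, uniformly in $u\in\msb$, so they are trivially integrable against $\mu$. Everything else is the routine differentiation of Gaussian log-densities sketched above; no cancellation or delicate estimate is involved.
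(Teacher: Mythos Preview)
Your proposal is correct and follows essentially the same approach as the paper: both proofs apply the extended Tweedie identity of \Cref{lemma:tweedie} to the mixture representations $p_t^\alpha(y)=\int\densityGaussian(y;\alpha(t)u,(\alpha(t)^2\tau^2+\sigma^2 t)\Idd)\,\rmd\mu(u)$ and $\pi(x)=\int\densityGaussian(x;u,\tau^2\Idd)\,\rmd\mu(u)$, then bound the resulting posterior covariances by $dR^2\Idd$ via the compact support of $\mu$. Your write-up is in fact slightly more explicit than the paper's, both in justifying the covariance bound (spelling out the mean-minimisation step) and in verifying the domination hypotheses (the paper simply states them as technical assumptions rather than checking them from the uniform boundedness of Gaussian derivatives on $\msb$).
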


\begin{proof} We begin by proving \eqref{eq:zeta_p_t}. Consider the stochastic process $(Y_t^\alpha)_{t\in [0, T_{\text{gen}})}$, given in \eqref{eq:def_y_t-gen}, and defined by $Y^\alpha_t= \alpha(t)X + \sigma W_t$ where $X\sim \pi$ and $(W_t)_{t\geq 0}$ is a standard Brownian motion.
	Due to \Cref{ass:target}, $X$ can be written as $X= U + G$ where $U\sim \mu$, with $\mathbb{E}[\norm{U-\mathbf{m}_\pi}^2]\leq dR^2$,
	and $G\sim \densityGaussian(0, \tau^2\, \Idd)$.
	We thus have the following decomposition
	\begin{align}
		\textstyle p_t^\alpha(y)=\int_{\rset^d}p_t^\alpha(y|u)\rmd \mu(u) \eqsp,
	\end{align}
	where $p_t^\alpha(\cdot|u)$ is the conditional density of $Y_t^\alpha$ given $U=u\in \rset^d$ defined as
	\begin{align}\label{eq:p_t_y_given_u}
		p_t^\alpha(y|u)=\densityGaussian(y; \alpha(t) u, \{\alpha(t)^2\tau^2 + \sigma^2 t\}\, \Idd) \eqsp .
	\end{align}

	Then, it comes that
	\begin{align}
		\nabla_y \log p_t^\alpha(y|u)=\frac{-y+\alpha(t)u}{\alpha(t)^2\tau^2 + \sigma^2 t} \eqsp, \eqsp \nabla_y^2\log p_t^\alpha(y|u)=-\frac{1}{\alpha(t)^2\tau^2 + \sigma^2 t}\Idd \eqsp .
	\end{align}

	We now make the \underline{technical assumption} that for any $k\in \{1,2\}$, there exists $\varphi_{p,k}:\rset^d \to \rset_+$ such that $\int_{\rset^d}\varphi_{p,k}(u)\rmd \mu(u) < \infty$ and for any $(u, y)\in \rset^d \times \rset^d$, we have $\norm{\nabla_y^k p_t^\alpha(y|u)}\leq \varphi_{p,k}(u)$. By combining this assumption with the result of \Cref{lemma:tweedie}-\eqref{eq:Tweedie_2}, we obtain that
	\begin{align}
		\nabla^2_y \log p_t^\alpha(Y_t^\alpha) & = -\frac{1}{\alpha(t)^2\tau^2 + \sigma^2 t}\Idd + \mathrm{Cov}\left[\frac{-Y_t^\alpha+\alpha(t)U}{\alpha(t)^2\tau^2 + \sigma^2 t}\middle | Y_t^\alpha \right] \\
		                                       & = -\frac{1}{\alpha(t)^2\tau^2 + \sigma^2 t}\Idd + \frac{\alpha(t)^2}{(\alpha(t)^2\tau^2 + \sigma^2 t)^2}\mathrm{Cov}[U | Y_t^\alpha] \eqsp .
	\end{align}
	Since $\mathbb{E}[\norm{U-\mathbf{m}_\pi}^2]\leq dR^2$, we have by Cauchy-Schwartz inequality that $\mathrm{Cov}[U | Y_t^\alpha] \preccurlyeq dR^2\,\Idd$\footnote{Note that this upper bound may be loose, since it does not depend on $Y_t^\alpha$.}. Then, we obtain the bound \eqref{eq:zeta_p_t} as $\nabla^2 \log p_t^\alpha$ is continuous and $p_t^\alpha$ is positive on $\rset^d$.

	We now prove \eqref{eq:zeta_q_t}. We recall from \eqref{eq:general_posterior} that $q_t^\alpha (x|y) \propto \pi(x) \densityGaussian(x; y/\alpha(t), \sigma^2/g(t)^2\, \Idd)$. Therefore
	\begin{align}
		\nabla^2_x \log q_t^\alpha (X|Y_t^\alpha)= \nabla^2_x \log \pi(X) -\frac{g(t)^2}{\sigma^2}\Idd \eqsp .
	\end{align}
	In particular, we have $\pi(x)=\int_{\rset^d}\densityGaussian(x;u, \tau^2\, \Idd)\rmd \mu(u)$.

	We now make the \underline{technical assumption} that for any $k\in \{1,2\}$, there exists $\varphi_{q,k}:\rset^d \to \rset_+$ such that $\int_{\rset^d}\varphi_{q,k}(u)\rmd \mu(u) < \infty$ and for any $(u, x)\in \rset^d \times \rset^d$, we have $\norm{\nabla_x^k \densityGaussian(x;u, \tau^2\, \Idd)}\leq \varphi_{p,k}(u)$. Then, by using again the result of \Cref{lemma:tweedie}-\eqref{eq:Tweedie_2}, we obtain that
	\begin{align}
		\nabla^2_x \log \pi(X)= -\frac{1}{\tau^2}\Idd + \frac{1}{\tau^4} \mathrm{Cov}[U|X] \eqsp ,
	\end{align}
	Since $\mathbb{E}[\norm{U-\mathbf{m}_\pi}^2]\leq dR^2$, we also have by Cauchy-Schwartz inequality that $\mathrm{Cov}[U|X] \preccurlyeq dR^2\,\Idd$\footnote{Note that this upper bound may be loose, since it does not depend on $X$.}. We obtain \eqref{eq:zeta_q_t} with similar reasoning as before.
\end{proof}

Note that the upper bounds obtained in \Cref{lemma:concavity} can be made tighter in the case where the target distribution is a Gaussian mixture, as shown in \Cref{lemma:gaussian_concavity}.

\begin{lemma} \label{lemma:gaussian_concavity} Let $a>0$, $\gamma>0$ and $w\in (0,1)$. Consider the target distribution $\pi$ defined as the mixture of the Gaussian distributions $\densityGaussian(-a \, \mathbf{1}_d, \gamma^2\, \Idd)$ and $\densityGaussian(+a \, \mathbf{1}_d, \gamma^2\, \Idd)$, with respective weights $w$ and $1-w$. Then for any $(x,y)\in \rset^d\times \rset^d$, we have
	\begin{align}
		\nabla^2_y \log p_t^\alpha(y)   & = \textstyle\frac{2\alpha(t)^2 a^2}{(\alpha(t)^2\gamma^2 + \sigma^2 t)^2(1+ \cosh{g(t,y)})}\,\mathbf{1}_d \mathbf{1}_d^\top -\frac{1}{\alpha(t)^2\gamma^2 + \sigma^2 t}\,\Idd \eqsp, \label{eq:hessian_gaussian_p}\\
		& \eqsp \text{with }g(t,y)=\frac{2 \alpha(t) a y^\top \mathbf{1}_d}{\alpha(t)^2\gamma^2 + \sigma^2 t} + \log(\frac{1}{w}-1) \nonumber \\
		\nabla^2_x \log q_t^\alpha(x|y) & = \textstyle\frac{2 a^2 }{\gamma^4(1+ \cosh{f(t,x)})} \,\mathbf{1}_d \mathbf{1}_d^\top -\frac{1}{\gamma^2}\, \Idd-\frac{g(t)^2}{\sigma^2} \, \Idd \eqsp ,\label{eq:hessian_gaussian_q}\\
		& \eqsp \text{with }f(t,x)=\frac{2  a x^\top \mathbf{1}_d}{\gamma^2}+ \log(\frac{1}{w}-1) \nonumber
	\end{align}
\end{lemma}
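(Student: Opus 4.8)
The plan is to avoid \Cref{lemma:tweedie} altogether and compute both Hessians by direct differentiation, exploiting that for a two-component Gaussian mixture with equal isotropic covariances the log-density is a quadratic form plus a one-dimensional log-sum-exp term whose second derivative has a clean closed form.

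\textbf{Step 1: Hessian of $\log p_t^\alpha$.} By the linear-combination rule recalled in \Cref{app:gaussian_mixture}, $p_t^\alpha$ is the mixture of $\densityGaussian(\cdot; -\alpha(t)a\mathbf{1}_d, s^2\Idd)$ and $\densityGaussian(\cdot; \alpha(t)a\mathbf{1}_d, s^2\Idd)$ with weights $w,1-w$, where $s^2:=\alpha(t)^2\gamma^2+\sigma^2 t$. I would expand $\norm{y\pm\alpha(t)a\mathbf{1}_d}^2$ and pull out the factors not depending on $y$, obtaining $\log p_t^\alpha(y)=c-\norm{y}^2/(2s^2)+\phi(u)$ with $u:=\alpha(t)a\,y^\top\mathbf{1}_d/s^2$, $\phi(u):=\log(we^{-u}+(1-w)e^u)$, and $c$ independent of $y$. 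Since $\nabla_y u=(\alpha(t)a/s^2)\mathbf{1}_d$ is constant, the chain rule gives $\nabla_y^2\log p_t^\alpha(y)=-s^{-2}\Idd+\phi''(u)(\alpha(t)a/s^2)^2\mathbf{1}_d\mathbf{1}_d^\top$. With $p:=we^{-u}$, $q:=(1-w)e^u$ one gets $\phi''=4pq/(p+q)^2$; then, using $1+\cosh v=\tfrac14(e^{v/2}+e^{-v/2})^2$ and $e^{\pm v/2}=e^{\pm u}(\tfrac{1-w}{w})^{\pm 1/2}$ with $v:=2u+\log\tfrac{1-w}{w}$, I would derive $(p+q)^2=2w(1-w)(1+\cosh v)$, hence $\phi''(u)=2/(1+\cosh v)$. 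Since $v=g(t,y)$, this is exactly \eqref{eq:hessian_gaussian_p}.

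\textbf{Step 2: Hessian of $\log q_t^\alpha(\cdot|y)$.} From $q_t^\alpha(x|y)\propto\pi(x)\,\densityGaussian(x;y/\alpha(t),\sigma^2/g(t)^2\,\Idd)$ we have $\nabla_x^2\log q_t^\alpha(x|y)=\nabla_x^2\log\pi(x)-(g(t)^2/\sigma^2)\Idd$, so only $\log\pi$ needs differentiating. This is the $\alpha(t)=1$, $s^2=\gamma^2$ instance of Step 1: writing $\log\pi(x)=c'-\norm{x}^2/(2\gamma^2)+\phi(ax^\top\mathbf{1}_d/\gamma^2)$ and repeating the computation with $v=f(t,x)=2ax^\top\mathbf{1}_d/\gamma^2+\log\tfrac{1-w}{w}$ yields \eqref{eq:hessian_gaussian_q}.

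\textbf{Step 3: the refinement of \Cref{lemma:concavity}.} Since $\mathbf{1}_d\mathbf{1}_d^\top$ has largest eigenvalue $d$ and $1+\cosh v\geq 2$ for all $v$, the two displays above immediately give $\nabla_y^2\log p_t^\alpha(y)\preccurlyeq\big(\tfrac{\alpha(t)^2 a^2 d}{(\alpha(t)^2\gamma^2+\sigma^2 t)^2}-\tfrac{1}{\alpha(t)^2\gamma^2+\sigma^2 t}\big)\Idd$ and the analogous bound for $q_t^\alpha$; substituting $\tau=\gamma$ and $a=R/\{2\max(w,1-w)\}$ from \Cref{app:gaussian_mixture} identifies these with $\zeta_p,\zeta_q$ of \Cref{lemma:concavity} after replacing $R$ by $R/\{2\max(w,1-w)\}$. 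I do not expect a genuine obstacle --- the only care needed is the $\cosh$ bookkeeping --- and working directly with the explicit mixture density has the side benefit of making the technical domination hypotheses of \Cref{lemma:tweedie}/\Cref{lemma:concavity} unnecessary, since Gaussian tails dominate every integrand involved.
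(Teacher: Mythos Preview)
Your argument is correct and complete, but follows a different route from the paper. The paper's proof stays within the framework of \Cref{lemma:concavity}: it invokes the extended Tweedie formula \eqref{eq:Tweedie_2} to write $\nabla_y^2\log p_t^\alpha(y)=-s^{-2}\Idd+\alpha(t)^2 s^{-4}\,\mathrm{Cov}[U\mid Y_t^\alpha=y]$ (with $s^2=\alpha(t)^2\gamma^2+\sigma^2 t$), then computes this conditional covariance explicitly for the two-atom latent $U\in\{\pm a\,\mathbf{1}_d\}$, obtaining $\{1-(\tilde w_2-\tilde w_1)^2\}\,a^2\mathbf{1}_d\mathbf{1}_d^\top$ and using $1-\tanh^2=2/(1+\cosh)$ to simplify. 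The second display follows by the same device applied to $\nabla_x^2\log\pi$. You instead bypass \Cref{lemma:tweedie} entirely and differentiate the explicit log-sum-exp form by hand, which is more elementary and, as you note, makes the domination hypotheses superfluous. Both arguments are valid; the paper's has the merit of exhibiting the refinement of \Cref{lemma:concavity} as a sharper bound on the \emph{same} covariance term appearing in the general proof, while yours is more self-contained. One small slip: your displayed identity $1+\cosh v=\tfrac14(e^{v/2}+e^{-v/2})^2$ should read $\tfrac12(e^{v/2}+e^{-v/2})^2$; fortunately your next claim $(p+q)^2=2w(1-w)(1+\cosh v)$ is correct (since $pq=w(1-w)$), so the typo does not propagate.
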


The proof of this result is inspired from the derivation of \citep[Eq. (4.6)]{saremi2023chain}.

\begin{proof} As explained in \Cref{app:gaussian_mixture}, $\pi$ verifies \Cref{ass:target}, where $\tau=\gamma$ and $\mu$ is a mixture of two Dirac masses at $-a\, \mathbf{1}_d$ and $a\, \mathbf{1}_d$ with respective weights $w$ and $1-w$, whose density is given by
	$p(u)=w\mathbbm{1}_{-a\, \mathbf{1}_d}(u)+ (1-w)\mathbbm{1}_{a\, \mathbf{1}_d}(u)$.

	We recall that the stochastic process $(Y_t^\alpha)_{t\in [0, T_{\text{gen}})}$, given in \eqref{eq:def_y_t-gen}, is defined by $Y^\alpha_t= \alpha(t)X + \sigma W_t$ where $X\sim \pi$ and $(W_t)_{t\geq 0}$ is a standard Brownian motion. Due to \Cref{ass:target}, $X$ can be written as $X= U + G$ where $U\sim \mu$ and $G\sim \densityGaussian(0, \gamma^2\, \Idd)$.

	We begin by proving \eqref{eq:hessian_gaussian_p}. Following the first part of the proof of \Cref{lemma:concavity}, we have
	\begin{align}
		\nabla^2_y \log p_t^\alpha(y)= -\frac{1}{\alpha(t)^2\gamma^2 + \sigma^2 t}\Idd + \frac{\alpha(t)^2}{(\alpha(t)^2\gamma^2 + \sigma^2 t)^2}\mathrm{Cov}[U_y] \eqsp ,
	\end{align}
	where $U_y$ is a random vector distributed according to $p_t^\alpha(\cdot|y)$, the conditional distribution of $U$ given $Y_t^\alpha=y\in \rset^d$, whose density is given by
	\begin{align}
		p_t^\alpha(u|y)\propto p(u) p_t^\alpha(y|u) \eqsp ,
	\end{align}
	where $p_t^\alpha(y|u)$ is defined in \eqref{eq:p_t_y_given_u}. Therefore, we obtain that
	\begin{align}
		p_t^\alpha(u|y)\propto \underbrace{w \exp\left(-\frac{\norm{y+\alpha(t)a \mathbf{1}_d}^2}{2 \{\alpha(t)^2\gamma^2 + \sigma^2 t\}}\right)}_{w_1(t,y)}\mathbf{1}_{-a\, \mathbf{1}_d}(u) + \underbrace{(1-w)\exp\left(-\frac{\norm{y-\alpha(t)a \mathbf{1}_d}^2}{2\{\alpha(t)^2\gamma^2 + \sigma^2 t\}}\right)}_{w_2(t,y)}\mathbf{1}_{a\, \mathbf{1}_d}(u) \eqsp .
	\end{align}
	Hence, $p_t^\alpha(\cdot|y)$ is a mixture of two Dirac masses at $-a\, \mathbf{1}_d$ and $a\, \mathbf{1}_d$ with respective weights $\tilde{w}_1(t,y)=w_1(t,y)/\{w_1(t,y)+ w_2(t,y)\}$ and $\tilde{w}_2(t,y)=w_2(t,y)/\{w_1(t,y)+ w_2(t,y)\}$. Therefore, we get that
	\begin{align}
		\mathrm{Cov}[U_y] & = \PE[U_y U_y^\top]- \PE[U_y] \PE[U_y]^\top                                                                                                                                             \\
		                  & = \tilde{w}_1(t,y) a^2 \mathbf{1}_d \mathbf{1}_d^\top + \tilde{w}_2(t,y) a^2 \mathbf{1}_d \mathbf{1}_d^\top - (\tilde{w}_2(t,y)-\tilde{w}_1(t,y))^2  a^2 \mathbf{1}_d \mathbf{1}_d^\top \\
		                  & = \{1-(\tilde{w}_2(t,y)-\tilde{w}_1(t,y))^2\}a^2 \mathbf{1}_d \mathbf{1}_d^\top \eqsp .
	\end{align}
	In particular, we have
	\begin{align}
		\tilde{w}_2(t,y)-\tilde{w}_1(t,y)= \frac{w_2(t,y)-w_1(t,y)}{w_2(t,y)+w_1(t,y)}= \frac{\exp(\log w_2(t,y))-\exp(\log w_1(t,y))}{\exp(\log w_2(t,y))-\exp(\log w_1(t,y))} \eqsp ,
	\end{align}
	where $\log w_1(t,y)=-\frac{\norm{y+\alpha(t)a \mathbf{1}_d}^2}{2\{\alpha(t)^2\gamma^2 + \sigma^2 t\}} + \log(w)$ and $\log w_2(t,y)=-\frac{\norm{y-\alpha(t)a \mathbf{1}_d}^2}{2 \{\alpha(t)^2\gamma^2 + \sigma^2 t\}} +\log(1-w) $.
	Then, it comes that
	\begin{align}
		1-(\tilde{w}_2(t,y)-\tilde{w}_1(t,y))^2 & = 1- \tanh\left(\frac{\log w_2(t,y)-\log w_1(t,y)}{2}\right) \\
		                                        & = \frac{2}{1+ \cosh(\log w_2(t,y)-\log w_1(t,y))} \eqsp .
	\end{align}
	Moreover, we get that
	\begin{align}
		\log w_2(t,y)-\log w_1(t,y)= \frac{2 \alpha(t)a y^\top \mathbf{1}_d}{\alpha(t)^2\gamma^2 + \sigma^2 t} + \log \left(\frac{1}{w}-1\right) \eqsp .
	\end{align}
	Denote $g(t,y)= \log w_2(t,y)-\log w_1(t,y)$. We finally obtain \eqref{eq:hessian_gaussian_p} by combining previous computations.

	We now prove \eqref{eq:hessian_gaussian_q}.  Following the second part of the proof of \Cref{lemma:concavity}, we have
	\begin{align}
		\nabla^2_x \log q_t^\alpha(x|y)= -\frac{1}{\gamma^2}\Idd -\frac{g(t)^2}{\gamma^2}\Idd + \frac{1}{\gamma^4} \mathrm{Cov}[U_x] \eqsp ,
	\end{align}
	where $U_x$ is a random vector distributed according to $\mu(\cdot|x)$, the conditional distribution of $U$ given $X=x\in \rset^d$, whose density is given by
	\begin{align}
		\mu(u|x)\propto \densityGaussian(x;u, \gamma^2 \, \Idd) \mu(u) \eqsp .
	\end{align}
	Therefore, we obtain that
	\begin{align}
		\mu(u|x)\propto \underbrace{w \exp\left(-\frac{\norm{x+a \mathbf{1}_d}^2}{2 \gamma^2 }\right)}_{w_1(t,x)}\mathbf{1}_{-a\, \mathbf{1}_d}(u) + \underbrace{(1-w)\exp\left(-\frac{\norm{x-a \mathbf{1}_d}^2}{2\gamma^2 }\right)}_{w_2(t,x)}\mathbf{1}_{a\, \mathbf{1}_d}(u) \eqsp .
	\end{align}
	Hence, $\mu(\cdot|x)$ is a mixture of two Dirac masses at $-a\, \mathbf{1}_d$ and $a\, \mathbf{1}_d$ with respective weights $\tilde{w}_1(t,x)=w_1(t,x)/\{w_1(t,x)+ w_2(t,x)\}$ and $\tilde{w}_2(t,x)=w_2(t,x)/\{w_1(t,x)+ w_2(t,x)\}$. Therefore, we get that
	\begin{align}
		\mathrm{Cov}[U_x] & = \PE[U_x U_x^\top]- \PE[U_x] \PE[U_x]^\top                                                                                                                                             \\
		                  & = \tilde{w}_1(t,x) a^2 \mathbf{1}_d \mathbf{1}_d^\top + \tilde{w}_2(t,x) a^2 \mathbf{1}_d \mathbf{1}_d^\top - (\tilde{w}_2(t,x)-\tilde{w}_1(t,x))^2  a^2 \mathbf{1}_d \mathbf{1}_d^\top \\
		                  & = \{1-(\tilde{w}_2(t,x)-\tilde{w}_1(t,x))^2\}a^2 \mathbf{1}_d \mathbf{1}_d^\top \eqsp .
	\end{align}
	Similarly to the computations derived above, we obtain that
	\begin{align}
		1-(\tilde{w}_2(t,x)-\tilde{w}_1(t,x))^2= \frac{2}{1+ \cosh f(t,x)} \eqsp ,
	\end{align}
	where
	\begin{align}
		f(t,x)= \log w_2(t,x) - \log w_1(t,x)= \frac{2  a x^\top \mathbf{1}_d}{\gamma^2}+ \log\left(\frac{1}{w}-1\right) \eqsp ,
	\end{align}
	which finally leads to \eqref{eq:hessian_gaussian_q}.

\end{proof}
We provide below a result, that combines formal versions of \Cref{th:q-log-concave} and \Cref{th:p-log-concave}.
\begin{proposition} \label{prop:log_concavity}  Assume \Cref{ass:target}. Denote by $g^{-1}$ the inverse function of the denoising schedule $g$.

	If $dR^2 >\tau^2$, define
	\begin{align}
		\textstyle t_{\mathrm{p}} = g^{-1}\left(\frac{\sigma}{\sqrt{dR^2-\tau^2}}\right) \quad \text{and} \quad
		t_{\mathrm{q}} = g^{-1}\left(\frac{\sigma\sqrt{dR^2-\tau^2}}{\tau^2}\right) \eqsp ,
	\end{align}
	otherwise, define $t_{\mathrm{p}}=T_{\text{gen}}$ and $t_{\mathrm{q}}=0$. Then,
	\begin{enumerate}[wide, labelindent=0pt, label=(\alph*)]
		\item \label{item:q_concave} for any $y\in \rset^d$, $q^\alpha_t(\cdot|y)$ is strongly log-concave for $t\in (t_{\mathrm{q}}, T_{\text{gen}})$ and gets more log-concave as $t$ increases,
		\item \label{item:p_concave} $p_t^\alpha$ is strongly log-concave for $t\in (0,t_{\mathrm{p}})$.
	\end{enumerate}
\end{proposition}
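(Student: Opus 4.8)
The plan is to deduce both assertions directly from \Cref{lemma:concavity} by a sign analysis of the two scalar functions $\zeta_p$ and $\zeta_q$. Recall that a positive density $\rho$ on $\rset^d$ is strongly log-concave exactly when $\nabla^2\log\rho\preccurlyeq -c\,\Idd$ for some $c>0$. Since \Cref{lemma:concavity} provides the \emph{uniform-in-space} bounds $\nabla^2_y\log p_t^\alpha(y)\preccurlyeq \zeta_p(t)\,\Idd$ and $\nabla^2_x\log q_t^\alpha(x\mid y)\preccurlyeq \zeta_q(t)\,\Idd$, it suffices, on the set of $t$'s where $\zeta_p(t)<0$ (resp. $\zeta_q(t)<0$), to conclude strong log-concavity of $p_t^\alpha$ (resp. of $q_t^\alpha(\cdot\mid y)$ for every $y$) with constant $\abs{\zeta_p(t)}$ (resp. $\abs{\zeta_q(t)}$). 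Thus the whole proof reduces to solving the two inequalities $\zeta_p(t)<0$ and $\zeta_q(t)<0$.

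For \ref{item:q_concave}: from the expression of $\zeta_q$ I would rewrite $\zeta_q(t)<0$ as $g(t)^2>\sigma^2(dR^2-\tau^2)/\tau^4$. If $dR^2\le\tau^2$, the right-hand side is $\le 0<g(t)^2$, so $\zeta_q<0$ throughout $(0,T_{\text{gen}})$, consistent with the convention $t_{\mathrm q}=0$. If $dR^2>\tau^2$, Assumptions \ref{item:gsl-fun}--\ref{item:gsl-T} make $g$ a continuous strictly increasing bijection of $(0,T_{\text{gen}})$ onto $(0,\infty)$ (the left endpoint being $0$ by \ref{item:gsl-0}), so $g^{-1}$ is well defined and strictly increasing, and the inequality becomes $t>g^{-1}\!\big(\sigma\sqrt{dR^2-\tau^2}/\tau^2\big)=t_{\mathrm q}$. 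The refinement ``more log-concave as $t$ increases'' is then immediate, since \Cref{lemma:concavity} asserts $\zeta_q$ is strictly decreasing, hence the log-concavity constant $-\zeta_q(t)$ is strictly increasing on $(t_{\mathrm q},T_{\text{gen}})$.

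For \ref{item:p_concave}: I would clear the positive denominator $\alpha(t)^2\tau^2+\sigma^2 t$ in $\zeta_p(t)$ to get $\zeta_p(t)<0\iff \alpha(t)^2(dR^2-\tau^2)<\sigma^2 t$, and then, using $\alpha(t)^2=t\,g(t)^2$ and dividing by $t>0$, $\zeta_p(t)<0\iff g(t)^2(dR^2-\tau^2)<\sigma^2$. When $dR^2\le\tau^2$ this holds on all of $(0,T_{\text{gen}})$, matching $t_{\mathrm p}=T_{\text{gen}}$; when $dR^2>\tau^2$ it reads $g(t)^2<\sigma^2/(dR^2-\tau^2)$, i.e. $t<g^{-1}\!\big(\sigma/\sqrt{dR^2-\tau^2}\big)=t_{\mathrm p}$, again by monotonicity of $g^{-1}$. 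Combining the two items completes the proof.

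I do not expect a genuine obstacle: all the analytic substance sits in \Cref{lemma:concavity} (itself built on Tweedie's formula and the compact-support bound $\mathrm{Cov}[U\mid\cdot]\preccurlyeq dR^2\,\Idd$ coming from \Cref{ass:target}, together with the stated domination hypotheses on $\pi$). The only points deserving care are (i) checking that $g^{-1}$ exists and is monotone on the range appearing in the definitions of $t_{\mathrm p},t_{\mathrm q}$, which is where Assumptions \ref{item:gsl-fun}--\ref{item:gsl-T} enter; (ii) keeping the regimes $dR^2>\tau^2$ and $dR^2\le\tau^2$ separate so that the degenerate conventions $t_{\mathrm q}=0$ and $t_{\mathrm p}=T_{\text{gen}}$ fall out correctly; and (iii) observing that the uniformity in $x,y$ of the Hessian bounds is precisely what turns the pointwise sign conditions on $\zeta_p,\zeta_q$ into genuine (uniform-in-space) strong log-concavity of the densities.
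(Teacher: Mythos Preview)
Your proposal is correct and follows essentially the same approach as the paper: both reduce the statement to a sign analysis of $\zeta_p$ and $\zeta_q$ from \Cref{lemma:concavity}, perform the identical algebraic manipulations (clearing the denominator in $\zeta_p$, using $\alpha(t)^2=t\,g(t)^2$, and inverting $g$ via its monotone bijectivity), and appeal to the strict monotonicity of $\zeta_q$ for the ``more log-concave'' refinement. If anything, you are slightly more explicit than the paper in separating the two regimes $dR^2>\tau^2$ and $dR^2\le\tau^2$.
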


\begin{proof} Consider the expressions of $t_{\mathrm{p}}$ and $t_{\mathrm{q}}$ given above. Note that they are well defined in the case where $dR^2 >\tau^2$ since $g$ is a bijection from $[0,T_{\text{gen}} )$ to $\rset_+$. We begin with the proof of the result \ref{item:q_concave}. Let $t\in (0, T_{\text{gen}})$. Following \Cref{lemma:concavity}-\eqref{eq:zeta_q_t}, for any $y\in \rset^d$, $q_t^\alpha(\cdot|y)$ is strongly log concave if
	\begin{align}
		\zeta_q(t)<0 \iff g(t)^2 > \frac{\sigma^2 (dR^2-\tau^2)}{\tau^4} \iff t > t_{\mathrm{q}} \eqsp .
	\end{align}
	Moreover, $q_t^\alpha$ gets more log-concave as $t$ increases as $\zeta_q$ is a strictly decreasing function, see \Cref{lemma:concavity}. We now give the proof of the result \ref{item:p_concave}. Let $t\in (0, T_{\text{gen}})$. Following \Cref{lemma:concavity}-\eqref{eq:zeta_p_t}, $p_t^\alpha$ is strongly log concave if
	\begin{align}
		\zeta_p(t)<0 \iff \frac{\alpha(t)^2 d R^2}{\alpha(t)^2\tau^2 +\sigma^2t } <1 \iff g(t)^2 (dR^2-\tau^2)<\sigma^2 \iff t <  t_{\mathrm{p}} \eqsp .
	\end{align}
\end{proof}

Hence, this result shows that the condition $dR^2 >\tau^2$ is restrictive on the log-concavity of the marginal distribution and the posterior of the localization model. In terms of Gaussian mixtures, this condition can be interpreted as having the distance between the modes that is larger than the variance of the modes. We now restate \Cref{th:duality} and give its proof.

\begin{proposition} \label{prop:tq_tp} Assume \Cref{ass:target}, where $dR^2 < 2\tau^2$. Then, $t_{\mathrm{q}}<t_{\mathrm{p}}$, where $t_{\mathrm{q}}$ and $t_{\mathrm{p}}$ are defined in \Cref{prop:log_concavity}.
\end{proposition}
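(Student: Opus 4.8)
The plan is to simply unwind the explicit formulas for $t_{\mathrm q}$ and $t_{\mathrm p}$ recorded in \Cref{prop:log_concavity} and reduce the claimed inequality to the hypothesis $dR^2<2\tau^2$. The argument splits into two cases according to the sign of $dR^2-\tau^2$, exactly mirroring the two-case definition of these times.

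First I would treat the easy case $dR^2\le \tau^2$. Here \Cref{prop:log_concavity} gives $t_{\mathrm q}=0$ and $t_{\mathrm p}=T_{\text{gen}}$, and since $T_{\text{gen}}\in(0,\infty]$ is strictly positive, $t_{\mathrm q}<t_{\mathrm p}$ is immediate.

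Next I would handle the case $\tau^2<dR^2<2\tau^2$, where both times are finite and given by
$$t_{\mathrm q}=g^{-1}\!\left(\frac{\sigma\sqrt{dR^2-\tau^2}}{\tau^2}\right),\qquad t_{\mathrm p}=g^{-1}\!\left(\frac{\sigma}{\sqrt{dR^2-\tau^2}}\right),$$
where $g^{-1}$ is well defined and strictly increasing on $\rset_+$ because, by \ref{item:gsl-fun} and \ref{item:gsl-T}, $g$ is a continuous strictly increasing bijection from $[0,T_{\text{gen}})$ onto $\rset_+$. Hence $t_{\mathrm q}<t_{\mathrm p}$ is equivalent to comparing the two arguments,
$$\frac{\sigma\sqrt{dR^2-\tau^2}}{\tau^2}<\frac{\sigma}{\sqrt{dR^2-\tau^2}},$$
and, since $\sigma>0$ and $dR^2-\tau^2>0$ in this regime, multiplying through by $\sqrt{dR^2-\tau^2}/\sigma>0$ turns this into $dR^2-\tau^2<\tau^2$, i.e. $dR^2<2\tau^2$, which holds by hypothesis. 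This closes the second case and the proof.

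There is essentially no hard step: the only points needing a word of justification are that $g^{-1}$ is well defined and monotone—so that comparing the two times reduces to comparing the arguments—and the bookkeeping of the two-case definition of $t_{\mathrm p},t_{\mathrm q}$; the core is the one-line algebraic equivalence between $t_{\mathrm q}<t_{\mathrm p}$ and $dR^2<2\tau^2$. For the sharper variant alluded to after the theorem (Gaussian mixtures), one would rerun the identical computation with $R$ replaced by $\tilde R=R/\{2\max(w,1-w)\}$ from \Cref{lemma:gaussian_concavity}, which only weakens the required condition.
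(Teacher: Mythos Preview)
Your proof is correct and follows essentially the same route as the paper: both split into the cases $dR^2\le\tau^2$ (trivial) and $dR^2>\tau^2$, and in the latter use the monotonicity of $g^{-1}$ to reduce $t_{\mathrm q}<t_{\mathrm p}$ to the algebraic inequality $(dR^2-\tau^2)^2<\tau^4$, i.e.\ $dR^2<2\tau^2$. Your version is slightly more explicit about why $g^{-1}$ is a strictly increasing bijection, but the argument is otherwise identical.
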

\begin{proof} If $dR^2 \leq \tau^2$, this result is directly obtained since $t_{\mathrm{q}}=0$ and $t_{\mathrm{p}}=T_{\text{gen}}$. Assume that $dR^2 >\tau^2$. We have
	\begin{align}
		t_{\mathrm{q}}<t_{\mathrm{p}} \iff \frac{dR^2-\tau^2}{\tau^4}<\frac{1}{dR^2-\tau^2} \iff (dR^2-\tau^2)^2 < \tau^4 \iff dR^2 <2 \tau^2 \eqsp ,
	\end{align}
	which gives the result.
\end{proof}

Hence, \Cref{prop:tq_tp} shows that a sweet spot for the hyper-parameter $t_0$ in $\SLIPS$ exists under a restrictive condition on $R$ and $\tau$ in Assumption \Cref{ass:target}, enabling the duality of log-concavity.



\section{Details on $\SLIPS$ algorithm} \label{app:slips}

\subsection{Details on the implementation of $\SLIPS$}

\paragraph{Langevin-within-Langevin initialization.} Consider a timestep $t_0$ well chosen such that $p_{t_0}^\alpha$ and $q_{t_0}^\alpha$ are both approximately log-concave. The goal of our initialization is to sample from $p_{t_0}^\alpha$ at lowest cost. Thanks to log-concavity of $p_{t_0}^\alpha$, we may consider to apply ULA to sample from this distribution. Given $N\geq 1$ and a step-size $\lambda>0$, this amounts to consider a sequence of random variables $\{Y^{(n)}\}_{n=0}^N$, defined by the following recursion
\begin{align}
	Y^{(n+1)}= Y^{(n)} + \lambda \nabla \log p^\alpha_{t_0}(Y^{(n)}) + \sqrt{2\lambda}Z^{(n+1)} \eqsp , \label{eq:ula}
\end{align}
where $\{Z^{(n)}\}_{n=1}^N$ are independently distributed according to the standard centered Gaussian distribution. We recall that the score $\nabla \log p^\alpha_{t_0}$ is however not tractable but can be re-expressed with \eqref{eq:link_score_general} such that for any $y\in \rset^d$, we have
\begin{align}
	\nabla \log p_{t_0}^{\alpha}(y)=\{\alpha(t_0)u^\alpha_{t_0}(y)-y\}/\sigma^2 t_0 \eqsp ,
\end{align}
where $u^\alpha_{t_0}(y)$ is the expectation of the posterior $q^\alpha_{t_0}(\cdot|y)$ given in \eqref{eq:general_posterior}. Although this term is intractable too, it can be estimated by approximately sampling from $q^\alpha_{t_0}$ via MALA, since $q^\alpha_{t_0}$ is also ensured to be approximately log-concave.

Building upon this relation, at each step $n\in \{0, \hdots, N-1\}$, we propose to approximate $\nabla \log p^\alpha_{t_0}(Y^{(n)})$ in the recursion \eqref{eq:ula} with a MCMC-based estimator of $u^\alpha_{t_0}(Y^{(n)})$ obtained by sampling from $q^\alpha_{t_0}(\cdot|Y^{(n)})$. This results in the Langevin-within-Langevin procedure summarized in \Cref{alg:langevin_within_langevin}. We finally highlight three main choices in the design of this algorithm: (i) we choose the Gaussian approximation $Y^{(0)} \sim \densityGaussian(0, \sigma^2 t_0 \, \Idd)$ for ULA initialization, relying on the study conducted in \Cref{subsec:gaussian_init}; (ii) consequently, we set the step-size $\lambda$ to be slightly smaller than the variance of this distribution, \ie, $\lambda=\sigma^2 t_0 / 2$; (iii) the MALA-based posterior sampling is initialized with the best estimation of $u^\alpha_{t_0}(Y^{(n)})$ that we have, \ie, $Y^{(n)} / \alpha(t_0)$, recalling that $Y^\alpha_t / \alpha(t)$ and $u^\alpha_t(Y^\alpha_t)$ have the same localization behaviour, see \Cref{subapp:sto-loc-th}.
\vspace{-0.1cm}
\paragraph{Algorithmic technicalities.}We now present three algorithmic subtleties featured in $\SLIPS$.
\begin{enumerate}[wide, labelindent=0pt, label=(\alph*)]
	\vspace{-0.2cm}
	\item The step-size of MALA used to sample from the posterior is adapted. Using the acceptance ratio of the Metropolis-Hasting filter, we geometrically decrease (respectively increase) the step-size when the acceptance ratio is below (respectively above) a target ratio of 75\%;
	      \vspace{-0.1cm}
	\item The last MCMC samples obtained when sampling the posterior at step $k$ will be the first samples when sampling at step $k+1$. This persistent trick is motivated by the fact that the posterior is expected to change little from one iteration to the next. This behaviour is notably illustrated in \Cref{fig:fig1} \textit{(bottom row)}.
	      \vspace{-0.2cm}
	\item Lastly, as mentioned and justified in \Cref{subapp:sto-loc-th},  we use the estimated denoiser $\tU^\alpha_T$ at the final integration timestep $T$ as an approximate sample from $\pi$ rather than $Y^\alpha_T/\alpha(T)$ to align with related work.
\end{enumerate}
\subsection{Ablation study}\label{app:mcmc_ablation_study}

In this section, we investigate the impact of the different hyper-parameters of $\SLIPS$. Similarly to \Cref{app:perfect_score}, we run our study by applying our algorithm on the unbalanced bimodal Gaussian mixture defined in \eqref{eq:target_mixture} with $d=10$. This section will clarify three points:
\begin{enumerate}[wide, labelindent=0pt, label=(\alph*)]
	\vspace{-0.1cm}
	\item The behaviour of $\SLIPS$ within the assumptions of \Cref{th:duality};
	      \vspace{-0.1cm}
	\item The existence of a ``sweet spot" for $t_0$ outside the assumptions of \Cref{th:duality} and the impact of the estimation of $R_{\pi}$;
	      \vspace{-0.1cm}
	\item The tuning of hyper-parameters in the Langevin-within-Langevin initialization (see \Cref{alg:langevin_within_langevin}).
\end{enumerate}

\paragraph{Behaviour of $\SLIPS$ within the restrictive assumptions.} In this part only, we consider the case of a mixture of two Gaussian distributions in dimension 5 given by $\densityGaussian(-(2/3) a \, \mathbf{1}_5, \tau^2 \mathrm{I}_{5})$ and $\densityGaussian(+(4/3) a \, \mathbf{1}_5, \tau^2 \mathrm{I}_{5})$, where $\tau^2=0.1$ and $a>0$ is \emph{not fixed}, with respective weights $2/3$ and $1/3$. By varying $a$, we vary $R$, recalling that $R = 2a \sqrt{2}/3$, see \Cref{app:gaussian_mixture}. In \Cref{fig:app:within_ass}, we run $\SLIPS$ with different schemes while setting $t_0$ as the mean value between $t_p$ and $t_q$ derived in \Cref{prop:log_concavity}, in accordance with \Cref{th:duality}. When $p^{\alpha}_t$ and $q^{\alpha}_t$ are both log-concave independently of $t$, \ie, when $dR^2/\tau^2< 1$ (see \Cref{prop:log_concavity}), we always set $t_0 = 10^{-2}$. This choice ensures that the Gaussian approximation which corresponds to the initialization of the Langevin-within-Langevin algorithm is accurate (see \Cref{fig:app:impact_t_0} for details). \Cref{fig:app:within_ass} shows that choosing $t_0$ according to \Cref{prop:log_concavity} leads to accurate sampling below the threshold and degraded (or equal) performance above this threshold.

\paragraph{Existence of a sweet spot for duality outside of the restrictive assumptions.}

\begin{figure}[t!]
	\centering
	\includegraphics[width=0.9\linewidth]{res/mcmc_score/th_ass_new.pdf}
	\caption{Sliced Wasserstein distance depending on the ratio $R^2 d / \tau^2$. \textit{left}: $R^2 d / \tau^2 = 2$ is the threshold of duality of log-concavity in \Cref{th:duality}. In both cases, the grey line corresponds to the threshold where the log-concavity conditions start to be restrictive.} 
	\label{fig:app:within_ass}
	\vspace{-0.3cm}
\end{figure}

Given the target distribution described in \eqref{eq:target_mixture} with $d=10$, we have that $R^2 = 8/9$ and $\tau^2 = 0.05$, which means that $R^2 d / \tau^2 \approx 178$.  Hence, the distribution at stake does not fit the additional assumption of \Cref{th:duality}. However, \Cref{fig:app:w2_with_t_0} shows that there still exists a sweet spot for $t_0$ in this case. Moreover, we observe in \Cref{fig:app:w2_with_t_0_sigmas} that a poor estimation of $R_{\pi}$ shifts the sweet spot. This makes sense as the sweet spot likely corresponds to a SNR level. Recalling that $\lsnr(t) = 2 \log(g(t)) + \log(R_{\pi}^2 / (\sigma^2 d))$, if $\sigma > R_{\pi} / \sqrt{d}$ (\ie, we overestimate $R_{\pi}$) the log-SNR is shifted downwards and the resulting $t_0$ should be larger (as we see in the right column of \Cref{fig:app:w2_with_t_0_sigmas}); if $\sigma < R_{\pi} / \sqrt{d}$ (\ie, we underestimate $R_{\pi}$), the resulting $t_0$ should be smaller (as we see in the left column of \Cref{fig:app:w2_with_t_0_sigmas}).

\paragraph{Choice of hyper-parameters in the Langevin-within-Langevin initialization.}
\Cref{fig:app:lwl_steps} shows that only a few steps in the Langevin-within-Langevin algorithm (see \Cref{alg:langevin_within_langevin}) are needed to reach a stationary sampling error.

\begin{figure}[t!]
	\centering
	\includegraphics{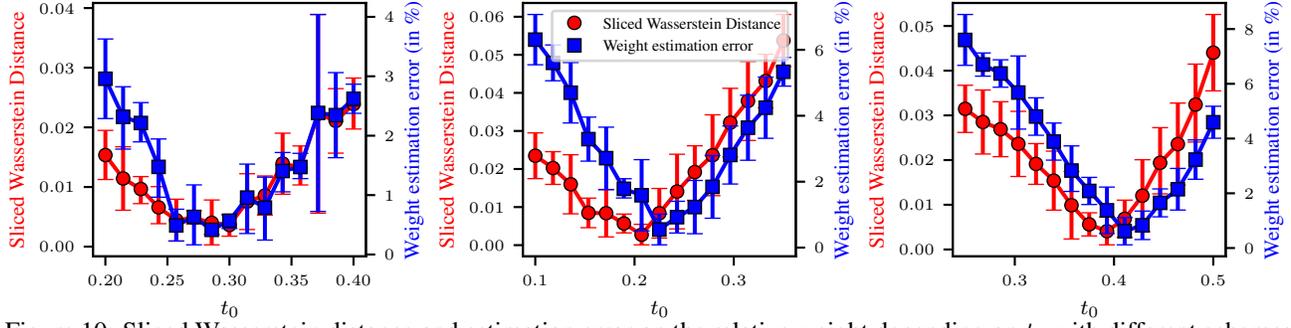}
	\vspace{-0.4cm}
	\caption{Sliced Wasserstein distance and estimation error on the relative weight depending on $t_0$ with different schemes. \textbf{Left}: Standard. \textbf{Middle}: Geom(1,1). \textbf{Right}: Geom(2,1).}
	\label{fig:app:w2_with_t_0}
\end{figure}

\begin{figure}[t!]
	\centering
	\includegraphics[width=\linewidth]{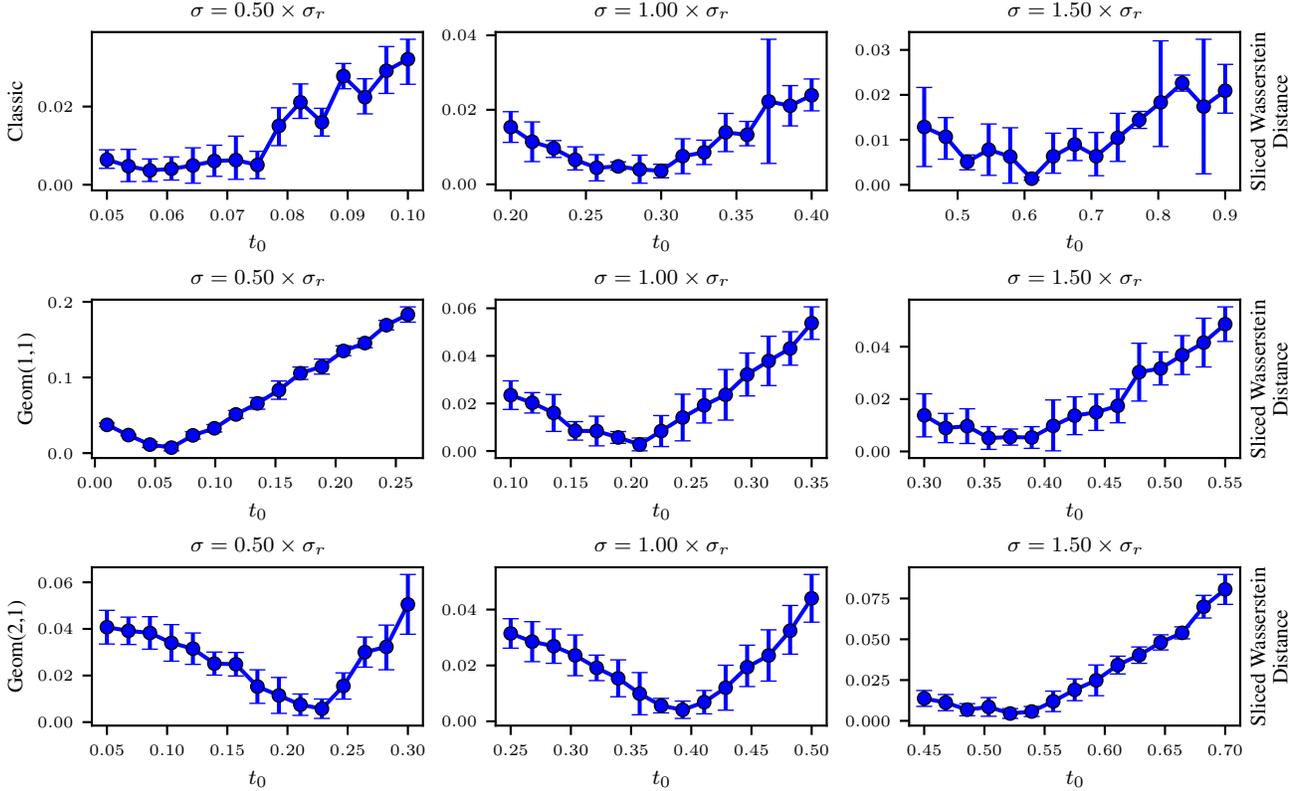}
	\vspace{-0.4cm}
	\caption{Sliced Wasserstein distance depending on $t_0$ for different values of $\sigma$. We denote $\sigma_r = R_{\pi} / \sqrt{d}$.}
	\label{fig:app:w2_with_t_0_sigmas}
\end{figure}

\begin{figure}[t!]
	\centering
	\includegraphics[width=\linewidth]{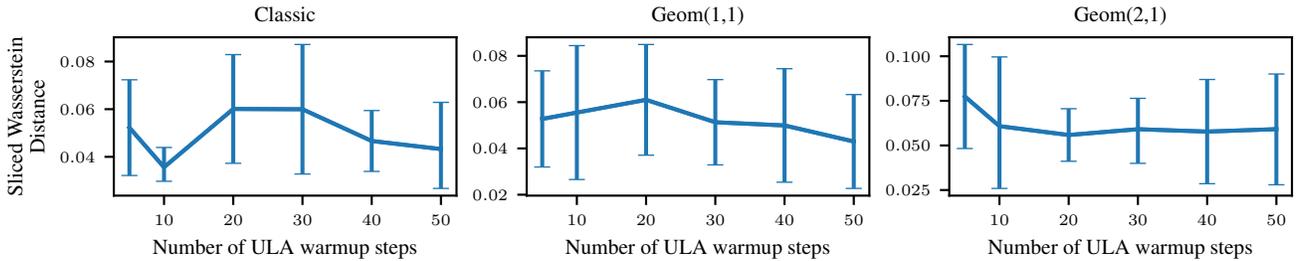}
	\caption{Sliced Wasserstein distance depending on the number of Langevin-within-Langevin steps.}
	\label{fig:app:lwl_steps}
\end{figure}

\section{Details on related work} \label{app:related_work}

In this appendix, we provide further details on related works, highlighting similarities and identifying the main limitations.

\subsection{Reverse Diffusion Monte Carlo \cite{huang2023monte}}

Given $T>0$, the authors consider the denoising process $(Y_t)_{t\in [0,T]}$ that is solution to the following SDE
\begin{align} \label{eq:SDE-diffusion}
	\rmd Y_t = \{Y_t+2 \nabla \log p_{T-t}(Y_t)\} +\sqrt{2}\rmd B_t \eqsp , \eqsp  Y_0\sim p_T \eqsp ,
\end{align}
where $(B_t)_{t\geq 0}$ is a Brownian motion in $\rset^d$ and $p_s$ is the \emph{intractable} marginal distribution defined for any $s>0$ and any $y\in \rset^d$ by $p_s(y)=\int_{\rset^d} \densityGaussian(y; \mathrm{e}^{-s}x, (1-\mathrm{e}^{-2s})\,\Idd)\rmd \pi(x)$. Under mild conditions on $\pi$ \cite{cattiaux2021time}, $(Y_t)_{t\in [0,T]}$ is the time-reversed process of the standard Ornstein-Uhlenbeck process defined in \eqref{eq:OU} and corresponds to a Variance-Preserving diffusion model \cite{song2020score}. For any $t\in [0,T]$, given $X\sim \pi$, we therefore have
\begin{align}
	Y_t=  \mathrm{e}^{-(T-t)}X + \sqrt{1-\mathrm{e}^{-2(T-t)}} Z \eqsp ,
\end{align}
where $Z$ is distributed according to the standard centered Gaussian distribution. In this case, in a similar fashion to the SL posterior given in \eqref{eq:general_posterior}, the conditional density of $X$ given $Y_t=y\in \rset^d$ can be defined as
\begin{align}\label{eq:posterior-RDMC}
	q_t(x|y) \propto \pi(x)\densityGaussian(x; \mathrm{e}^{T-t} y, (\mathrm{e}^{2(T-t)}-1)\, \Idd) , \eqsp t\in [0,T) \eqsp ,
\end{align}
and the denoiser function as $u_t(y)=\int_{\rset^d}x q_t(x|y)\rmd x$. Similarly to our framework, the random vector $u_t(Y_t)$ can be seen as the denoiser of $Y_t$. Following Tweedie's formula given in \Cref{lemma:tweedie}, the denoiser function is related to the score of $p_{T-t}$ for any $t\in[0,T)$ by
\begin{align}
	\nabla \log p_{T-t}(y)=-\frac{y}{1-\mathrm{e}^{-2(T-t)}}+ \frac{\mathrm{e}^{-(T-t)}}{1-\mathrm{e}^{-2(T-t)}} u_{t}(y) \eqsp .
\end{align}
Therefore, the SDE \eqref{eq:SDE-diffusion} describing the denoising process is strictly equivalent to the following SDE
\begin{align}\label{eq:SDE-diffusion-2}
	\rmd Y_t = \left\{\frac{\mathrm{e}^{-2(T-t)}+1}{\mathrm{e}^{-2(T-t)}-1}Y_t + \frac{2\mathrm{e}^{-(T-t)}}{1-\mathrm{e}^{-2(T-t)}}u_{t}(Y_t)\right\}\rmd t + \sqrt{2} \rmd B_t, \eqsp Y_0\sim p_T \eqsp .
\end{align}
This last SDE has to be compared with the SDE describing our observation process in \eqref{eq:SDE-gen}: in both cases, the drift involves the denoiser function, which is not tractable in practice. Here, the target distribution is approximated by the distribution of $Y_T$, while it is approximated by the distribution of $Y^\alpha_T/\alpha(T)$ in our framework.

The approach of \cite{huang2023monte} to sample from the SDE \eqref{eq:SDE-diffusion-2}, while handling the estimation of the denoiser $u_t$, is close to ours. Indeed, after discretizing this SDE with a certain time discretization $(t_k)_{k=1}^K$ of $[0,T]$, they propose to estimate the denoiser at time $t_k$ with a Markov Chain Monte Carlo estimation. To do so, at each step $t_k$, they approximately sample from the random posterior $q_{t_k}(\cdot|\tY_{t_k})$ with ULA (while we use MALA), where $\tY_{t_k}$ is the $k$-th realization of their discretized process. Besides this, they also propose a Langevin-within-Langevin initialization to approximately sample from $p_T$, that involves the corresponding posterior $q_0$. Although they briefly discuss the difficulty of this initialization by considering the log-Sobolev constants of $p_T$ and $q_0$, their analysis lacks readability and does not emphasize any crucial trade-off on $T$ (which corresponds to our $t_0$).

Their scheme can nonetheless be analyzed as a localization scheme under the scope of what we call ``duality of log-concavity", see \Cref{subsection:duality}. Assume that $\pi$ verifies \Cref{ass:target} and is not log-concave. Since we have $q_0(x|y) \propto \pi(x)\densityGaussian(x; \mathrm{e}^{T} y, (\mathrm{e}^{2T}-1)\, \Idd)$, one can show that
\begin{enumerate}[wide, labelindent=0pt, label=(\alph*)]
	\item if $T$ is large: $q_0$ will be close to $\pi$ (not log-concave), while $p_T$ will be close to $\densityGaussian(0, \Idd)$ (log-concave),
	\item if $T$ is small: $q_0$ will localize as a Dirac mass (log-concave), while $p_T$ will be close to $\pi$ (not log-concave).
\end{enumerate}

Therefore, the setting of $T$ is crucial in RDMC to ensure a proper initialization via Langevin-within-Langevin algorithm. However, the authors claim that setting $T$ large would only incur computational waste and that RDMC shows insensitivity to $T$, aside from the case where $T$ is too close to 0\footnote{See \citep[Appendix F.4]{huang2023monte}.}. As explained above, these claims are not realistic, since sampling from the posterior $q_0$ becomes as hard as sampling from $\pi$ when $T$ is large. We illustrate this fundamental limitation in \Cref{fig:app:rdmc_T}.

\begin{figure}[t!]
	\centering
	\includegraphics[width=\linewidth]{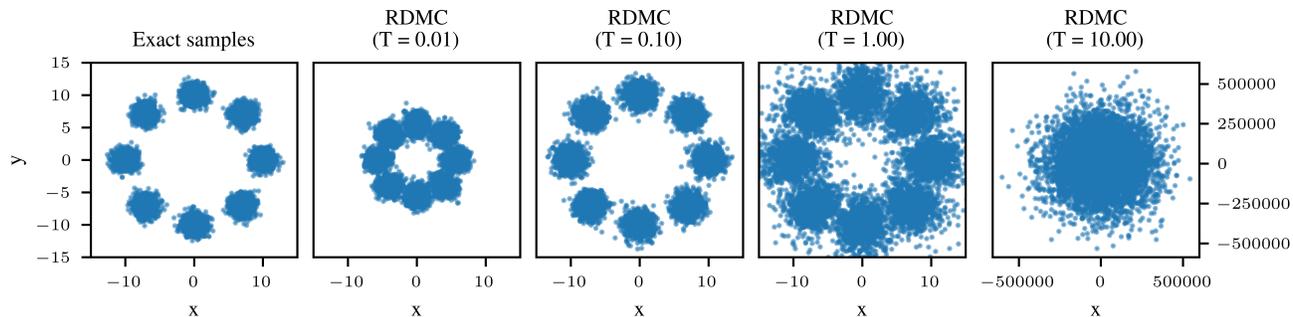}
	\caption{Sampling the 8-Gaussians distribution via RDMC with different values of $T$.}
	\label{fig:app:rdmc_T}
\end{figure}

In contrast, we pay a particular attention to point out the importance of our hyper-parameter $t_0$ throughout our analysis, with practical and theoretical perspectives. Finally, we highlight that RDMC relies on a uniform time discretization (while we propose a SNR-adapted discretization), and do not use persistent initialization of their Markov chains in posterior sampling.

\subsection{Multi-Noise Measurements sampling \cite{saremi2022multimeasurement, saremi2023universal, saremi2023chain}}

Given $M\geq 1$, the Multi-Noise Measurements (MNM) model introduced by \cite{saremi2022multimeasurement} defines a sequence of random variables $(Y^m)_{m=1}^M$ as
\begin{align}
	Y^m = X + \sigma Z^m \eqsp ,
\end{align}
where $X \sim \pi$, $\sigma > 0$ and $(Z^m)_{m=1}^M$ are independently distributed according to the standard centered Gaussian distribution. This \emph{non-Markovian} stochastic process can be seen as a denoising process. The rationale behind this formulation is that simulating an increasing number of \emph{equally} noised measurements of a sample from $\pi$ helps to obtain more information on this sample, and finally approximate it. We now explain how it can be interpreted as a non-Markovian analog to the standard localization scheme given in \eqref{eq:def_y_t}.

For any $m\in \{1, \hdots, M\}$, the conditional density of $X$ given the $m$-tuple $Y^{1:m}=y_{1:m}\in (\rset^d)^m$ is defined as
\begin{align}
	q_m(x|y_{1:m})\propto \pi(x) \densityGaussian(x; \bar{y}_{1:m}, \sigma^2/m \, \Idd) \eqsp ,
\end{align}
where $\bar{y}_{1:m}$ denotes the empirical mean of the noised measurements, \ie, $\bar{y}_{1:m}=(1/m)\sum_{i=1}^m y_i$. By defining $u_m(y_{1:m})=\int_{\rset^d}x q_m(x|y_{1:m})\rmd x$, one can derive the Bayes estimator of $X$ given $Y^{1:m}$ as the random vector $u_m(Y^{1:m})$ \cite{saremi2022multimeasurement}. Interestingly, this denoiser shares the same rate of convergence as the denoiser of standard stochastic localization given in \Cref{prop:conv_sto_loc-denoiser} when $t=m$. Indeed, \citep[Proposition 2]{saremi2023chain} states that for any target distribution $\pi$, we have $W_2(\pi, \tilde{\pi}_m)\leq \sigma \sqrt{d/m}$, where $\tilde{\pi}_m$ denotes the distribution of $u_m(Y^{1:m})$. In other words, the MNM denoiser localizes to $X$ with the same localization rate as the standard SL denoiser, defined in \Cref{sec:background}, by taking $T=M$.

To sample from $\pi$, the MNM approach consists in first sampling $Y^{1:M}$, and then computing the denoiser $u_M(Y^{1:M})$, in the same fashion as in the SL framework. Recently, \cite{saremi2023chain} proposed to tackle the sampling of the $M$-tuple $Y^{1:M}$ by first simulating $Y^1$ and then sequentially sampling $Y^m$ given $Y^{1:m-1}$ for $m\in\{2, \hdots, M\}$ using a Monte Carlo Markov Chain method - in this case, Underdamped Langevin Algorithm \cite{sachs2017langevin}. At step $m$, the Langevin procedure then involves pointwise evaluations of the conditional score of the distribution of $Y^m$ given $Y^{1:m-1}$ (or simply the score of the distribution of $Y_1$ when $m=1$). This introduces the \emph{Once-At-a-Time} (OAT) algorithm. The interest of such method lies in the fact that, under Assumption \Cref{ass:target} that may include multi-modal distributions, the distributions to sample from are increasingly log-concave with $m$, see \citep[Theorem 1]{saremi2023chain}. Hence, the most challenging step of this approach lies in the sampling of $Y^1$, in the same spirit as in $\SLIPS$.

In their work, \cite{saremi2023chain} mainly consider the case where the scores are analytically available. To handle realistic settings, they implement an IS estimator, see \citep[Section 4.2.1]{saremi2023chain}, but their numerical results show that its performance significantly degrades compared to setting of perfect knowledge of the score, see \citep[Appendix H]{saremi2023chain}. We include this approach in our numerical benchmark. Alternatively, for high-dimensional settings, they propose an estimator based on posterior sampling, see \citep[Appendix F.2.]{saremi2023chain}, in the same fashion as in $\SLIPS$. However, they do not pay attention to the limitation of this approach, in particular at the initialization of their algorithm. Indeed, while \citep[Theorem 1]{saremi2023chain} suggests to take $\sigma$ very large to ensure that the distribution of $Y^1$ is approximately log-concave (\ie, taking $t_0$ small in $\SLIPS$), the corresponding posterior becomes unfortunately closer to $\pi$, and then hard to sample with standard MCMC methods when $\pi$ is not log-concave. This fundamental constraint is well illustrated in \citep[Figure 7, left]{saremi2023chain}, where the posterior sampling approach is shown to systematically fail for an arbitrary choice of $\sigma$, independently of the computational budget allocated to the MCMC sampling. Therefore, the combination of OAT with posterior sampling requires a trade-off on the hyper-parameter $\sigma$, similarly to $t_0$ in $\SLIPS$, that reflects once again the ``duality of log-concavity".

\section{Details on numerical experiments} \label{app:numerics}

\subsection{The failure of MCMC methods when targeting multi-modal distributions} \label{app:mcmc}

As we show in \Cref{fig:local_mcmc}, \emph{local} MCMC samplers such as MALA, HMC, NUTS or ESS tend to produce Markov chains that get trapped in modes while our methodology $\SLIPS$ generates samples reaching both modes.

\begin{figure}[t!]
	\centering
	\includegraphics[width=0.9\linewidth]{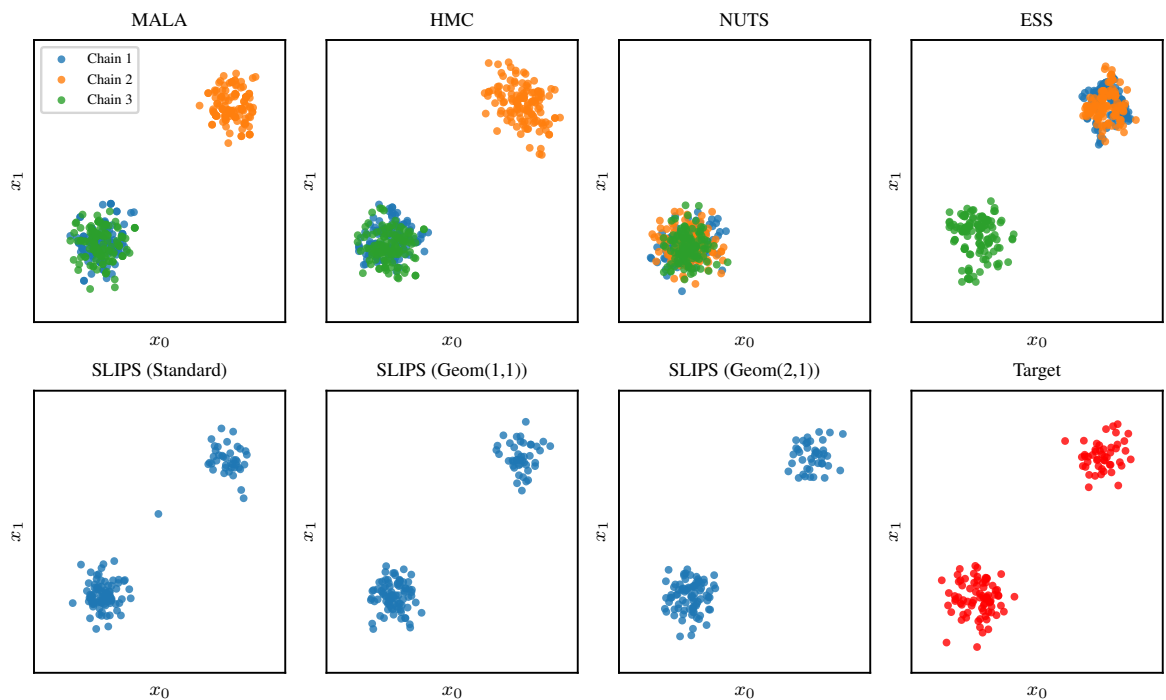}
	\caption{Samples from the Gaussian mixture defined in Section 6 ($d=8$) obtained using different algorithms. Here, we display the first two coordinates of $n = 128$ samples. For $\SLIPS$, we used the values of $t_0$ and $\eta$ given in Table 5, $K = 10$ discretization steps and $n_{\text{MCMC}} = 16$ MCMC steps. On the other hand, the standard MCMC algorithms produce 3 chains, each one being of size $K \times n_{\text{MCMC}} \times n$ for fairness of comparison on the number of target density evaluations; we only display the $n$ last MCMC samples of each chain. The chains were initialized in $\densityGaussian(0,\sigma^2 \Idd)$. For HMC, we grid-searched an optimal trajectory length in $[4,8,12,16]$ which ended up being $8$ (the step-size was automatically tuned). For ESS, we used $\densityGaussian(0,\sigma^2\, \Idd)$ as a prior.}
	\label{fig:local_mcmc}
	\vspace{-0.75cm}
\end{figure}

\subsection{Algorithms and hyper-parameters}

\paragraph{Using the information from $R_{\pi}$ and adapting the algorithms.}

As our definition of $\sigma$ in $\SLIPS$ requires to have an estimate of $R_{\pi}$ (see \Cref{subsec:guideline}), we informed other algorithms with such knowledge when possible. More precisely, when it is possible, we determine $R_\pi$ analytically; otherwise, we use ground truth samples to obtain a Monte Carlo estimation of $R_\pi$. Besides this, we assume that $R$ and $\tau$ are assumed to be known for each target distribution verifying \Cref{ass:target}.
\begin{itemize}[labelindent=0pt]
	\item For SMC and AIS, we use a starting distribution $\rho_0$ as the centered Gaussian with variance $R^2 d  \,\Idd$;
	\item For OAT, we use $\sigma$ as per \citep[Equation 4.3]{saremi2023chain} by setting $\sigma^2 = R^2 d - \tau^2$ (ensuring log-concavity);
	\item For RDMC, we do not include this information as the authors do not give any heuristic on $T$ with respect to the variance of the target distribution.
\end{itemize}
Moreover, the implementations of the algorithms were slightly adjusted from their general definition. SMC and AIS use a MALA kernel (which leaves the target distribution invariant) as transition kernel. Additionally, RDMC also uses a MALA kernel for posterior sampling instead of ULA. This reduces the estimation bias and also enables automatic tuning of the Langevin step-size by leveraging the acceptance ratio (here we adapt the step-size to maintain the acceptance ratio at 75\%). Still on RDMC, we drop the first 50\% of the MCMC samples to ignore the warm-up period in the estimation. We also ignore the warm-up period with the same proportion in $\SLIPS$. In $\SLIPS$, we reuse the step-size from the MCMC sampling of $q_{t_k}$ when sampling $q_{t_{k+1}}$ and we also initialize the chains of the later with the last samples of the chain from the former. This is an intuitive initialization when looking at the bottom row of \Cref{fig:fig1} as the modes of the posterior seem to be stable and $q_{t_k}$ is expected to be close to $q_{t_{k+1}}$.

\paragraph{Estimating the scores in OAT.}

In OAT, denoting $p(y)$ the likelihood of the measurement process $Y = X + \sigma Z$ with $X \sim \pi$ and $Z \sim \densityGaussian(0, \Idd)$, the score can be written using the following identity from \citep[Appendix F]{saremi2023chain} : $\nabla \log p(y) = \sigma^{-1} \PE_{Z \sim q(\cdot | y; \sigma)}[Z]$ where the posterior $q(\cdot | y)$ is defined by $q(z | y) \propto \pi(y + \sigma z) \densityGaussian(z; 0, \Idd)$. This means that the score $\nabla \log p(y)$ can be estimated with the following IS estimator $\nabla \log p(y) \approx \sigma^{-1} \sum_{i=1}^N w_i Z_i$ where $Z_i \overset{\iid}{\sim} \densityGaussian(0, \Idd)$ and $w_i = \pi(y + \sigma Z_i) / \sum_{j=1}^N \pi(y + \sigma Z_j)$. We reduce the variance of this estimator by applying the antithetic trick.

\paragraph{Selecting the hyper-parameters of the algorithms.}

\begin{table}[t]
	\caption{Hyper-parameter grids used for $\SLIPS$ on different targets.}
	\label{tbl:grids_slips}
	\vskip 0.15in
	\begin{center}
		\begin{small}
			\begin{sc}
				\begin{tabular}{lcccc}
					\toprule
					                           & \multicolumn{1}{l}{} & Mixture of Gaussians and Bayesian & $\phi^4$ model                & Others                        \\
					\midrule
					\multirow{2}{*}{Standard}  & $\eta$               & $\{5.0\}$                         & $\{5.7, 6.1\}$                & $\{5.0, 5.7\}$                \\
					                           & $t_0$                & $\{0.03, 0.05, 0.1, 0.2, 0.4\}$   & $\{0.8, 1.0, 1.2, 1.4, 1.8\}$ & $\{0.1, 0.2, 0.4, 1.0, 1.2\}$ \\
					\midrule
					\multirow{2}{*}{Geom(1,1)} & $\eta$               & $\{5.0\}$                         & $\{5.7, 6.1\}$                & $\{4.6, 5.0\}$                \\
					                           & $t_0$                & $\{0.03, 0.05, 0.1, 0.15, 0.25\}$ & $\{0.30, 0.35, 0.40, 0.45\}$  & $\{0.1, 0.15, 0.20\}$         \\
					\midrule
					\multirow{2}{*}{Geom(2,1)} & $\eta$               & $\{5.0\}$                         & $\{5.7, 6.1\}$                & $\{4.6, 5.0\}$                \\
					                           & $t_0$                & $\{0.15, 0.20, 0.25,0.35, 0.45\}$ & $\{0.40, 0.45, 0.50, 0.55\}$  & $\{0.30, 0.35, 0.45\}$        \\
					\bottomrule
				\end{tabular}
			\end{sc}
		\end{small}
	\end{center}
	\vskip -0.1in
\end{table}

For each algorithm, we search its hyper-parameters within a predetermined grid. The selection is based on the metrics which will be later detailed. The metrics were computed by comparing $4096$ samples against true samples. We globally fixed the computational budget by setting the SDE discretization of $\SLIPS$ as $K = 1024$. Moreover, we define the number of MCMC steps of $\SLIPS$, denoted by $L$, to be equal to $32$ except with the mixture of Gaussian in high dimensions where it is equal to $48$, $64$ and $96$ in dimensions $32$, $64$ and $128$ respectively and in the $\phi^4$ experiments were it is set to $64$. The selected hyper-parameters for each algorithm are summarized in \Cref{tbl:hyper-params}. Below, we detail how the grids were built for each one.
\begin{itemize}
	\item The SMC and AIS algorithms define a sequence of annealed distributions $\rho_k$ for $k \in \iint{0}{K}$ from $\rho_0$ (defined above) to $\rho_K = \pi$ as $\rho_k \propto \exp((1-\beta_k) \log \rho_0 + \beta_k \log \pi))$ where $(\beta_k)_{k=0}^K$ is a linear schedule of size $K$ between $0$ and $1$. Both algorithms used $N = 4096$ particles;
	\item The RDMC algorithm has a single hyper-parameter $T$ which we search within the grid $T \in \{-\log 0.99, -\log 0.95, -\log 0.9, -\log 0.8, -\log 0.7\}$ given in \citep[Appendix F.1]{huang2023monte}.
	      However, we also find that large $T$ may not work systematically (see \Cref{fig:app:rdmc_T}). We use $16$ steps of \citep[Algorithm 3]{huang2023monte} with $4$ MCMC chains. The chains are initialized with an IS approximation of the posterior powered by $128$ particles. The SDE is discretized over $K$ steps and the expectation leading to the drift is estimated with $L$ Langevin steps. The initial sample is distributed according to $\densityGaussian(0, (1 - \exp(-2T)) \, \Idd)$, the best estimation of $p_T$ that we have, and the initial step-size is taken according to its variance
	      ;
	\item The OAT algorithm has its parameter $M$ set as $\floor{K / 2}$ and uses as many MCMC steps per noise level as $\SLIPS$ or RDMC. This choice of $K$ ensures that the computational complexity of OAT is on par with the other algorithms. The Langevin steps are done using the underdamped Langevin algorithm as suggested by the authors. Its step-size is searched in $\{0.03, 1.0\}$ and the efficient friction is searched in $\{0.0625, 0.05\}$ as recommended by the authors. These prescriptions were extracted from \citep[Appendix G]{saremi2023chain}. The OAT algorithm has slightly shorter grid sizes because of its prohibitive computational cost;
	\item The $\SLIPS$ algorithm is declined in three flavors depending on the choice of the schedule $\alpha$. Since the choice of $t_0$ is sensitive to the accuracy of the estimation of $R_{\pi}$ (different values of $R_{\pi}$ will shift the log-SNR upwards or downwards), we decided to search the hyper-parameters in different areas depending on the target distribution. Those grids can be found in \Cref{tbl:grids_slips}. The values for $t_0$ were chosen by approximate equal log-SNR spacing in $[-3.5,-1.0]$ for Gaussian mixtures and Bayesian logistic regression, in $[-1.0, 0.2]$ for $\phi^4$ and $[-2.0, 0.0]$ for the others. The values for $\eta$ were chosen to be around $5.0$.
\end{itemize}

\begin{table}[t!]
	\caption{Hyper-parameters selected for the experiments for each algorithm and target density.}
	\vspace{-0.5cm}
	\label{tbl:hyper-params}
	\vskip 0.15in
	\begin{center}
		\begin{small}
			\begin{sc}
				\begin{tabular}{cccccccccc}
					\toprule
					Target                 & RDMC          & \multicolumn{2}{c}{OAT} & \multicolumn{2}{c}{$\SLIPS$ Standard} & \multicolumn{2}{c}{$\SLIPS$ Geom(1,1)} & \multicolumn{2}{c}{$\SLIPS$ Geom(2,1)}                                   \\
					\midrule
					                       & Time $T$      & step-size $\delta$      & Frict. $\delta \gamma$                & $\eta$                                 & $t_0$                                  & $\eta$ & $t_0$ & $\eta$ & $t_0$ \\
					\midrule
					8-Gaussians            & $-\log(0.80)$ & 0.05                    & 0.03                                  & 5.7                                    & 0.60                                   & 5.7    & 0.35  & 5.0    & 0.35  \\
					Rings                  & $-\log(0.80)$ & 0.0625                  & 1.0                                   & 4.6                                    & 1.20                                   & 4.6    & 0.10  & 4.6    & 0.30  \\
					Funnel                 & $-\log(0.90)$ & 0.05                    & 0.03                                  & 5.0                                    & 1.00                                   & 4.6    & 0.30  & 4.6    & 0.40  \\
					Mixture ($d = 8$)      & $-\log(0.70)$ & 0.0625                  & 0.03                                  & 5.0                                    & 0.40                                   & 5.0    & 0.25  & 5.0    & 0.45  \\
					Mixture ($d = 16$)     & $-\log(0.70)$ & 0.0625                  & 0.03                                  & 5.0                                    & 0.20                                   & 5.0    & 0.15  & 5.0    & 0.35  \\
					Mixture ($d = 32$)     & $-\log(0.70)$ & 0.05                    & 0.03                                  & 5.0                                    & 0.10                                   & 5.0    & 0.10  & 5.0    & 0.25  \\
					Mixture ($d = 64$)     & $-\log(0.70)$ & 0.0625                  & 0.03                                  & 5.0                                    & 0.05                                   & 5.0    & 0.05  & 5.0    & 0.20  \\
					Ionosphere             & $-\log(0.95)$ & 0.0625                  & 0.03                                  & 5.0                                    & 0.03                                   & 5.0    & 0.03  & 5.0    & 0.15  \\
					Sonar                  & $-\log(0.95)$ & 0.0625                  & 0.03                                  & 5.0                                    & 0.03                                   & 5.0    & 0.03  & 5.0    & 0.15  \\
					$\phi^4$ ($b = 0$)     & $-\log(0.95)$ & 0.0625                  & 0.03                                  & 5.7                                    & 0.80                                   & 5.7    & 0.30  & 5.7    & 0.40  \\
					$\phi^4$ ($b = 0.025$) & $-\log(0.95)$ & 0.0625                  & 1.0                                   & 5.7                                    & 1.80                                   & 5.7    & 0.35  & 6.1    & 0.45  \\
					$\phi^4$ ($b = 0.05$)  & $-\log(0.70)$ & 0.05                    & 1.0                                   & 6.1                                    & 1.00                                   & 5.7    & 0.30  & 5.7    & 0.40  \\
					$\phi^4$ ($b = 0.075$) & $-\log(0.70)$ & 0.0625                  & 0.03                                  & 5.7                                    & 1.80                                   & 5.7    & 0.35  & 5.7    & 0.40  \\
					$\phi^4$ ($b = 0.1$)   & $-\log(0.90)$ & 0.05                    & 0.03                                  & 5.7                                    & 1.40                                   & 5.7    & 0.45  & 5.7    & 0.40  \\
					\bottomrule
				\end{tabular}
			\end{sc}
		\end{small}
	\end{center}
	\vskip -0.1in
\end{table}

\subsection{Target distributions and metrics}

\paragraph{8 Gaussians, Rings and Funnel distributions and their respective metrics.}
\begin{enumerate}[wide, labelindent=0pt, label=(\alph*)]
	\item The 8 Gaussians distribution consists of 8 equally weighted Gaussian distributions with mean $\mathbf{m}_i = 10 \times (\cos(2 \pi i / 8), \sin(2 \pi i / 8))$ for $i \in \{0,\ldots,7\}$ and covariance $0.7\, \mathrm{I}_2$. This distribution satisfies \Cref{ass:target} with $R=10 / \sqrt{2}$ and $\tau^2 = 0.7$.
	\item The Rings distribution is the inverse polar reparameterization of a distribution $p_z$ which has itself a decomposition into two univariate marginals $p_r$ and $p_{\theta}$: $p_r$ is a mixture of 4 Gaussian distributions $\densityGaussian(i+1, 0.15^2)$ with $i \in \{0,\ldots,3\}$ describing the radial position and $p_{\theta}$ is a uniform distribution over $[0, 2\pi]$, which describes the angular position of the samples. This distribution satisfies \Cref{ass:target} with $R = 4 / 2$ and $\tau = 0.15$.
	\item The Funnel distribution \cite{neal20023sliced} has its density given by
	$$
		x_{1:10} \mapsto \densityGaussian(x_1; 0, \sigma^2 \mathrm{I}_{10}) \densityGaussian(x_{2:10}; 0, \exp(x_1) \mathrm{I}_{10})\eqsp,
	$$
	where $\sigma^2 = 9$. We approximate $R$ by the maximum scalar standard deviation of the distribution, \ie, $R = 2.1$, and set $\tau = 0$.
\end{enumerate}
For 8 Gaussians and Rings, we evaluate the quality of sampling by approximating the entropy-regularized 2-Wasserstein distance defined in \Cref{app:preliminaries}, with regularization $\varepsilon=0.05$ via POT library \cite{flamary2021pot}. Regarding the Funnel, we evaluate the quality of sampling using the Kolmogorov-Smirnov distance, see \Cref{app:preliminaries}, and adopt the sliced version from \citep[Appendix D.1]{grenioux2023onsampling}.

\paragraph{Bayesian Logistic Regression models.}

Consider a training dataset $\mathcal{D} = \{(x_j, y_j)\}_{j=1}^M$ where $x_j \in \rset^d$ and $y_j \in \{0, 1\}$ for all $j \in \{1,\ldots, M\}$. We evaluate the likelihood of a pair $(x,y)$ as given by $p(y | x; w, b) = \operatorname{Bernoulli}(y; \sigma(x^T x + b))$ where $w \in \rset^d$ is a weight vector, $b \in \rset$ is an intercept and $\sigma$ is the sigmoid function. Given a prior distribution $p(w,b)$, we sample from the posterior distribution $p(w, b | \mathcal{D}) \propto p(\mathcal{D} | w, b) p(w,b) = \prod_{j=1}^M p(y_j | x_j; w, b) p(w,b)$. The prior is built as $p(w,b) = \densityGaussian(w; 0, \Idd) \densityGaussian(b; 0, (2.5)^2)$. In our experiments, we approximate $R$ by the maximum scalar standard deviation of the prior distribution, \ie, $R = 1.1$ and set $\tau = 0$. The quality of the samples are obtained by computing the mean predictive log-likelihood (\ie, computing $p(w, b | \mathcal{D}_{\text{test}})$ with $\mathcal{D}_{\text{test}}$ a separate test dataset).

\paragraph{High dimension Gaussian mixture.}

We consider the mixture of two Gaussian distributions defined in \eqref{eq:target_mixture}. Following the computations of \Cref{app:gaussian_mixture}, we set $R = 2\sqrt{2}/3$ and $\tau^2 = 0.05$. Complementary to the estimation error on the relative weight given in the main paper, we also report the Sliced Wasserstein distance \cite{bonneel2015sliced,nadjahi2019asymptotic} on \Cref{fig:app:two_modes_w2} (\textit{left}). This figure shows that $\SLIPS$ also recovers the local structure of the target distribution quite well. Moreover, \Cref{fig:app:two_modes_w2} (\textit{right}) shows that SMC and AIS are unable to scale with dimension and collapse into a single mode.
\begin{figure}[t!]
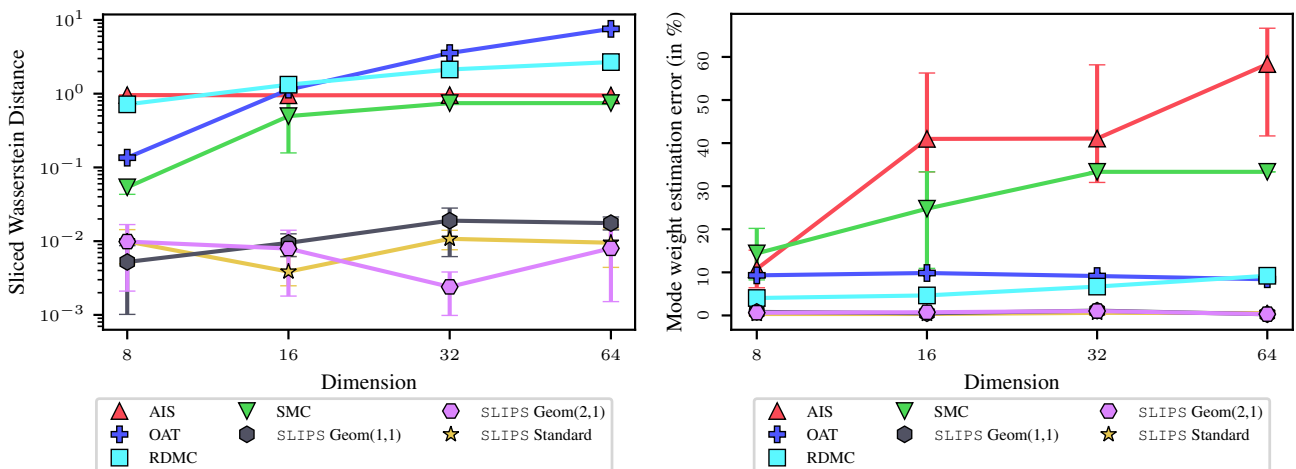

	\centering
	\includegraphics[width=0.49\linewidth]{res/benchmark/two_modes_w2.pdf}
	\hfill
	\includegraphics[width=0.49\linewidth]{res/benchmark/two_modes_weight_full.pdf}
	\caption{\textbf{Left}: Sliced Wasserstein distance computed on bimodal Gaussian mixtures with increasing dimension. \textbf{Right}: Estimation error when computing the relative weight of the corresponding two modes.}
	\label{fig:app:two_modes_w2}
\end{figure}

\paragraph{The $\phi^4$ distribution.}

The $\phi^4$ model is a toy model defined as a continuous relaxation of the Ising model that serves the study of phase transitions in statistical mechanics. Following \cite{gabrie2022adaptive},  we consider a version of the model discretized on a 1-dimensional grid of size $d=100$. One configuration is therefore a $d$-dimensional vector $(\phi_i )_{i=1}^d$. We additionally clip the field to 0 at both extremities by defining the extra $\phi_0 = \phi_{d+1} = 0$. The negative log-density of the distribution writes
\begin{align}\label{app:eq:phi4}
	\ln \pi_h(\phi) = - \beta \left( \frac{ad}{2}  \sum_{i=1}^{d+1} (\phi_i-\phi_{i-1})^2 + \frac{1}{4ad} \sum_{i=1}^d (1-\phi_i^2)^2 + h\phi_i \right) \eqsp .
\end{align}
We chose parameter values for which the system is bimodal, $a=0.1$ and inverse temperature $\beta=20$, and vary the value of $h$. We denote by $w_{+}$ the statistical occurrence of configurations such that $\phi_{d/2} > 0$ and $w_{-}$  the statistical occurrence of configurations such that $\phi_{d/2} < 0$. At $h=0$, the measure is invariant under the symmetry $\phi \rightarrow - \phi$, such that we expect $w_{+}=w_{-}$. For $h>0$, the negative mode dominates. We plot the two modes on \Cref{fig:phi_four_modes}.

When $d$ is large, the relative probability of the modes can be estimated by a Laplace approximations at 0-th and 2-nd order. Denoting by $\phi_+^h$ and $\phi_-^h$ the local maxima of \eqref{app:eq:phi4}, these approximations yield respectively,
\begin{align}
	\frac{w_{-}}{w_{+}} \approx \frac{\pi_h(\phi_{-}^h)}{\pi_h(\phi_{+}^h)}\eqsp , \quad \frac{w_{-}}{w_{+}} \approx \frac{\pi_h(\phi_{-}^h)\times |\det H_h(\phi_{-}^h)|^{-1/2}}{\pi_h(\phi_{+}^h)\times |\det H_h(\phi_{+}^h)|^{-1/2}} \eqsp ,
\end{align}
where $H_h$ is the Hessian of the function $\phi \rightarrow \ln \pi_h(\phi)$. In our experiments, we considered $R = 0.85$ and $\tau = 0.15$ by running MALA chains started in each mode : $\tau^2$ is set as the variance within the modes while $R$ corresponds to the distance between the modes and $0_{100}$.
In this setting, we observed that SMC and AIS suffered from mode collapse, while OAT and RDMC produced degenerate samples; in contrast, our methodology $\SLIPS$ recovers accurate samples from the target distribution, with correct relative weight, as shown in \Cref{fig:bench:weight_ratio_sto_loc}.

\begin{figure}[t!]
	\centering
	\includegraphics[width=0.6\linewidth]{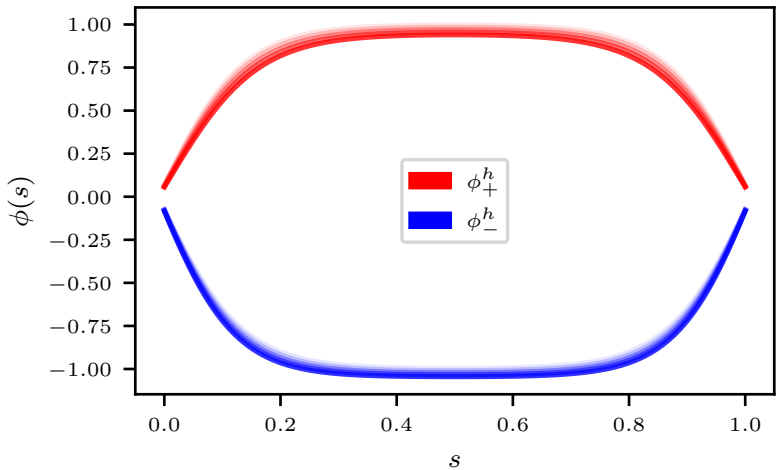}
	\caption{Modes of the $\phi^4$ distribution ($\phi_{+}^h$ in shades of {\color{red} red} and $\phi_{-}^h$ in shades of {\color{blue} blue}) for different values of $h$ (corresponding to the transparency levels). The time interval $[0,1]$ on the $x$-axis is discretized into a grid of size $d=100$, which corresponds to the dimensionality of the samples.}
	\label{fig:phi_four_modes}
\end{figure}

\subsection{Empirical complexity of $\SLIPS$} \label{app:tde}

The results of \Cref{fig:tde_dim_32} ($d=32$) show that under a much lower computational budget ($K = 20$ here) than in \Cref{sec:xps} ($K=1024$), $\SLIPS$ maintains the same performance, while still outperforming its competitors. Moreover, these results demonstrate that $\SLIPS$ has very reasonable execution times (below 1 minute to obtain $8192$ samples) making it completely practical. Lastly, \Cref{fig:tde_dim_32} emphasizes that, even using high computational budgets, competitors cannot solve such multi-modal tasks in high-dimension. Note that those observations stay valid in problems with lower dimension (see \Cref{fig:tde_dim_4} where $d=4$).

\begin{figure}[t!]
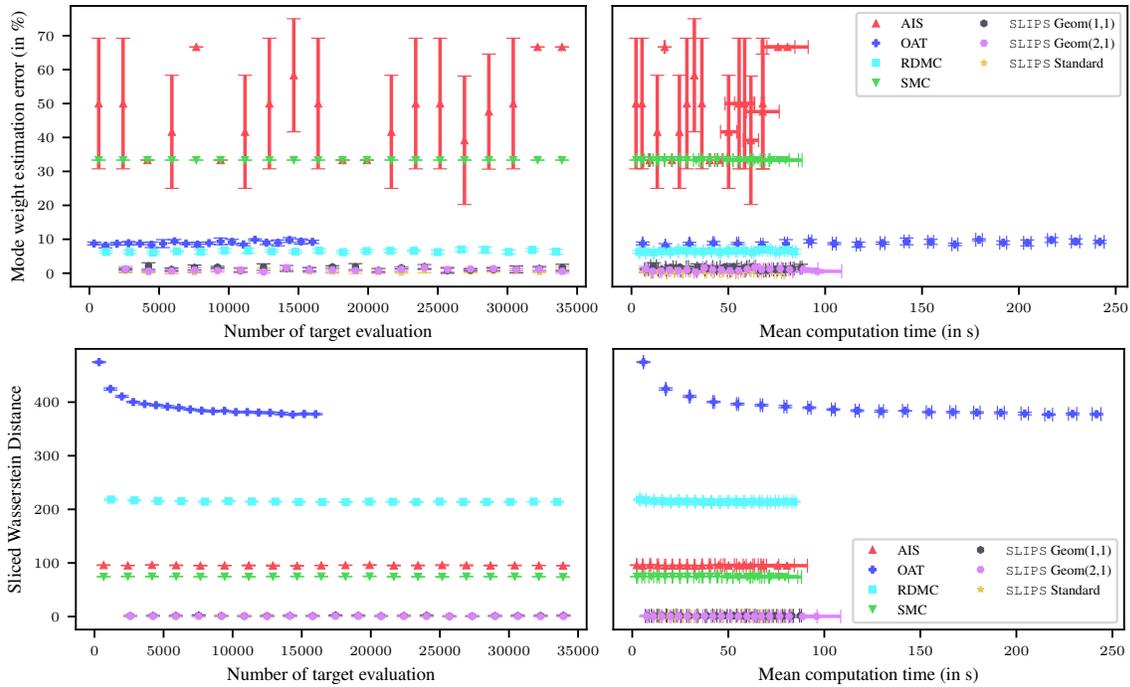

	\centering
	\includegraphics[width=0.87\linewidth]{res/rebuttal/mode_weight_err_tde.pdf}
	\hfill
	\includegraphics[width=0.87\linewidth]{res/rebuttal/w2_sliced_tde.pdf}
	\vspace{-0.2cm}
	\caption{Metrics when sampling from the Gaussian mixture defined in \Cref{sec:xps} ($d=32$) using different algorithms. \textbf{Top}: Weight estimation error. \textbf{Bottom}: Sliced Wasserstein Distance. \textbf{Left}: Metric depending on the number of target density evaluations. \textbf{Right}: Metric depending on wall time. The computational budgets are computed by evolving $K$ linearly from $20$ to $90$. The number of MCMC steps is fixed at 32. The computations were run on the same Nvidia V100 GPU.}
	\label{fig:tde_dim_32}
\end{figure}

\begin{figure}[t!]
	\centering
	\includegraphics[width=0.87\linewidth]{res/rebuttal/mode_weight_err_tde_dim_4.pdf}
	\hfill
	\includegraphics[width=0.87\linewidth]{res/rebuttal/w2_sliced_tde_dim_4.pdf}
	\vspace{-0.2cm}
	\caption{Metrics when sampling from the Gaussian mixture defined in \Cref{sec:xps} ($d=4$) using different algorithms. \textbf{Top}: Weight estimation error. \textbf{Bottom}: Sliced Wasserstein Distance. \textbf{Left}: Metric depending on the number of target density evaluations. \textbf{Right}: Metric depending on wall time. The computational budgets are computed by evolving $K$ linearly from $20$ to $90$. The number of MCMC steps is fixed at 16. The computations were run on the same Nvidia V100 GPU. The computational budgets were aligned to match $\SLIPS$'s.}
	\label{fig:tde_dim_4}
\end{figure}

\end{document}